\def\R{\mathbb{R}} %
\def\E{\mathbb{E}} %
\def\V{\mathbb{V}} %
\def\Pr{\mathbb{P}} %
\def\N{\mathcal{N}} %
\def\L{\mathcal{L}} %
\def\D{\mathcal{D}} %
\def\tO{\widetilde{O}} %
\def\Tr{\mathrm{Tr}} %
\def\ind{\mathbb{I}} %
\newtheorem{assumption}{Assumption}
\newtheorem{lemma}{Lemma}
\newtheorem{theorem}{Theorem}
\newtheorem{corollary}{Corollary}
\newtheorem{proposition}{Proposition}
\newtheorem{remark}{Remark}[section]
\newtheorem{definition}{Definition}
\def\lsv{\mathtt{lsv}}
\def\rsv{\mathtt{rsv}}
\def\SVD{\mathtt{SVD}}
\def\sel{\text{sel}}
\def\subt{\text{T}} %
\def\subT{\text{T}} %
\def\subo{\text{O}} %
\def\subO{\text{O}} %
\def\ERR{{\rm ERR}}
\def\dset{D}
\def\dsetfilt{D_{\text{filt}}}
\def\z{z}
\def\dimz{r}
\def\tz{\widetilde{z}}
\def\x{x}
\def\dimx{d}
\def\tx{\widetilde{x}}
\def\dimtx{\widetilde{d}}
\def\y{y}
\def\ty{\widetilde{y}}
\def\snr{\gamma}
\def\tsnr{\widetilde{\gamma}}
\def\U{\mathbf{U}}
\def\tU{\mathbf{\widetilde{U}}}
\def\Uperp{\mathbf{U}_{\perp}}
\def\tUperp{\mathbf{\widetilde{U}}_{\perp}}
\def\Ufull{\mathbf{U}_{\text{full}}}
\def\tUfull{\mathbf{\widetilde{U}}_{\text{full}}}
\def\txi{\widetilde{\xi}}
\def\eps{\varepsilon}
\def\epsperp{\varepsilon_{\perp}}
\def\teps{\widetilde{\varepsilon}}
\def\tepsperp{\widetilde{\varepsilon}_{\perp}}
\def\u{\mathbf{u}}
\def\tu{\mathbf{\widetilde{u}}}
\def\un{\mathbf{u}_n}
\def\wn{\mathbf{w}_n}
\def\hwn{\mathbf{\hat{w}}_n}
\def\Sxi{\Sigma_{\xi}}
\def\Stxi{\Sigma_{\txi}}
\def\Sz{\Sigma_{\z}}
\def\Idz{\mathbf{I}_{\dimz}}
\def\Id{\mathbf{I}_{\dimx}}
\def\Itd{\mathbf{I}_{\dimtx}}
\def\mbM{\mathbf{M}}
\def\mbA{\mathbf{A}}
\def\mbE{\mathbf{E}}
\def\mbH{\mathbf{H}}
\def\Sn{\mathbf{S}_n}
\def\S{\mathbf{S}}
\def\bSn{\overline{\mathbf{S}}_n}
\def\SNT{\mathbf{S}_{N, \subT}}
\def\SNO{\mathbf{S}_{N, \subO}}
\def\ST{\mathbf{S}_\subT}
\def\SO{\mathbf{S}_\subO}
\def\PT{P_\subt}
\def\PO{P_\subo}
\def\ddtheta{\ddot{\theta}}
\def\X{\mathbf{X}}
\def\G{\mathbf{G}}
\def\tG{\mathbf{\widetilde{G}}}
\def\GT{\mathbf{G}_\subT}
\def\tGT{\mathbf{\widetilde{G}}_\subT}
\def\ERRssd{{\rm ERR}_{\mathrm{ssd}}}
\def\ERRccd{{\rm ERR}_{\mathrm{ccd}}}
\def\ERRlpe{{\rm ERR}_{\mathrm{lpe}}}
\def\hU{\widehat{\U}}
\def\htU{\widehat{\tU}}
\newcommand{\subspace}[1]{\mathcal{#1}} %
\newcommand{\norm}[1]{\left\| #1 \right\|_2} %
\newcommand{\fnorm}[1]{\left\| #1 \right\|_F} %
\newcommand{\mat}[1]{\mathbf{#1}} %
\newcommand{\abs}[1]{\left|#1\right|}
\newcommand{\Ncal}{\mathcal{N}}
\newcommand{\Prob}{\mathbb{P}} %
\newcommand{\ceil}[1]{\left\lceil #1 \right\rceil}
\title{Understanding the Gain from Data Filtering\\
in Multimodal Contrastive Learning}
\author{%
Divyansh Pareek \quad \quad  Sewoong Oh \quad \quad Simon S. Du \\
\textit{Paul G. Allen School of Computer Science and Engineering} \\
\textit{University of Washington, Seattle, WA} \\
\texttt{\{dpareek,sewoong,ssdu\}@cs.washington.edu}\\
}
\date{}
\begin{document}

\maketitle

\begin{abstract}
The success of modern multimodal representation learning relies on internet-scale datasets. Due to the low quality of a large fraction of raw web data, data curation has become a critical step in the training pipeline. 
Filtering using a trained model (i.e., teacher-based filtering) has emerged as a successful solution, leveraging a pre-trained model to compute quality scores. 
To explain the empirical success of teacher-based filtering, we characterize the performance of filtered contrastive learning under the standard bimodal data generation model. 
Denoting $\eta\in(0,1]$ as the fraction of data with correctly matched modalities among $n$ paired samples, we utilize a linear contrastive learning setup to show a provable benefit of data filtering: $(i)$ the error without filtering is upper and lower bounded by $\nicefrac{1}{\eta \sqrt{n}}$, and $(ii)$ the error with teacher-based filtering is upper bounded by $\nicefrac{1}{\sqrt{\eta n}}$ in the large $\eta$ regime, and by $\nicefrac{1}{\sqrt{n}}$ in the small $\eta$ regime. %
\end{abstract}

\section{Introduction} \label{sec-introduction} 

The seminal work of \citet{radford2021learning} introduced CLIP, a large-scale multimodal training paradigm that leverages contrastive learning on image and language modalities. This marked a significant advancement in general purpose representation learning that enabled unprecedented zero-shot downstream performance. 
A crucial factor in the success of CLIP and other vision-language models (VLMs) was the shift towards training on massive datasets \cite{xu2023demystifying}, often comprising billions of image-text pairs scraped from the internet (e.g., LAION-5B \cite{schuhmann2022laion} and DataComp-1B \cite{gadre2023datacomp}). 
The sheer \emph{quantity} of data unlocks the capability to learn robust representations \cite{fang2022data}. However, due to the inherently noisy nature of web data, this introduces significant challenges regarding the \emph{quality}, resulting in the need for data curation. 
Smaller but higher quality subsets of the data have been observed to result in better models than larger but noisier datasets \cite{xu2023demystifying, nguyen2022quality, gadre2023datacomp}.
\citet[Figure 2]{gadre2023datacomp} observe that training on only a selected 30\% of the dataset results in a better performing model than training on the full corpus.
To handle such a significant fraction of low-quality data, data curation has become a critical step in modern internet-scale pretraining pipeline of foundation models \cite{albalak2024survey}.

For vision-language datasets, a number of methods have been introduced for data filtering \cite{fang2023data, wang2024cliploss, kim2024hype, engstrom2025optimizing, shechter2025filter, maini2023t}. 
Among these, \emph{teacher-based filtering}, where a pre-trained model is used to score samples and retain high-quality ones, has emerged as a particularly effective strategy \cite{fang2023data, wang2024cliploss}.
This approach marks a progression from earlier efforts which relied on heuristic-based filtering (e.g., the WIT400M dataset used in CLIP \cite{radford2021learning}). 
Subsequent and ongoing curation efforts have increasingly leveraged strong existing models, like CLIP itself, to refine datasets further \cite{schuhmann2022laion, gadre2023datacomp}. 

In the theory community, the success of CLIP models has been attributed to two factors: the choice of using a contrastive loss and the use of multimodal datasets. A series of modeling and analyses followed to explain the benefits from these two factors under various scenarios \cite{nakada_understanding_2023, ji2023power, tian2022understanding,oko2025statistical, joshi_data-efficient_2024,huang_comparison_2024,chen_understanding_2024,daunhawer_identifiability_2023}.  
However, despite the empirical successes of data filtering in the CLIP training pipeline, a theoretical understanding of this phenomenon has been lacking. 
Our goal is to provide a deeper understanding of the benefits of using teacher-based data filtering in the CLIP training pipeline, i.e., multimodal representation learning with a contrastive loss. 
In particular, we aim to understand the benefits of data filtering against the baseline of contrastive learning without filtering, by focusing on one key parameter of interest: the \emph{fraction} of high-quality data present. 
Using $\eta \in (0, 1]$ to denote the fraction of high-quality data pairs within the dataset, for both the filtering and no-filtering approaches, we ask the question: How does the quality of the learned representation behave as a function of $\eta$? %

The choice of the data corruption model is crucial. In the related field of robust statistics, similar questions have been studied under adversarial corruptions. 
However, for large multimodal datasets, we posit that a \emph{stochastic} corruption model is more relevant in capturing the nature of real data. 
For instance, in vision-language data, a significant portion of the misalignment arises randomly: images paired with irrelevant or tangentially related captions due to the processes of automated web scraping and the uncontrolled nature of internet data (see, e.g., \cite[Figure 1]{nguyen2023improving} for examples). 
 We adopt such a model (detailed in Section~\ref{sec-data-model}), where a fraction $\eta$ of pairs are correctly aligned, while the remaining $1-\eta$ fraction has mismatched modalities. %
Under the stochastic corruption model of Section~\ref{sec-data-model} and the contrastive learning setup of Section~\ref{sec-contrastive-learning}, we analyze the performance of teacher-based filtering (Figure~\ref{fig:algorithms:TS}) and compare against the baseline of no filtering (Figure~\ref{fig:algorithms:nofilt}). 

\textbf{Contributions.}
We demonstrate a provable benefit of data filtering. The error of the unfiltered contrastive learning with $n$ samples and $\eta$ clean fraction depends as $\nicefrac{1}{\eta \sqrt{n}}$, as shown by an upper bound in Corollary~\ref{cor-nofilt} (result from \citet[Theorem 3.1]{nakada_understanding_2023}) and a lower bound in Proposition~\ref{prop-nofilt-LB}. On the other hand, for teacher-based filtering (Theorem~\ref{thm-TS-special}, main result), the dependency on $\eta$ is improved to $\nicefrac{1}{\sqrt{\eta n}}$ when $\eta$ is large, and to $\nicefrac{1}{\sqrt{n}}$ when $\eta$ is small. 
Note that our result includes the training of the teacher model on the given dataset, i.e., we do not assume the existence of any strong pre-trained model.
In Section~\ref{sec-expts}, we empirically demonstrate the benefit of teacher-based data filtering in a synthetic experimental setting. Figure~\ref{fig:experiment:synthetic} verifies the $\nicefrac{1}{\eta}$ dependence of the unfiltered contrastive learning, and the improved dependence achieved by the teacher-based filtering in two regimes, namely $\nicefrac{1}{\sqrt{\eta}}$ for large $\eta$ and independent of $\eta$ for small $\eta$. Figure~\ref{fig:experiment:real} restates the finding of \citet[Figure 4]{fang2023data} to show that the qualitative observation of improved $\eta$ dependence via filtering holds true even with real data.

\section{Related work} \label{sec-related-work}

Our theoretical investigation of data filtering builds upon existing analyses of multimodal contrastive learning \cite{nakada_understanding_2023, ji2023power, tian2022understanding}. In particular, \citet[Theorem 3.1]{nakada_understanding_2023} gives the rate for the unfiltered contrastive learning, and we study the rate with data filtering.
The theory of contrastive learning (CL) has been studied in many other contexts \cite{huang_comparison_2024, chen_understanding_2024, daunhawer_identifiability_2023, joshi_data-efficient_2024, oko2025statistical}. 
\citet{chen_understanding_2024} build a theoretical understanding for zero-shot transfer in CLIP-style models. \citet{huang_comparison_2024} theoretically compare unimodal and multimodal CL, and \citet{daunhawer_identifiability_2023} study identifiability of the latent factors with the CL objective. We remark that the assumptions on the data generative model across these works are related but sometimes subtly different.

The practical need for data curation arises from the inherent noise in web-scale datasets used for training vision-language models \cite{xu2023demystifying, nguyen2022quality, gadre2023datacomp} and increasingly, large language models \cite{albalak2024survey, lin_rho-1_2024, xie2023data, wang2024greats, wettig2024qurating}.
In the multimodal context, numerous empirical techniques have been developed \cite{fang2023data, wang2024cliploss, kim2024hype, maini2023t, engstrom2025optimizing, shechter2025filter}, with community benchmarks like DataComp \cite{gadre2023datacomp} facilitating systematic evaluation. 
Teacher-based filtering, the focus of our work, is a widely adopted and effective empirical strategy \cite{gadre2023datacomp, wang2024cliploss, xu2023demystifying}, but we note that other approaches have also been explored, in particular, editing bad data \cite{nguyen2023improving} (with some theoretical explanations \cite{pareek_understanding_2024, zhu_iterative_2024}). However, theoretical studies of data filtering are limited. Some works include the study of data selection under weak supervision in general statistical models \cite{kolossov_towards_2023}, and selecting data during training \cite{shah2020choosing}. 

The contemporaneous work of \citet{sun2025contrastive} also studies multimodal contrastive learning under data misalignment, using a closely related model for alignment: \cite[Section 3.2]{sun2025contrastive} is nearly the same as the spiked covariance model in Section~\ref{sec-data-model}, but with an additional sparsity assumption. While \citet{sun2025contrastive} do not characterize the dependence on $\eta$, 
they consider one-hidden-layer neural network encoders and also show the gain of filtering.

\section{Setup} \label{sec-setup}
 Section~\ref{sec-data-model} describes our model for multimodal data and the assumptions on the related parameters. Section~\ref{sec-contrastive-learning} formulates the contrastive learning objective on data pairs from the model. 

\subsection{Bimodal data model}
\label{sec-data-model}

Building on recent theoretical work in multimodal contrastive learning \cite{wen2021toward, ji2023power, nakada_understanding_2023}, we assume the signal has a low-rank structure, while the noise is unstructured and dense. 
Adopting a \emph{linear} generative model, the paired bimodal data, $\x \in \R^{\dimx}$ and $\tx \in \R^{\dimtx}$, is expressed as:
\begin{equation} \label{eq-data-model}
    \x = \U \, \z + \xi \;, \; \; \; \tx = \tU \, \tz + \txi \;,
\end{equation}
representing, for example, image and text in the case of vision-language data. Here $\z, \tz \in \R^{\dimz}$ denote the latent variables lying in a shared $\dimz$-dimensional space that captures the common underlying concept. 
The first terms $\U \z$ and $\tU \,\tz$ represent the signals of interest, residing in $r$-dimensional subspaces spanned by the columns of $\U$ and $\tU$, and the terms $\xi$ and $\txi$ represent the dense noise.
For simplicity, we assume that the maps $\U \in \R^{\dimx \times \dimz}$ and $\tU \in \R^{\dimtx \times \dimz}$ are composed of unit-norm orthogonal columns, fixing the scale. %

We say a bimodal paired example is {\em corrupted} if the individual modalities do not correspond to the same latent concept. This models how a large fraction of image-text pairs found on the internet are corrupted by arbitrary captions that are unrelated to the content of the image. We formalize this in Assumption~\ref{assump-corruption}, with $\eta$ denoting the clean fraction. Figure~\ref{fig:data-model} in Appendix~\ref{sec-illustrations} provides an illustration. For the noise, we assume a Gaussian distribution with a diagonal covariance (Assumption~\ref{assump-noise}).
\begin{assumption} [Corruption model] \label{assump-corruption} Let $z_1, z_2 \sim \N(0, \Idz)$ be two independent draws from the $r$-dimensional standard Gaussian. For an $\eta \in (0, 1]$, the joint distribution on $(\z, \tz)$ is induced by 
\begin{align}
    \text{w.p.} \;\;\; \eta \;\;\;, \;\;\; &\z = z_1 = \tz\;,\text{ and }\tag{Clean case} \\
    \text{w.p.} \; 1-\eta \;, \;\;\; &\z = z_1, \, \tz = z_2 \;.\tag{Corrupted case}
\end{align}
\end{assumption}
\begin{assumption}[Noise model]  \label{assump-noise}
    The noise $\{\xi, \txi \}$ are mutually independent and independent of $\{\z, \tz\}$, and are zero-mean Gaussian variables given by $ \xi \sim \N \left(0, \snr^{-1} \Id \right)$ and $\txi \sim \N \left(0, \tsnr^{-1}  \Itd \right)$. 
\end{assumption}
The signal is unit-scale in $r$-dimensions since $\| \U \| = 1$ and ${\rm Cov}(z) = \Idz$, hence the signal-to-noise ratios (SNRs) for the two modalities are $\gamma \, (r/d)$ and $\tsnr \, (r/\dimtx)$ respectively.
This model is parametrized by $(\eta,\U,\tU,\snr,\tsnr,r,\dimx,\dimtx)$, and the aim is to recover $\U$ and $\tU$, given paired samples.
This is a standard model in bimodal contrastive learning \cite{wen2021toward, ji2023power, nakada_understanding_2023} and is inspired by the spiked covariance model \cite{BAI2012167, zhang2021heteroskedasticpcaalgorithmoptimality}. 
Consider an extreme case where the images are matched to randomly shuffled captions. This corresponds to $\eta=0$, and recovering the subspaces $\U$ and $\tU$ becomes akin to two separate unimodal estimation problems, whose optimal (up to constants) error rate is known with tight upper and lower bounds \cite[Eq.\ (9)]{cai2016optimalestimationrankdetection}: 
\begin{equation}
    \E \left[ \ERR(\hat\U, \hat\tU) \right] \;\asymp\; \sqrt{\frac{\, r \, \max \left\{d \, \gamma^{-1}(1+\gamma^{-1}), \, \dimtx \, \tsnr^{-1} (1+\tsnr^{-1}) \right\}}{n}}\;,
    \label{eq:spiked}
\end{equation}
where $\ERR$ is defined via the chordal distance between two subspaces in Eq.~\eqref{eq-define-error}. This follows from the fact that $\E[xx^\top]= \U \U^\top + \snr^{-1} \, \Id$. The $\sqrt{\nicefrac{d}{n}}$ dependence is expected from the concentration, the $\sqrt{r}$ dependence comes from the error metric being chordal (frobenius norm) as opposed to projection (spectral norm), and $\gamma^{-1/2}$ dependence captures how the error vanishes with high SNR. Refer to Appendix~\ref{sec-app-background-ratesSpikedCov} for a description of how to arrive at Eq.~\eqref{eq:spiked} using the result from \citet[Eq.\ (9)]{cai2016optimalestimationrankdetection}.
When $\eta > 0$ fraction of data is correctly matched, our goal is to characterize the error rate achieved by the contrastive learning on the paired data and show that data filtering can improve the error rate compared to the baseline of no filtering.

\textbf{Notation.} For a matrix $Q = U S V^\top$ and an integer $a$, let $\SVD_a(Q) = U_a S_a V_a^\top$ denote the projection of $Q$ onto its top-$a$ components. Let $\lsv(Q)$ denote the left singular vectors of $Q$, and $\lsv_a(Q)$ denote its top $a$ left singular vectors. Similarly, let $\rsv(Q)$ and $\rsv_a(Q)$ be defined for the right singular vectors. 
We use $O(.)$ to denote asymptotic upper bounds, and $\tO(.)$ to denote upper bounds with only $\eta, n$ factors (omitting the dimension and SNR parameters). Similarly, we use the standard notation $\Omega(.), \omega(.)$ to denote asymptotic lower bounds. The notation $\gtrsim, \lesssim$ hides absolute constants, and we write $a \asymp b$ when $a \lesssim b$ and $a \gtrsim b$ holds simultaneously. Additionally, we will sometimes use the random variable $c \sim {\rm Ber}(\eta) \in \{0, 1\}$ to denote the (hidden) `coin toss' in accordance with Assumption~\ref{assump-corruption}, with $c = 1$ denoting to the clean case.

\subsection{Contrastive learning formulation} \label{sec-contrastive-learning}

We utilize a linear contrastive learning framework from \cite{nakada_understanding_2023, ji2023power}. By linear we mean $(i)$ the encoders that map the data, $\x$ and $\tx$, to the shared embedding space are linear, and $(ii)$ the contrastive loss computed on the embeddings is linear. 
This setting corresponds to the choice of $\epsilon = 0$, $\psi = \phi = {\rm Id}$ maps in \citet[eq 2.1]{nakada_understanding_2023}, an equation that captures a more general contrastive loss framework. We refer the reader to \citet[Figure 1]{tian2022understanding} for different contrastive learning setups achieved by different choices of $\psi$ and $\phi$.

Let $\G \in \R^{\dimz \times \dimx}$ and $\tG \in \R^{\dimz \times \dimtx}$ denote the learnable encoders for the input, $\x$ and $\tx$ respectively. 
Figure~\ref{fig:learn-maps} in Appendix~\ref{sec-illustrations} provides a helpful visualization. 
The similarity score of a pair $(\x, \tx)$ is computed as the inner product $\langle \G \x, \tG \tx \rangle$, which is widely used theoretically \cite{nakada_understanding_2023, ji2023power, jing2021understanding, tian2022understanding} and empirically \cite{radford2021learning, he2020momentum}. 
The multimodal contrastive loss maximizes the similarity of observed pairs, while minimizing the similarity of `generated' pairs.
Given $n$ paired samples $\{ (\x_i, \tx_i) \}_{i=1}^n$, the parameters $\G, \tG$ are learned by minimizing the $\rho$-regularized objective given by:
\begin{equation} \label{eq-objective}
    \L_{\rho}(\G, \tG) := \frac{1}{2n(n-1)} \Big( \sum_{i=1}^n \Big( \sum_{\substack{j=1 \\ j\neq i}}^n \left( s_{ij} - s_{ii} \right) + \sum_{\substack{j=1 \\ j\neq i}}^n \left( s_{ji} - s_{ii} \right) \Big) \Big) \; + \; R_{\rho}(\G, \tG) \;,
\end{equation}
where $s_{ij} := \langle \G \x_i, \tG \tx_j \rangle$ for $i, j \in [n]$ is the similarity score, and $R_{\rho}(\G, \tG) := (\rho/2) \| \G^\top \tG \|_F^2$ is the regularizer with strength $\rho > 0$. The regularizer ensures that the learned parameters have finite norms. Indeed, Eq.~\eqref{eq-cor1sketch-argmin} shows that this objective has a closed-form solution with a $\nicefrac{1}{\rho}$ multiplier, which becomes infinite if $\rho = 0$.
Note that CLIP \cite{radford2021learning} does not need a regularizer since the inner product is taken with \emph{normalized} vectors (i.e. $(\nicefrac{1}{ \| \G \x \|}) \G \x$ instead of $\G \x$). 

The parameters $\G$ and $\tG$ \emph{assume} the knowledge of the latent dimension $\dimz$ (since they are of sizes $r \times \dimx$ and $r \times \dimtx$). 
In practice, the latent dimension is typically a design choice and is therefore known at training time. Theoretically, assuming the latent dimension is known allows us to isolate the effects of data filtering from the separate, well-studied problem of subspace rank estimation (for e.g., in \citet{cai2016optimalestimationrankdetection}).

Also note that this objective is in a \emph{full-batch} setting, i.e. the entire $n \times n$ grid of similarities is computed to maximize the diagonals and minimize the off-diagonals. This does not cause computational issues since the objective has a closed-form solution, given by Eq.~\eqref{eq-cor1sketch-argmin}.

To measure the quality of a solution, we use the chordal distance between two subspaces in Definition~\ref{def-error}. This is a standard measure of how well $\G, \tG$ recover $\U, \tU$ respectively \cite{nakada_understanding_2023, ji2023power}. In Appendix~\ref{sec-app-metric-justification}, we discuss other potentially relevant metrics of recovering $\U, \tU$ in the model of Section~\ref{sec-data-model}.
\begin{definition} \label{def-error}
    The error metric for a learned embedding $\G, \tG$ is defined as
    \begin{equation} \label{eq-define-error}
        \ERR(\G, \tG) := \max \left\{ \Big\| \sin \Theta \left( \rsv \left( \G \right), \U \right) \Big\|_F \,,\, \Big\| \sin \Theta \left( \rsv \left( \tG \right), \tU \right) \Big\|_F \right\} \;.
    \end{equation}
\end{definition}
We note two points. First, the metric only considers the \emph{right singular vectors}. This is because the essential information in $\G, \tG$ is contained in the right subspaces. 
Indeed, the loss in Eq.~\eqref{eq-objective} is only affected by $\G^\top \tG$, which is preserved under the transformation $\G \leftarrow A \G$, $\tG \leftarrow A \tG$ for any orthonormal matrix $A$.
Second, the metric uses the $\sin \Theta$ distance, which is a geometrically intuitive way to measure closeness between two subspaces (refer to Appendix~\ref{sec-app-background-sinTheta} for a background). %

\begin{figure}[ht]
  \centering
  \begin{subfigure}[t]{0.16\textwidth}
    \centering
    \begin{tikzpicture}[baseline=(raw.north),
      box/.style={draw, rectangle, minimum width=1.8cm, minimum height=0.9cm, align=center},
      circlephi/.style={draw, circle, minimum size=0.8cm, inner sep=2pt, align=center},
      arrow/.style={->, >=Stealth, thick},
      node distance=2cm
    ]
      \node[box] (raw) {Dataset};
      \node[circlephi, below=1.8cm of raw.center] (phi) {$\hat{\phi}$};
      \draw[arrow] (raw) -- node[below=0cm of raw.center, anchor=center, xshift=-0.7em, rotate=90]{Train} (phi);
    \end{tikzpicture}
    \caption{No filtering}
    \label{fig:algorithms:nofilt}
  \end{subfigure}
  \hfill
  \begin{subfigure}[t]{0.4\textwidth}
    \centering
    \begin{tikzpicture}[baseline=(rawTop.north),
      box/.style={draw, rectangle, minimum width=1.8cm, minimum height=0.9cm, align=center},
      filtered/.style={draw, rectangle, minimum width=1.5cm, minimum height=0.6cm, align=center},
      circlephi/.style={draw, circle, minimum size=0.8cm, inner sep=2pt, align=center},
      arrow/.style={->, >=Stealth, thick},
      node distance=2cm
    ]
      \node[box] (rawTop) {Dataset};
      \node[filtered, right=2cm of rawTop] (filteredTop) {Filtered};
      \node[circlephi, below=1.8cm of filteredTop.center] (phiTop) {$\hat{\phi}$};
      
      \draw[arrow] (rawTop) -- 
         node[pos=0.5, yshift=2.5mm, align=center]{Filter} 
         node[pos=0.5, yshift=-3mm, align=center]{using $\phi^\star$} 
         (filteredTop);
      \draw[arrow] (filteredTop) -- node[below=0.15cm of filteredTop.center, anchor=center, xshift=-0.7em, rotate=90]{Train} (phiTop);
    \end{tikzpicture}
    \caption{Oracle filtering}
    \label{fig:algorithms:oracle}
  \end{subfigure}
  \hfill
  \begin{subfigure}[t]{0.4\textwidth}
    \centering
    \begin{tikzpicture}[baseline=(rawBot.north),
      box/.style={draw, rectangle, minimum width=1.8cm, minimum height=0.9cm, align=center},
      filtered/.style={draw, rectangle, minimum width=1.5cm, minimum height=0.6cm, align=center},
      circlephi/.style={draw, circle, minimum size=0.8cm, inner sep=2pt, align=center},
      arrow/.style={->, >=Stealth, thick},
      node distance=2cm
    ]
      \node[box] (rawBot) {Dataset};
      \node[filtered, right=2cm of rawBot] (filteredBot) {Filtered};
      \node[circlephi, below=1.8cm of filteredBot.center] (phiBot2) {$\hat{\phi}$};
      \node[circlephi, below=1.8cm of rawBot.center] (phiBot1) {$\hat{\phi}_T$};
      
      \draw[arrow] (rawBot) -- 
         node[pos=0.5, yshift=2.5mm, align=center]{Filter} 
         node[pos=0.5, yshift=-3mm, align=center]{using $\hat{\phi}_T$} 
         (filteredBot);
      \draw[arrow] (filteredBot) -- node[below=0.15cm of filteredBot.center, anchor=center, xshift=-0.7em, rotate=90]{Train} (phiBot2);
      \draw[arrow] (rawBot) -- node[below=0cm of rawBot.center, anchor=center, xshift=-0.7em, rotate=90]{Train} (phiBot1);
    \end{tikzpicture}
    \caption{Teacher-based filtering}
    \label{fig:algorithms:TS}
  \end{subfigure}
  \caption{Our goal is to analyze the \emph{Train-Filter-Train} approach illustrated in (c) and show that it improves upon the no filtering approach of (a). Here $\phi^\star$ denotes the ground-truth parameters and $\hat{\phi}$ denotes the learned version. In our setting, $\phi^* \equiv \{ \U , \tU \}$ and $\hat{\phi} \equiv \{ \G, \tG \}$.}
  \label{fig:algorithms}
\end{figure}
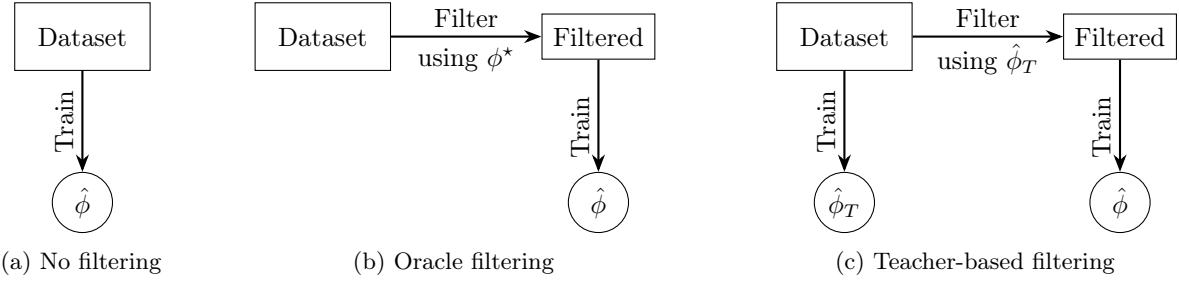

\section{Baseline: unfiltered contrastive learning} \label{sec-no-filtering}

We study the error rate of the unfiltered contrastive learning (Figure~\ref{fig:algorithms:nofilt}). We show that the error is upper and lower bounded by $\tO \, (\nicefrac{1}{\eta \sqrt{n}})$. The upper bound is given in Corollary~\ref{cor-nofilt}, which is a result from \citet{nakada_understanding_2023}. We show a matching lower bound in Proposition ~\ref{prop-nofilt-LB}. 

\begin{corollary}[Corollary of {\cite[Theorem 3.1]{nakada_understanding_2023}}] 
\label{cor-nofilt}
    Given a dataset of pairs $\{ (\x_i, \tx_i) \}_{i=1}^n$ generated i.i.d.\ according to the bimodal data model in Eq.~\eqref{eq-data-model} satisfying Assumptions~\ref{assump-corruption} and \ref{assump-noise}, 
    the solution of minimizing the contrastive loss  in Eq.~\eqref{eq-objective} satisfies with probability  $1 - \exp(-\Omega(\max\{\dimx, \dimtx\}))$:
    \begin{equation*}
         \ERR(\G, \tG) \;\; \lesssim \;\; \frac{1}{\eta} \, \sqrt{\frac{\dimz \, \max\{\dimx, \dimtx\}\left(1 + \snr^{-1}\right)\left(1 + \tsnr^{-1}\right)}{n}} + \tO \left( \frac{1}{n} \right) \;,
    \end{equation*}
    provided the number of samples $n \gtrsim (1/\eta^2) \, \max\{\dimx, \dimtx\} \left(1 + \snr^{-1} \right) \left(1 + \tsnr^{-1} \right)$.
\end{corollary}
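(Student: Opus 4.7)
The plan is to directly invoke \citet[Theorem 3.1]{nakada_understanding_2023} after identifying how the clean fraction $\eta$ enters the \emph{effective signal strength} for subspace recovery under our corruption model. The corollary is essentially a specialization: once we identify the expected cross-covariance, the rate from \citet{nakada_understanding_2023} plugs in directly.

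First I would compute the population cross-covariance. Expanding $\x \tx^\top = (\U\z + \xi)(\tU\tz + \txi)^\top$ and using Assumption~\ref{assump-noise} (independence of $\xi,\txi$ from each other and from the latent variables, with zero means), only the signal term survives:
\begin{equation*}
    \E\!\left[\x \tx^\top\right] \;=\; \U \, \E\!\left[\z\tz^\top\right] \, \tU^\top \;=\; \eta \, \U \tU^\top,
\end{equation*}
where the last equality uses Assumption~\ref{assump-corruption}: the clean case contributes $\E[z_1 z_1^\top] = \Idz$ and the corrupted case contributes $\E[z_1 z_2^\top] = 0$ by independence. Crucially, $\eta\,\U \tU^\top$ has all $\dimz$ nonzero singular values equal to $\eta$, with left and right singular subspaces exactly $\U$ and $\tU$. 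Thus $\eta$ plays the role of the spectral gap in any subsequent $\sin\Theta$ perturbation argument.

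Next I would derive the closed-form minimizer. Writing $s_{ij} = \x_i^\top \mbM \tx_j$ with $\mbM := \G^\top \tG \in \R^{\dimx \times \dimtx}$, a direct rearrangement of Eq.~\eqref{eq-objective} collapses the data-fit portion of $\L_\rho$ to $-\tfrac{n}{n-1}\langle \mbM, \hat C\rangle$, where
\begin{equation*}
    \hat C \;:=\; \frac{1}{n} \sum_{i=1}^n (\x_i - \bar\x)(\tx_i - \bar{\tx})^\top
\end{equation*}
is the centered empirical cross-covariance. Minimizing $-\langle \mbM, \hat C\rangle + (\rho/2)\|\mbM\|_F^2$ over rank-$\dimz$ factorizations $\mbM = \G^\top \tG$ reduces via Eckart--Young to $\mbM \propto \SVD_\dimz(\hat C)$, so $\rsv(\G) = \lsv_\dimz(\hat C)$ and $\rsv(\tG) = \rsv_\dimz(\hat C)$ up to within-subspace rotation. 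This identifies the contrastive estimator with a truncated SVD of $\hat C$.

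Finally I would combine concentration with Wedin's $\sin\Theta$ theorem. Sub-Gaussian product concentration --- which is the technical core of \citet[Theorem 3.1]{nakada_understanding_2023} --- yields, with probability $1-\exp(-\Omega(\max\{\dimx,\dimtx\}))$,
\begin{equation*}
    \bigl\|\hat C - \eta \U \tU^\top\bigr\|_2 \;\lesssim\; \sqrt{\frac{\max\{\dimx,\dimtx\}(1+\snr^{-1})(1+\tsnr^{-1})}{n}} \;+\; \tO\!\left(\tfrac{1}{n}\right),
\end{equation*}
where the $\tO(1/n)$ term absorbs the centering correction $\bar\x \bar{\tx}^\top$. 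Under the stated sample-complexity assumption this perturbation is at most $\eta/2$, and Wedin's theorem (in Frobenius form, picking up a factor of $\sqrt{\dimz}$ to convert the operator-norm perturbation into the Frobenius $\sin\Theta$ norm of Definition~\ref{def-error}) gives
\begin{equation*}
    \ERR(\G, \tG) \;\lesssim\; \frac{\sqrt{\dimz}\, \|\hat C - \eta \U \tU^\top\|_2}{\eta},
\end{equation*}
which is exactly the claimed bound. The main obstacle is the concentration step with the correct $(1+\snr^{-1})(1+\tsnr^{-1})$ dependence --- $\hat C$ is a sum of products of sub-Gaussian vectors across two different modalities, and a naive Matrix Bernstein argument needs care with the Orlicz norms --- but this is precisely what \citet{nakada_understanding_2023} handle. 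Everything else in our corollary reduces to the one-line cross-moment calculation that identifies $\eta$ as the effective signal strength and hence as the source of the $1/\eta$ prefactor.
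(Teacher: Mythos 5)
Your proposal follows the same high-level route as the paper's Appendix~\ref{sec-app-proof-nofilt}: identify $\E[\x\tx^\top] = \eta\,\U\tU^\top$ so that $\eta$ is the spectral gap, reduce the regularized loss to a linear functional of the centered cross-covariance plus the $\|\G^\top\tG\|_F^2$ penalty, solve via Eckart--Young to get $\G^\top\tG \propto \SVD_\dimz(\hat C)$, concentrate $\hat C$ around $\eta\,\U\tU^\top$ in operator norm, and finish with Wedin's $\sin\Theta$ theorem picking up a $\sqrt{\dimz}$ Frobenius factor. All of this matches the paper step for step.

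The one place where your plan cuts a corner is the concentration step: you defer it entirely to \citet[Theorem 3.1]{nakada_understanding_2023} as a black box, but the paper does \emph{not} do this, and the remarks preceding its proof explain why a direct invocation would not yield the stated bound. Specifically: (i) Corollary~\ref{cor-nofilt} as stated is tighter than Nakada's Theorem~3.1 by a dimension factor inside the square root, because the paper replaces the trace-based operator bounds there by the sharper Matrix-Bernstein and covariance-concentration inequalities (Eqs.~\eqref{eq-conc-indep}--\eqref{eq-conc-dep}); (ii) the failure probability is $1 - \exp(-\Omega(\max\{\dimx,\dimtx\}))$ rather than $1 - O(1/n)$, which is why no $\log n$ appears in the corollary; and (iii) Nakada treat $\eta$ as the realized clean fraction $n_c/n$ (a random variable), whereas the paper's $\eta$ is the population probability, so the proof needs an extra Hoeffding step (Eq.~\eqref{eq-conc-hoeffding}) and a random-denominator lemma (Lemma~\ref{lem:conc_random_subsample_general_sigma}) to control $|n_c/n - \eta|$. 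You correctly write down the target concentration inequality $\|\hat C - \eta\U\tU^\top\| \lesssim \sqrt{\max\{\dimx,\dimtx\}(1+\snr^{-1})(1+\tsnr^{-1})/n} + \tO(1/n)$, but attributing it to Nakada as-is is not quite accurate; to make your sketch a proof you would need to reproduce (or at least acknowledge) the decomposition of $\frac{1}{n}\sum_i \x_i\tx_i^\top$ into the five terms ($J_1, J_2, K_1, K_2, K_3$) and the handling of the random $n_c$, as the paper does.

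One minor structural point: the version of Wedin's theorem the paper quotes (Lemma~\ref{lem-wedin-thm}) requires both matrices to have the same rank $r$, which is why the paper first passes through the truncated $\SVD_\dimz(\Sn)$ (its Step~3) before applying Wedin. Your sketch applies Wedin directly with $\hat A = \hat C$, which is generically full rank. This is fine under the standard (more general) statement of Wedin's theorem, but if you want to stay within the paper's lemma you would insert the inequality $\|\SVD_\dimz(\hat C) - \eta\U\tU^\top\| \le 2\|\hat C - \eta\U\tU^\top\|$ (from Weyl plus $\sigma_{\dimz+1}(\eta\U\tU^\top)=0$) before invoking it, exactly as the paper does.
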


\begin{remark}[Looseness in SNR parameters compared to Eq.~\eqref{eq:spiked}] \label{remark-gamma-loose}
The dependence on SNR parameters ($\snr, \tsnr$) in Corollary~\ref{cor-nofilt} is looser than the unimodal estimation counterpart in Eq.~\eqref{eq:spiked}. As stated, the error upper bound in Corollary~\ref{cor-nofilt} does not become zero when $\snr \rightarrow \infty$. We remark that this is an artifact of the analysis. Indeed a tighter analysis is possible that recovers a $\sqrt{\snr^{-1} \tsnr^{-1}}$ term also in the upper bound, for instance, using the ideas in \citet[Section 7]{cai2016optimalestimationrankdetection}, in particular \cite[Eq.\ (39)]{cai2016optimalestimationrankdetection}.
\end{remark}
A complete proof of Corollary~\ref{cor-nofilt} is presented in Appendix~\ref{sec-app-proof-nofilt}, which is largely a reconstruction from \citet{nakada_understanding_2023} with some minor corrections. %
The analysis has three parts.
First, the unregularized term of the contrastive loss in Eq.~\eqref{eq-objective} simplifies to $\L_0(\G, \tG) = -\Tr \left(\G \Sn \tG^\top \right)$, where $\Sn \in \R^{\dimx \times \dimtx}$ denotes the cross-covariance matrix of the data, defined as
\begin{equation} \label{eq-cor1sketch-cross-cov}
    \Sn := \frac{1}{n-1} \sum_{i \in [n]} \left( \x_i - \overline{\x} \right) \left( \tx_i - \overline{\tx} \right)^\top \approx \frac{1}{n} \sum_{i \in [n]} \x_i \tx_i^\top \;.
\end{equation}
Second, the regularized contrastive loss, albeit nonconvex, admits a closed-form solution as the SVD of $\Sn$, given in Eq.~\eqref{eq-cor1sketch-argmin}. Due to this, we can directly analyze the solution without the need for optimization analysis.
\begin{equation} \label{eq-cor1sketch-argmin}
    \arg \min_{\G, \tG} \; \L_{\rho} \left( \G, \tG \right) = \left\{ \left( \G, \tG \right) \; \Big| \; \G^\top \tG = \frac{1}{\rho} \; \SVD_{\dimz} \left( \Sn \right) \right\} \;.
\end{equation}
The third key piece is concentration of $\Sn$. We show finite sample concentration of $\Sn$ in operator norm, namely $\textit{w.h.p.} \, \, \big\| \Sn - \S \big\| \lesssim \nicefrac{1}{\rho \sqrt{n}}$, for the limiting quantity $\S = \left( \nicefrac{\eta}{\rho} \right) \U \tU^\top$. Using a Davis-Kahan like result, we can translate the operator norm concentration to a distance between the angles of subspaces, for both left and right singular vectors, yielding $\textit{w.h.p.} \, \, \ERR(\G, \tG) \lesssim \nicefrac{1}{\eta \sqrt{n}}$. Note the dependence on the regularization strength $\rho$ vanishes (as long as $\rho > 0$) due to its appearance in both the numerator (via op-norm concentration) and denominator (since the singular values of $\S$ scale as $\nicefrac{\eta}{\rho}$).
This sketch describes the $\nicefrac{1}{\eta}$ dependence of the unfiltered contrastive learning. 
In Proposition~\ref{prop-nofilt-LB}, we show that this dependence is tight. We present a proof of Proposition~\ref{prop-nofilt-LB} in Appendix~\ref{sec-app-proof-nofilt-LB} by constructing a hard problem instance (parameterized by $\eta$).
\begin{proposition} \label{prop-nofilt-LB}
    Under the setting of Corollary~\ref{cor-nofilt}, there is a class of problem instances with latent dimension $\dimz = 1$ such that the error achieved by the minimizer of Eq.~\eqref{eq-objective} is lower bounded (up to absolute constants) with probability $1 - \exp(-\Omega(\max\{\dimx, \dimtx\}))$ as:
    \begin{equation*}
        \ERR(\G, \tG) \;\;\gtrsim\;\; \frac{1}{\eta} \, \sqrt{\frac{\max\left\{\dimx \, \snr^{-1}, \dimtx \, \tsnr^{-1} \right\}}{n}} \;.
    \end{equation*}
\end{proposition}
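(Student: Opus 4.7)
The plan is to pick $\dimz=1$ with arbitrary unit vectors $\U = \u \in \R^\dimx$ and $\tU = \tu \in \R^\dimtx$, and exploit the closed-form in Eq.~\eqref{eq-cor1sketch-argmin}: every minimizer $(\G,\tG)$ of Eq.~\eqref{eq-objective} satisfies $\rsv(\G) = \widehat{\u}$ and $\rsv(\tG) = \widehat{\tu}$, where $\widehat{\u}$ and $\widehat{\tu}$ denote the top left and right singular vectors of $\Sn$. Since $\dimz=1$, the error collapses to $\ERR(\G,\tG) = \max\bigl\{\, \|P_{\u^\perp} \widehat{\u}\|_2,\; \|P_{\tu^\perp} \widehat{\tu}\|_2 \,\bigr\}$, where $P_{\u^\perp} := \Id - \u\u^\top$ and similarly for $P_{\tu^\perp}$, so it suffices to lower bound the two summands separately.

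\textbf{Exact rank-one perturbation identity.} I would decompose $\Sn = \eta\, \u \tu^\top + \mathbf{N}$, let $\hat\sigma_1$ denote the top singular value of $\Sn$, and project the singular-vector equation $\Sn \widehat{\tu} = \hat\sigma_1 \widehat{\u}$ onto $\u^\perp$. Since $P_{\u^\perp}\u = 0$, this yields the \emph{exact} identity $\hat\sigma_1\, P_{\u^\perp}\widehat{\u} = P_{\u^\perp}\mathbf{N}\widehat{\tu}$. Fixing the sign so that $\alpha := \langle \widehat{\tu}, \tu \rangle \geq 0$ and writing $\widehat{\tu} = \alpha\, \tu + P_{\tu^\perp}\widehat{\tu}$, the triangle inequality gives
\begin{equation*}
\hat\sigma_1 \|P_{\u^\perp}\widehat{\u}\|_2 \;\geq\; \alpha\,\|P_{\u^\perp}\mathbf{N}\,\tu\|_2 \;-\; \|\mathbf{N}\|_{\mathrm{op}}\,\|P_{\tu^\perp}\widehat{\tu}\|_2.
\end{equation*}
Under Corollary~\ref{cor-nofilt}'s sample complexity, matrix concentration yields $\|\mathbf{N}\|_{\mathrm{op}} \ll \eta$, which in turn forces $\hat\sigma_1 \asymp \eta$, $\alpha \geq 1/2$, and by Davis--Kahan, $\|P_{\tu^\perp}\widehat{\tu}\|_2 \lesssim \|\mathbf{N}\|_{\mathrm{op}}/\eta$.

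\textbf{Anti-concentration of the signal term.} Using $P_{\u^\perp}\u = 0$, I would expand
\begin{equation*}
P_{\u^\perp}\mathbf{N}\,\tu \;=\; \frac{1}{n}\sum_{i=1}^n \bigl(\tz_i + \langle \txi_i, \tu \rangle\bigr)\, P_{\u^\perp}\xi_i.
\end{equation*}
Conditional on $\{\xi_i\}_{i=1}^n$, the scalars $\tz_i + \langle \txi_i, \tu\rangle$ are i.i.d.\ $\N(0, 1+\tsnr^{-1})$ and independent of $\{\xi_i\}$, so conditionally $P_{\u^\perp}\mathbf{N}\tu$ is a centered Gaussian in the $(\dimx-1)$-dimensional subspace $\u^\perp$ with covariance $\tfrac{1+\tsnr^{-1}}{n^2}\,P_{\u^\perp}\bigl(\sum_{i=1}^n \xi_i\xi_i^\top\bigr) P_{\u^\perp}$. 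Gaussian covariance concentration gives $\sum_i \xi_i\xi_i^\top \succeq (n\snr^{-1}/2)\,\Id$ with probability $1 - \exp(-\Omega(\dimx))$; a chi-squared anti-concentration bound on the conditional Gaussian then yields
\begin{equation*}
\|P_{\u^\perp}\mathbf{N}\,\tu\|_2 \;\gtrsim\; \sqrt{\frac{\dimx\,\snr^{-1}}{n}}
\end{equation*}
with the same probability.

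\textbf{Conclusion and main obstacle.} Plugging the last display and $\|P_{\tu^\perp}\widehat{\tu}\|_2 \lesssim \|\mathbf{N}\|_{\mathrm{op}}/\eta$ into the identity from step two yields $\|P_{\u^\perp}\widehat{\u}\|_2 \gtrsim (1/\eta)\sqrt{\dimx\,\snr^{-1}/n}$; the symmetric argument applied to $\Sn^\top \widehat{\u} = \hat\sigma_1 \widehat{\tu}$ gives $\|P_{\tu^\perp}\widehat{\tu}\|_2 \gtrsim (1/\eta)\sqrt{\dimtx\,\tsnr^{-1}/n}$, and taking the maximum proves the claim. The hard part will be verifying that the Davis--Kahan slack $\|\mathbf{N}\|_{\mathrm{op}}\|P_{\tu^\perp}\widehat{\tu}\|_2 \lesssim \|\mathbf{N}\|_{\mathrm{op}}^2/\eta$ is indeed dominated by the signal term $\sqrt{\dimx\snr^{-1}/n}$; this reduces to a tight enough upper bound on $\|\mathbf{N}\|_{\mathrm{op}}$, which is precisely what the sample complexity of Corollary~\ref{cor-nofilt} (possibly with a mild strengthening of the SNR factors, in the spirit of Remark~\ref{remark-gamma-loose}) delivers.
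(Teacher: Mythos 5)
Your overall strategy -- decompose $\Sn = \eta\,\u\tu^\top + \mathbf{N}$, project the singular-vector equation onto $\u^\perp$, and then lower bound $\|P_{\u^\perp}\mathbf{N}\tu\|_2$ via conditional Gaussian anti-concentration -- is the same core argument as the paper's, and your signal-term calculation (conditioning on $\{\xi_i\}$ and using $\sum_i \xi_i\xi_i^\top \succeq (n\snr^{-1}/2)\Id$) is correct. However, the ``hard part'' you flag at the end is a genuine gap, not a routine verification. Let us look at the numbers: you need the slack $\|\mathbf{N}\|_{\mathrm{op}}\|P_{\tu^\perp}\widehat{\tu}\|_2/\hat\sigma_1 \lesssim \|\mathbf{N}\|_{\mathrm{op}}^2/\eta^2$ to be dominated by $(1/\eta)\sqrt{\dimx\snr^{-1}/n}$, i.e.\ $\|\mathbf{N}\|_{\mathrm{op}}^2 \lesssim \eta\sqrt{\dimx\snr^{-1}/n}$. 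Plugging in the concentration from Corollary~\ref{cor-nofilt}, $\|\mathbf{N}\|_{\mathrm{op}}^2 \lesssim \max\{\dimx,\dimtx\}(1+\snr^{-1})(1+\tsnr^{-1})/n$, this demands $n \gtrsim \big[\max\{\dimx,\dimtx\}\big]^2(1+\snr^{-1})^2(1+\tsnr^{-1})^2/\big(\eta^2 \dimx\snr^{-1}\big)$, which blows up as $\snr\to\infty$ and is strictly stronger than the sample-complexity condition stated in the proposition. Appealing to Remark~\ref{remark-gamma-loose} does not close this cleanly either, since that remark concerns the ratio $\|\mathbf{N}\|/\sigma_r$, not $\|\mathbf{N}\|$ itself, and a sharper version is not actually proved in the paper.

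The paper sidesteps this entirely by exploiting the freedom in ``there exists a class of instances'': it sets $\txi = 0$ (i.e.\ $\tsnr = \infty$) in the hard instance. Then $\tx_i = \tz_i\,\tu$ exactly, so $\Sn$ has exact rank one with right singular vector equal to $\tu$, and $|\sin\theta(\un,\u)| = \|(\Id-\u\u^\top)\Sn\tu\|/\|\Sn\|$ holds with $\widehat{\tu}=\tu$ -- your slack term $\|\mathbf{N}\|\,\|P_{\tu^\perp}\widehat{\tu}\|$ is identically zero. The remaining work is a lower bound on the numerator $\|(\Id-\u\u^\top)\Sn\tu\|$ (essentially your anti-concentration step) and an upper bound on $\|\Sn\|$ via a triangle inequality and vector concentration, yielding exactly $(1/\eta)\sqrt{\dimx\snr^{-1}/n}$; the symmetric instance with $\xi=0$ gives the other term of the $\max$. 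If you adopt this choice of instance, your proof becomes complete and essentially coincides with the paper's; as written, the Davis--Kahan slack control is unresolved.
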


\section{Our approach: teacher-based filtering} \label{sec-filtering}

In the previous section, we concluded that the unfiltered contrastive learning  achieves a tight error dependence of $\nicefrac{1}{\eta}$. In this section, we ask: can filtering algorithms improve upon the $\eta$ dependency? Intuitively, we expect the answer to be yes, since filtering can identify corrupted samples and remove them (increasing the clean fraction $\eta$).
Indeed, if the filter could perfectly identify all clean samples, it would achieve a dependence of $\nicefrac{1}{\sqrt{\eta}}$ (since this would be akin to the unfiltered contrastive learning with $\eta \leftarrow 1$ and $n \leftarrow \eta n$). 
We will now study the $\eta$ dependence of teacher-based filtering.

Teacher-based filtering, which follows a \emph{Train-Filter-Train} approach, has proven to be a successful method in practice \cite{fang2023data, wang2024cliploss}. In the first training step, a teacher model is trained on (potentially a part of) the dataset.
In the filter step, (the remaining part of) the dataset is filtered by using the teacher to compute a similarity score to evaluate the quality of each sample.
The filtering usually happens by selecting  samples with score above a certain \emph{threshold} $\theta \in \R$.
In the second training step, a student model is trained on the filtered dataset. Refer to Figure~\ref{fig:algorithms:TS} for an illustration.
The student can be initialized at the teacher's solution, or even at a fresh random initialization. 
The intuition is that the teacher can extract useful signal from the dataset despite the presence of corrupted samples, which can help in identifying and discarding corrupted samples. 
Algorithm~\ref{alg:TS-filtering} describes this process in the setup of Section~\ref{sec-setup}. The split of the dataset into two halves is for the convenience of analysis, by ensuring the filtering rule (which depends on the first half of samples and $\theta$) is independent of the samples being filtered (the second $n/2$ samples).
We now state our main result.  

\begin{algorithm}
\begin{algorithmic}
\State \textbf{Input:} Dataset $\dset = \{ (\x_i, \tx_i) \}_{i=1}^n$, Threshold $\theta \in \R$.
\State \textbf{Step 1 (Train):} Obtain $\GT, \tGT$ by minimizing Eq.~\eqref{eq-objective} on the first $n/2$ samples $\{ (\x_i, \tx_i) \}_{i\leq n/2}$.
\State \textbf{Step 2 (Filter):} Create $\dsetfilt(\theta)$ from $\{ (\x_i, \tx_i) \}_{i>n/2}$ by retaining sample $i$ iff $\langle \GT \, \x_i, \tGT \, \tx_i \rangle > \theta$.
\State \textbf{Step 3 (Train):} Output $\G(\theta), \tG(\theta)$ by minimizing Eq.~\eqref{eq-objective} on $\dsetfilt(\theta)$.
\end{algorithmic}
\caption{Teacher-based filtering in the setup of Section~\ref{sec-setup}.} \label{alg:TS-filtering}
\end{algorithm}

\begin{theorem} \label{thm-TS-special}
    Under the model in Eq.~\eqref{eq-data-model} satisfying Assumptions~\ref{assump-corruption} and \ref{assump-noise} with $r \geq 2$, there exists a threshold $\theta^* \in \R$ such that, given a dataset of pairs $ \{ (\x_i, \tx_i) \}_{i=1}^n$ generated i.i.d.\ according to the model, 
    the output of Algorithm~\ref{alg:TS-filtering} satisfies with probability $1 - \exp(-\Omega(\max\{\dimx, \dimtx\}))$:
    \begin{equation*}
        \ERR \left(\G(\theta^*), \tG(\theta^*)\right) \;\; \lesssim  \;\; 
        \min \left\{ T_{0.5}, T_0 \right\} \;,
    \end{equation*}
    provided $n \gtrsim ({1}/{\eta^2}) \, \max\{\dimx, \dimtx\} \left(1 + \snr^{-1} \right) \left(1 + \tsnr^{-1} \right)$. Here $T_{0.5}, T_0$ are defined as
    \begin{align*}
        T_{0.5} &= \sqrt{\frac{\dimz \, \max\{\dimx, \dimtx\} \,\, {\rm poly} \left(\snr^{-1}, \tsnr^{-1} \right)}{\eta n}} + \tO\left( \frac{1}{n}\right) \;, \\
        T_0 &= \sqrt{\frac{\dimz^3 \, \max\{\dimx, \dimtx\} \,\, {\rm poly} \left(\snr^{-1}, \tsnr^{-1} \right) }{n}} + \tO\left( \frac{1}{n} \right) \;.
    \end{align*}
\end{theorem}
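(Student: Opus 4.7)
The strategy is a two-step application of Corollary~\ref{cor-nofilt}, once on each half of the dataset, with a careful filter analysis bridging them. Applying Corollary~\ref{cor-nofilt} to the first $n/2$ samples gives a teacher with $\ERR(\GT,\tGT) \lesssim 1/(\eta\sqrt{n})$; combining the closed form~\eqref{eq-cor1sketch-argmin} with the concentration $\bSn \approx \eta\,\U\tU^\top$ from the proof sketch of Corollary~\ref{cor-nofilt}, the teacher product decomposes as $\mbM := \GT^\top \tGT = \lambda\,\U\tU^\top + \mbE_M$ with signal strength $\lambda = \eta/\rho$ and a residual $\mbE_M$ whose operator norm is controlled by the teacher's sin-theta rate. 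Because $\mbM$ depends only on the first half, conditioning on $\mbM$ leaves the second-half samples i.i.d.\ from the model, and the filter score $s := \x^\top \mbM \tx$ decomposes as
\begin{equation*}
s \;=\; \lambda\, \z^\top \tz \;+\; (\text{linear-in-noise cross terms}) \;+\; (\text{terms involving } \mbE_M).
\end{equation*}
On the clean branch ($\z = \tz$) the dominant piece is the chi-squared $\lambda\|\z\|^2$ with mean $\lambda\dimz$, while on the corrupted branch ($\z,\tz$ independent) it is $\lambda\,\z^\top\tz$ with mean zero and sub-exponential scale $\lambda\sqrt{\dimz}$.

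By the rotational invariance of $\z,\tz \sim \N(0,\Idz)$, on each branch the conditional expectation $\E\big[\x\tx^\top\, \mathbf{1}\{s>\theta\} \mid c\big]$ is a positive scalar multiple of $\U\tU^\top$. Hence the population filtered cross-covariance $\S^{\mathrm{filt}}(\theta)$ is rank-$\dimz$ and perfectly aligned with the ground truth, so only its spectral gap $\sigma_{\dimz}\big(\S^{\mathrm{filt}}(\theta)\big)$ varies with $\theta$. Since $\mbM$ is independent of the second half, a standard matrix-Bernstein argument yields
\begin{equation*}
\big\|\bSn^{\mathrm{filt}}(\theta) - \S^{\mathrm{filt}}(\theta)\big\|_{\mathrm{op}} \;\lesssim\; \sqrt{\max\{\dimx,\dimtx\} \,/\, n^{\mathrm{filt}}(\theta)}
\end{equation*}
up to polynomial SNR factors, where $n^{\mathrm{filt}}(\theta)$ is the size of the filtered set, and a Wedin-type bound then gives $\ERR \lesssim \sqrt{\dimz}\,\big\|\bSn^{\mathrm{filt}} - \S^{\mathrm{filt}}\big\|_{\mathrm{op}} \,/\, \sigma_{\dimz}(\S^{\mathrm{filt}})$, mirroring the chain used to prove Corollary~\ref{cor-nofilt}.

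The two terms $T_{0.5}$ and $T_0$ correspond to two choices of $\theta^*$. For $T_{0.5}$, take $\theta^* \approx \lambda\dimz/2$: by $\chi^2_{\dimz}$ concentration (using $\dimz\ge 2$) the clean pass-rate satisfies $p_c = \Omega(1)$, while the sub-exponential tail of $\z^\top \tz$ makes the corrupted pass-rate $p_r$ exponentially small in $\dimz$; the filtered dataset then has clean fraction $\to 1$ with $\Theta(\eta n)$ samples and $\sigma_{\dimz} = \Omega(1)$, yielding a rate of order $\sqrt{\dimz\max\{\dimx,\dimtx\}/(\eta n)}$. For $T_0$, take $\theta^* \approx 0$: by symmetry $p_c \approx 1$ and $p_r \approx 1/2$, and the corrupted branch contributes $\alpha\,\U\tU^\top$ with $\alpha = \E[(z_1^\top z_2)^+]/\dimz \asymp 1/\sqrt{\dimz}$, so $\sigma_{\dimz} = \Theta(1/\sqrt{\dimz})$ independent of $\eta$, producing a Wedin rate polynomial in $\dimz$ and free of $\eta$. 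The main obstacle is propagating the imperfect-teacher noise: both the residual $\mbE_M$ and the Gaussian cross-terms in $s$ perturb the filter event $\{s>\theta\}$ and the conditional cross-covariance, and one must verify they are dominated by the signal scale ($\lambda\dimz$ for the aggressive threshold, $\lambda\sqrt{\dimz}$ for the mild one) while carefully tracking the polynomial SNR factors that appear in the final bound.
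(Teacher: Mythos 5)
Your proposal follows essentially the same route as the paper's proof in Appendix~\ref{sec-app-proof-thm-TS-special}: train a teacher on half the data via Corollary~\ref{cor-nofilt}, show that the population filtered cross-covariance is a positive multiple of $\U\tU^\top$, then apply Matrix-Bernstein concentration and a Wedin bound, and finally evaluate at $\theta \approx 0$ and $\theta \approx \lambda r/2$ (the paper's $\theta \in \{0, \nicefrac{r\eta}{2\rho_\subT}\}$, identical under $\lambda = \eta/\rho_\subT$). One claim deserves a caveat, though. You assert that ``$\E[\x\tx^\top\mathbf{1}\{s>\theta\}\mid c]$ is a positive scalar multiple of $\U\tU^\top$'' with $s = \x^\top\mbM\tx$ for the \emph{learned} teacher matrix $\mbM$. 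The rotational-invariance argument (equivalently, the paper's block-diagonal decomposition in Step~3$'$ plus Lemma~\ref{lem-ftheta-special}) establishes this only for the \emph{oracle} score $\x^\top(\U\tU^\top)\tx$; with the learned $\mbM$ the conditioning event is not invariant under the needed rotations and $\ST(\theta)$ is not exactly rank-$r$. The paper bridges this gap via a dedicated technical lemma (Lemma~\ref{lem:direct_difference_bounds_positive_theta}), which bounds both $\abs{\Delta P(\theta)}$ and $\norm{\Delta\mbE(\theta)}$ linearly in $\norm{\mbM_\subT-\mbM_\subO}$ using a mean-value-theorem argument on the density of the quadratic form; the requirement $r\ge 2$ in the theorem exists precisely to make that density bounded. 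You correctly flag this as ``the main obstacle,'' but a concrete mechanism for it is the one substantive piece missing from the sketch. Two secondary remarks: (i) your claim $\alpha = \E[(z_1^\top z_2)^+]/r \asymp 1/\sqrt{r}$ is asymptotically right and would actually improve the $r^3$ inside $T_0$ to $r^2$; the paper settles for the uniformly valid but weaker $\Omega(1/r)$ bound via Whittle's inequality (Lemma~\ref{lem:whittle_inequality} inside Lemma~\ref{lem-cond-exp-BOTH}). (ii) For $T_{0.5}$ the paper does not argue the corrupted pass-rate is exponentially small; it instead uses the trivial bound $E_0(\nicefrac{r}{2}) \ge \nicefrac{r}{2}$ so that the surviving corrupted samples still contribute positive signal, which is a mildly different route to the same rate.
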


We provide a full proof in Appendix~\ref{sec-app-proof-thm-TS-special}, and discuss the sketch in Section~\ref{sec-filtering-analysis}. Certain observations are in order.
First, we see two regimes of behavior. The error behaves as $\nicefrac{1}{\sqrt{\eta}}$ for large values of $\eta$, and becomes independent of $\eta$ for small values of $\eta$ (note that $\eta$ still needs to large enough to satisfy the requirement of $n \gtrsim \nicefrac{1}{\eta^2}$ for theorem to be valid). Both these regimes exhibit a better dependence on $\eta$ than the unfiltered contrastive learning's rate of $\nicefrac{1}{\eta}$. From the expressions, we note that the switch between the regimes happens at $\eta = \nicefrac{1}{r^2}$ (up to constants). 
Second, this result is stated for the optimal filtering threshold $\theta^*$. The optimal choice of this hyperparameter depends on the problem quantities, particularly $n$ and $\eta$. Understanding this dependence is an interesting direction of research, but outside the scope of the current work. Our analysis considers two fixed choices of $\theta$ that recover each of the regimes. We also present a small experiment on varying the filtering threshold $\theta$ in the vicinity of $\theta^*$ in Appendix~\ref{sec-app-disc-theta-robustness}.
Third, we remark that it remains an interesting research question to study whether an improved dependence on $\eta$ (at least something better than $\nicefrac{1}{\eta}$) can be achieved with a single training loop on the data (as the teacher-based filtering is a two-step training process). 

It is perhaps surprising that the error can become independent of the clean fraction $\eta$, which is better than the oracle rate of $\nicefrac{1}{\sqrt{\eta}}$. This counter intuitive benefit stems from the use of the inner product to compute similarities (Section~\ref{sec-contrastive-learning}) on the corruption model given by Assumption~\ref{assump-corruption}. Owing to this, the distribution of the similarity scores before filtering follows a very typical structure, explained in Figure~\ref{fig:distribution}. Filtering can retain samples from the right tail of the noisy score distribution $\D_0$, and these samples provide useful signal to recover the ground-truth $\U$ parameter.
Finally, we remark on the assumptions needed for this result. Assumption~\ref{assump-noise} makes this setting somewhat special, since \citet{nakada_understanding_2023} allow for a general covariance $\Sxi, \Stxi$ (with bounded norms) on the noise. Handling a more general noise covariance is trivial for unfiltered contrastive learning, but significantly more challenging in the case of filtering.
We argue that Assumption~\ref{assump-noise} preserves the essential characteristics of the problem though, while simplifying the analysis of filtering.
In the following section, we discuss the proof ideas in more detail.

\section{Analysis of the filtering algorithm} \label{sec-filtering-analysis}

In this section, we describe the main ideas behind the proof of Theorem~\ref{thm-TS-special}. In Section~\ref{sec-scores}, we study the distribution of the scalar score used for filtering samples. In Section~\ref{sec-analysis-on-scores}, we use the score characterization to understand filtering by thresholding on the scores.

\subsection{The score used for filtering} \label{sec-scores}

For a sample $(\x, \tx)$, let $S(\x, \tx; \mbA)$ for a matrix $\mbA \in \R^{\dimx \times \dimtx}$ denote the score of the sample, defined in Eq.~\eqref{eq-score}.
This scalar score is meant to capture the quality of the sample $(\x, \tx)$. 
Treating $(\x, \tx)$ as a random i.i.d.\ sample from the model in Section~\ref{sec-data-model}, we characterize the distribution of the score. 
Note that the teacher-based filtering is simply using $\mbA := \GT^\top \tGT$ to score the data (the subscript is used to denote the teacher's parameters).
To understand teacher-based filtering, an intermediate step will be to understand filtering using an `oracle' which has access to the ground-truth problem parameters (refer to Figure~\ref{fig:algorithms:oracle}). 
The oracle scores data using $\mbA := \U \tU^\top$, given in Eq.~\eqref{eq-score-oracle}. Since $\GT^\top \tGT \rightarrow (\nicefrac{\eta}{\rho}) \, \U \tU^\top$ as the number of samples $n \rightarrow \infty$, we expect the teacher filtering to resemble the oracle filtering in the large $n$ regime.
The positive scaling factor of $\nicefrac{\eta}{\rho}$ does not affect threshold-based filtering, as the ordering of samples remains unchanged.
\begin{align}
    &S(\x, \tx; \mbA) := \x^\top \mbA \tx \;, \label{eq-score} \\
    S(\x, \tx; \U \tU^\top) = \left( \U \, \z + \xi \right)^\top \U & \tU^\top \left( \tU \, \tz + \txi \right) = \z^\top \tz + \underbrace{\z^\top \tU^\top \txi + \xi^\top \U \, \tz + \xi^\top \U \tU^\top \txi}_{\text{zero-mean terms involving } (\xi, \txi)} \;. \label{eq-score-oracle}
\end{align}
\begin{remark}[Two versions of oracle] \label{remark-oracle}
    There are two possibilities for an `oracle' in this setup. The first kind has access to the ground-truth problem parameters, which is what we study. The second kind has access to the clean/corrupted status of each sample.
    The second kind can trivially achieve an error dependence of $\nicefrac{1}{\sqrt{\eta n}}$ by choosing to only use the clean samples.
\end{remark}

Recalling Assumption~\ref{assump-corruption}, since $\tz = \z$ for clean samples, the score in Eq.~\eqref{eq-score-oracle} is defined through the independent randomness in $\z, \xi, \txi$. For corrupted samples, it is defined via the independent randomness in all $\z, \tz, \xi, \txi$. We characterize the distribution in both cases, detailed in Appendix~\ref{sec-app-proof-distOFscore}. The main observations are illustrated in Figure~\ref{fig:distribution:theoretical}. $\D_0$ denotes the distribution of the score in the corrupted case, with mean $\mu(\D_0) = 0$ (since $z, \tz$ are independent), and variance $\sigma_0^2 = \dimz \left(1 + \snr^{-1} \right) \left( 1 +  \tsnr^{-1} \right)$. Similarly, $\D_1$ denotes the distribution in the clean case, with mean $\mu(\D_1) = \dimz$ (since $\z = \tz$ leading to a squared term), and variance $\sigma_1^2 =  \dimz + \dimz \left(1 + \snr^{-1} \right) \left( 1 +  \tsnr^{-1} \right)$. 
Note that $\sigma_0^2 \leq \sigma_1^2 \leq 2 \sigma_0^2$. 

Since clean and corrupted data are mixed with $\eta, 1-\eta$ proportions, the score of a generic sample from the population is given by the mixture distribution $\D := \eta \D_1 + (1-\eta) \D_0$. 
Figure~\ref{fig:distribution} provides an illustration of the score distribution $\D$.
Due to i.i.d.\ data, the oracle filtering algorithm's scores are $n$ i.i.d.\ draws from $\D$.
The filtering threshold $\theta$ can be picked in various ways, leading to various algorithms for filtering. The threshold $\theta \rightarrow - \infty$ corresponds to no filtering.

\begin{figure}[htbp]
\centering

\begin{subfigure}{.55\textwidth}
\begin{tikzpicture}[>=stealth,scale=1.8]
\def\mean{1.4}            %
\def\sigmaLeft{0.25}       %
\def\sigmaRight{0.3}     %
\def\ampLeft{1.25}         %
\def\ampRight{0.8}        %

\draw[->] (-1.5,0) -- (3,0) node[right] {};

\draw[red, smooth,domain=-1.1:1.1,samples=200,variable=\x]
  plot({\x},{\ampLeft*exp(-((\x-0)^2)/(2*\sigmaLeft^2))});

\def\sigmaLeftTail{0.25}
\def\sigmaRightTail{0.4}

\draw[blue, smooth, domain=0.3:1.41, samples=100, variable=\x]
  plot({\x},{\ampRight*exp(-((\x-\mean)^2)/(2*\sigmaLeftTail^2))});
\draw[blue, smooth, domain=1.4:2.75, samples=100, variable=\x]
  plot({\x},{\ampRight*exp(-((\x-\mean)^2)/(2*\sigmaRightTail^2))});

\draw[dashed] (0,0) -- (0,\ampLeft);
\node[below] at (0,0) {$0$};
\draw[dashed] (\mean,0) -- (\mean,{\ampRight});
\node[below] at (\mean,0) {$\dimz$};
\draw[brown, solid] (0.45*\mean,-0.25) -- (0.45*\mean,0.25*\ampLeft+0.25*\ampRight);
\node[below] at (0.45*\mean,-0.25) {\textcolor{brown}{$\theta \in \mathbb{R}$}};

\draw[<->,gray] (-1.7*\sigmaLeft,0.2) -- (1.7*\sigmaLeft,0.2);
\node[above] at (0.6*\sigmaLeft,0.15) {\textcolor{gray}{$\sigma_0$}};

\draw[<->,gray] (\mean-1.4*\sigmaLeftTail,0.2) -- ({\mean+1.4*\sigmaRightTail},0.2);
\node[above] at (\mean+0.5*\sigmaRight,0.15) {\textcolor{gray}{$\sigma_1$}};

\node[left]  at (-0.2,0.9*\ampLeft) {\textcolor{black}{$\mathcal{D}_0$}};
\node[left] at (\mean-0.2,0.9*\ampRight) {\textcolor{black}{$\mathcal{D}_1$}};

\node[right]  at (0,1.1*\ampLeft) {\textcolor{black}{$1-\eta$}};
\node[right] at (\mean,1.1*\ampRight) {\textcolor{black}{$\eta$}};
\end{tikzpicture}
\caption{Theoretical (stylized).}
\label{fig:distribution:theoretical}
\end{subfigure}
\hfill
\begin{subfigure}{.4\textwidth}
  \includegraphics[width=0.9\textwidth]{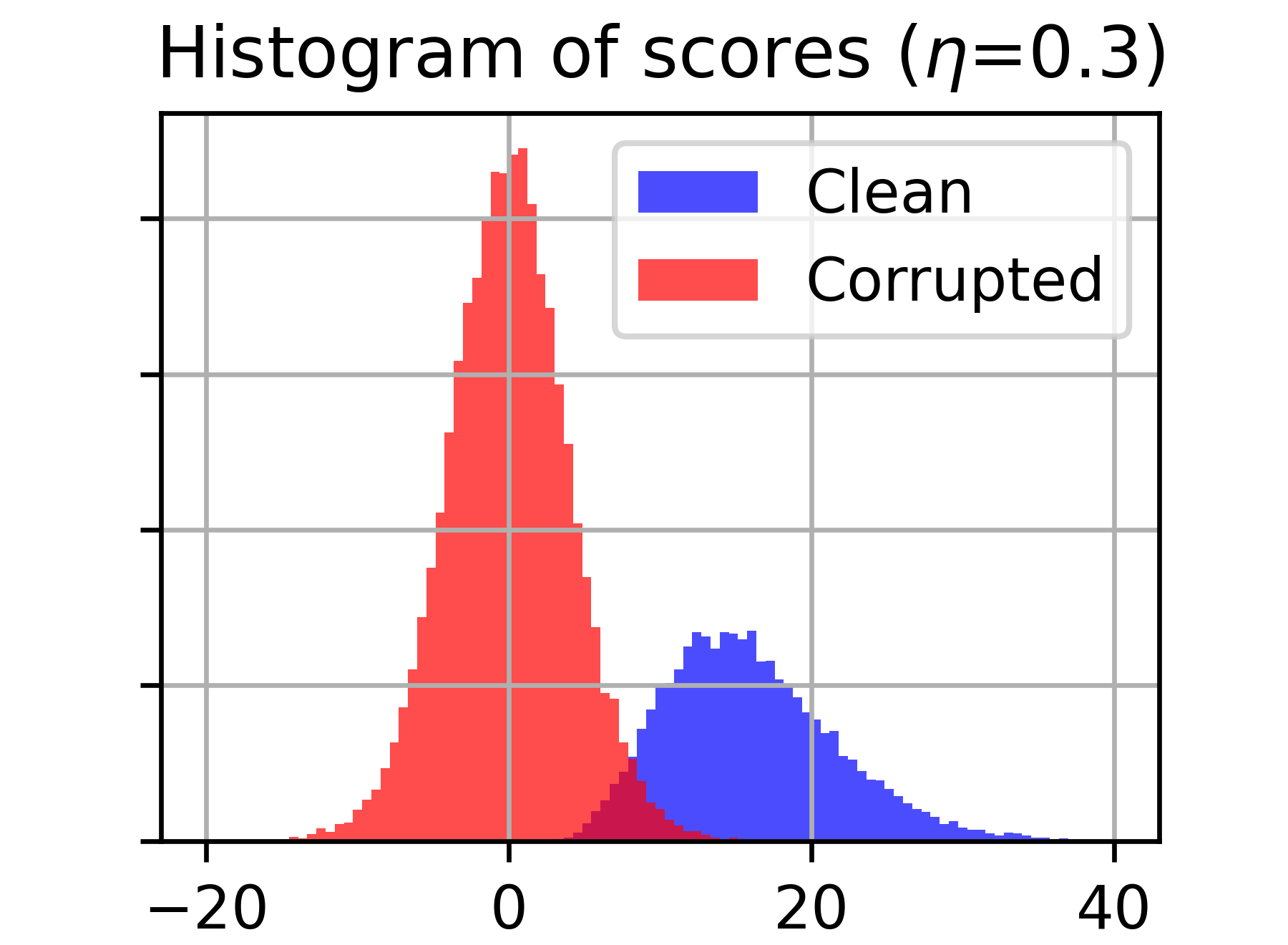}
  \caption{Observed.}
  \label{fig:distribution:observed}
\end{subfigure}
\caption{Distribution of the oracle score $S(\x, \tx; \U \tU^\top)$ is given by the mixture of $\D_0, \D_1$ with weights $(1-\eta), \eta$ respectively. Here $\sigma_0^2, \sigma_1^2$ depend on parameters $\dimz, \snr, \tsnr$. The threshold $\theta \in \R$ is used to filter the datapoints (score $> \theta$ are retained, others are discarded). Subfigure (b) shows the observed histogram in a synthetic setting for $n=50000$ samples with $r = 16, \snr = \tsnr = 10^4$. 
}
\label{fig:distribution}
\end{figure}

\begin{remark} \label{remark-score-separation}
    Since $\sigma_0 \leq \sigma_1 =  \sqrt{2\dimz \left(1 + \snr^{-1} \right) \left( 1 +  \tsnr^{-1} \right)}$, the condition $\snr, \tsnr = \omega(\sqrt{r})$ ensures that $\nicefrac{r}{\sigma_1} = \omega(1)$, leading to a separation between the modes of $\D_0$ and $\D_1$. In this case, the clean and corrupted data become well-separated via the oracle score $S(\x, \tx; \U \tU^\top)$.
\end{remark}

\subsection{Analysis of thresholding on the score distribution} \label{sec-analysis-on-scores}

In this section, we discuss an analysis for the oracle filtering algorithm (Figure~\ref{fig:algorithms:oracle}), which captures the main conceptual ideas of data filtering in the setup of Section~\ref{sec-setup}. The proof for the teacher-based filtering (Theorem~\ref{thm-TS-special}) is given in Appendix~\ref{sec-app-proof-thm-TS-special}, which uses the ideas from this section, along with the operator norm concentration in Corollary~\ref{cor-nofilt} to bound the deviation caused by the difference between the teacher scores and the oracle scores. 
Given a dataset $\{ (\x_i, \tx_i) \}_{i=1}^n$, let $n_{\sel}(\theta)$ denote the number of samples retained after oracle filtering, and let $I_{\sel}(\theta) \subseteq [n]$ denote the indices of the samples selected, defined by the condition $i \in I_{\sel}(\theta) \iff S(\x_i, \tx_i; \U \tU^\top) > \theta$. 
Analogous to Eq.~\eqref{eq-cor1sketch-cross-cov}, we define $\Sn(\theta)$ to be the empirical cross-covariance of the filtered data, given by Eq.~\eqref{eq-Sn-theta}. 
\begin{align} 
    \Sn(\theta) := \frac{1}{n_{\sel}(\theta)-1} &\sum_{i \in I_\sel(\theta)} \left( \x_i - \overline{\x} \right) \left( \tx_i - \overline{\tx} \right)^\top \;, \;\;\;\;\; \bSn(\theta) := \frac{1}{n P(\theta)} \sum_{i \in I_\sel(\theta)} \x_i \tx_i^\top \;, \label{eq-Sn-theta} \\
    \S(\theta) &:= \E \Big[ \bSn(\theta) \Big] = \E[ \, \x \tx^\top \, | \, S(\x, \tx; \U \tU^\top) > \theta \, ] \;. \label{eq-S-theta}
\end{align}
Observe that similar to Eq.~\eqref{eq-cor1sketch-argmin}, the closed-form solution of the optimization holds even on the filtered dataset. The step that changes is the concentration, namely, the characterization of how $\Sn(\theta)$ concentrates as $n$ increases, according to the distributions of the involved random quantities. %
In the following, we argue that $\Sn(\theta)$ concentrates to $\S(\theta)$, given by Eq.~\eqref{eq-S-theta}, and characterize the behavior of $\S(\theta)$ to recover a guarantee akin to Theorem~\ref{thm-TS-special}. 

\textbf{Notation}. We set up some useful notation on the score distributions $\D_0, \D_1$ from Figure~\ref{fig:distribution:theoretical}. For any $a \in \R$, let $P_0(a) = \Pr_{Z \sim \D_0}(Z > a)$ and $P_1(a) = \Pr_{Z \sim \D_1}(Z > a)$ denote the probabilities of the upper tails of the corrupted and clean parts respectively, and let $P(a) = \Pr_{Z \sim \D}(Z > a) = \eta P_1(a) + (1-\eta) P_0(a)$ denote the probability of selection from the mixture distribution. Similarly for expectations, define $E_0(a) := \E_{Z \sim \D_0} \left[ Z | Z > a \right]$, $E_1(a) := \E_{Z \sim \D_1} \left[ Z | Z > a \right]$.

\textbf{Concentration of $\Sn(\theta)$ to $\S(\theta)$.} We claim that $\Sn(\theta) \approx \bSn(\theta)$ by using two approximations. First, the un-centered version in $\bSn(\theta)$ approximates the centered version in $\Sn(\theta)$. 
Second, although $n_\sel(\theta)$ is a random quantity, it concentrates around $n P(\theta)$. We formally bound the error due to both these approximations in the full proof.
Since the filtering threshold $\theta$ is chosen independent of the samples being filtered, the selected samples satisfy the \textit{i.i.d property under the conditional law} of the score being above $\theta$. This allows us to show that the approximate version, $\bSn(\theta)$, concentrates around its expectation, $\S(\theta)$, by bounding the spectral norm of the difference via a Matrix-Bernstein type inequality. Overall, we get
\begin{equation} \label{eq-thm1sketch-concentration}
    \textit{w.p. } 1 - \exp(-\Omega(\max\{ \dimx, \dimtx \})) \;, \; \; \; \left\| \Sn(\theta) - \S(\theta) \right\| \lesssim \sqrt{\frac{\max\{ \dimx, \dimtx \}}{n P(\theta)}} \;.
\end{equation}
\textbf{Analysis of $\S(\theta)$ and $P(\theta)$.} 
Simplifying $\S(\theta)$ reveals that it is simply a scaled version of $\U \tU^\top$, with the scaling coefficient depending on $\theta$ described by the conditional expectations $E_0(\theta)$ and $E_1(\theta)$. Concretely, $\S(\theta) = \nicefrac{1}{r} \left( \eta \, E_1(\theta) + (1-\eta) \, E_0(\theta) \right) \U \tU^\top$.
Owing to this, the application of a Davis-Kahan result on Eq.~\eqref{eq-thm1sketch-concentration} will dictate the guarantee of recovering $\U, \tU$ for the filtering algorithm. The error behaves as:
\begin{align*}
    \ERR \propto \frac{r}{ \left( \eta \, E_1(\theta) + (1-\eta) \, E_0(\theta) \right)} \, \frac{1}{\sqrt{ \eta \, P_1(\theta) + (1-\eta) \, P_0(\theta) }} \, \frac{1}{\sqrt{n}} \;.
\end{align*}
The behavior of the functions $E_0(\theta), E_1(\theta)$ and $P_0(\theta), P_1(\theta)$ precisely quantify this rate. As a sanity check, setting $\theta = - \infty$ recovers the $\nicefrac{1}{\eta}$ behavior of the unfiltered contrastive learning, as $E_1(-\infty) = r, E_0(-\infty) = 0$ and $P_1(-\infty) = 1, P_0 (-\infty) = 1$. 
Since $E_0(\theta), E_1(\theta)$ are increasing functions in $\theta$, whereas $P_0(\theta), P_1(\theta)$ are decreasing, we observe a tradeoff. 
A larger threshold $\theta$ results in larger conditional expectations $E_0(\theta), E_1(\theta)$, but smaller probabilities of selection $P_0(\theta), P_1(\theta)$. In the Appendix, we formally characterize this behavior, involving calculations on the conditional expectations and probabilities of the Gaussian distribution. Here, we discuss the two choices of $\theta$ that recover the two regimes of the filtering behavior. 
The threshold $\theta = 0$ results in $E_1(0) \geq r$, $E_0(0) \geq \nicefrac{2}{\pi}$ and $P_1(0) \geq 0.5$, $P_0(0) = 0.5$, recovering the independent of $\eta$ regime. And the threshold $\theta = \nicefrac{r}{2}$ results in $E_1(\nicefrac{r}{2}) \geq r$, $E_0(\nicefrac{r}{2}) \geq \nicefrac{r}{2}$ (using a trivial lower bound for the conditional expectation), and $P_1(\nicefrac{r}{2}) \geq 0.5$ (but $P_0(\nicefrac{r}{2})$ is small), recovering the $\nicefrac{1}{\sqrt{\eta}}$ regime.
The optimal $\theta^*$ will achieve a rate better than the above two special points, hence the upper bound on the error is given by the $\min$ of these two regimes, recovering the upper bound in Theorem~\ref{thm-TS-special}.

\section{Experiments} \label{sec-expts}

\begin{figure}[ht]
    \centering
    \begin{subfigure}{.49\textwidth}
    \centering
    \includegraphics[width=0.99\linewidth]{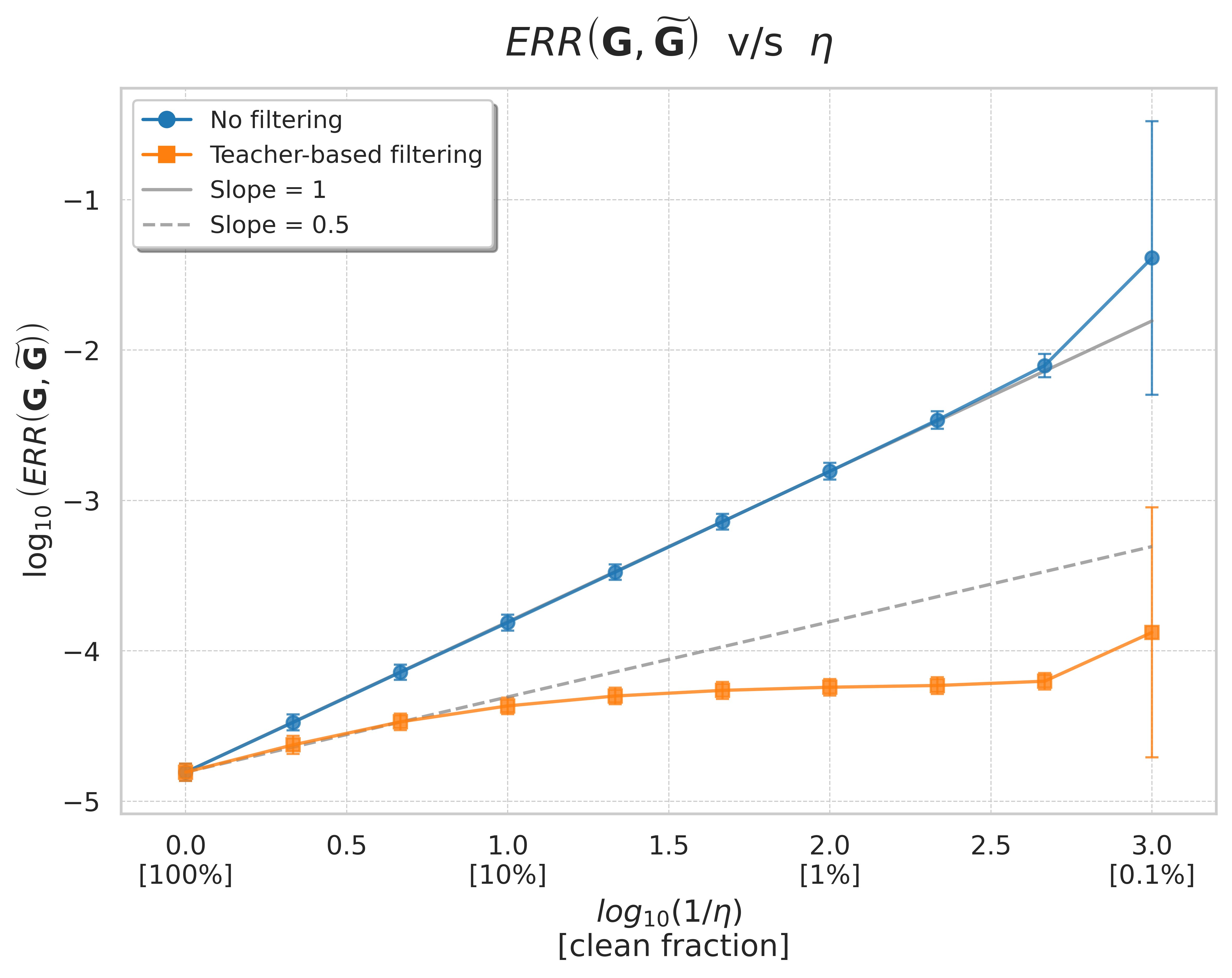}
    \caption{Synthetic experiment.}
    \label{fig:experiment:synthetic}
    \end{subfigure}
    \hfill
    \begin{subfigure}{.49\textwidth}
      \centering
      \includegraphics[width=0.99\linewidth]{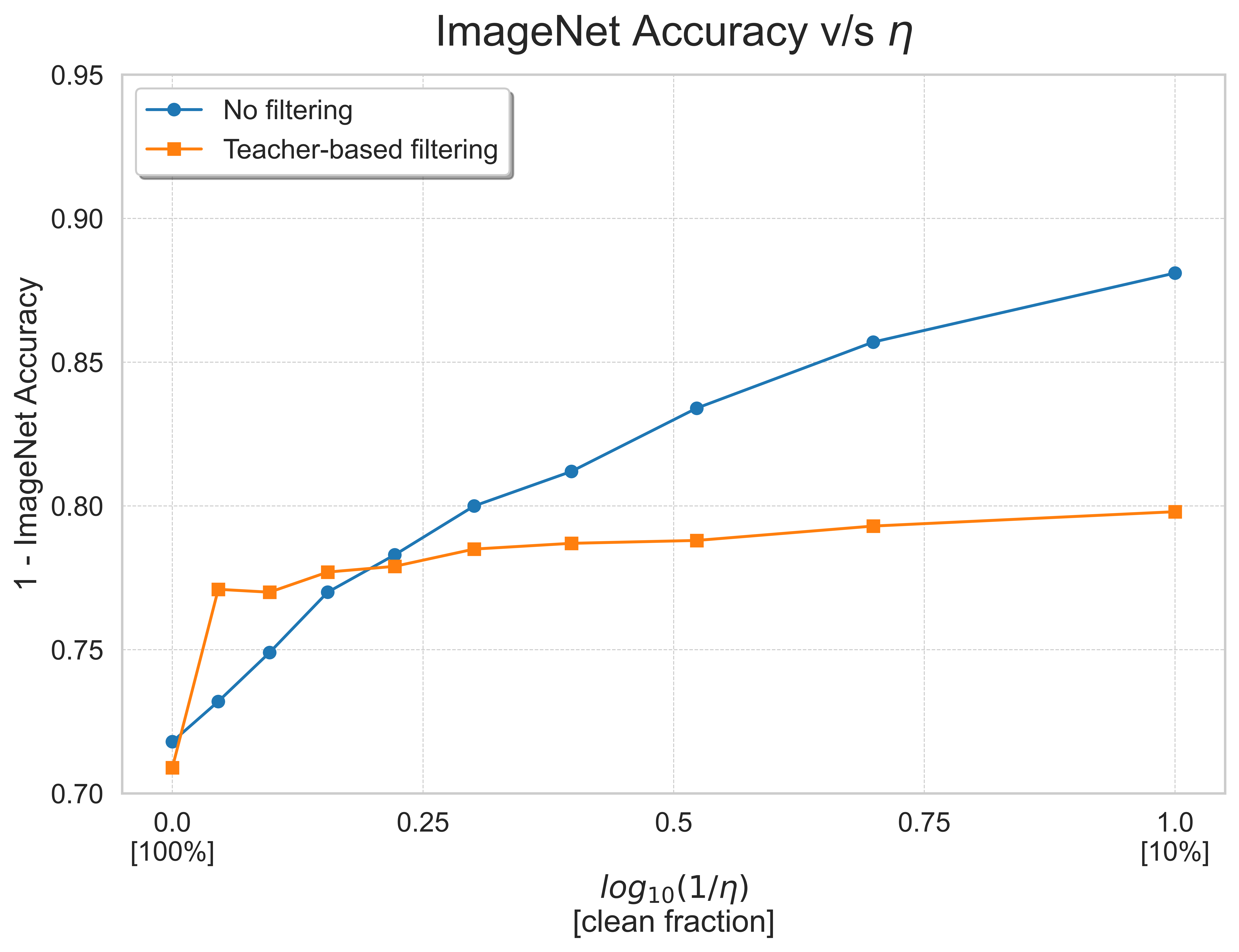}
      \caption{Real experiment, \citet[Figure 4]{fang2023data}.}
      \label{fig:experiment:real}
    \end{subfigure}
    \caption{\textbf{(a).} Observed dependence of $\ERR(\G, \tG)$ on $\eta$ for a synthetic experiment. %
    The error of the unfiltered contrastive learning follows a $\nicefrac{1}{\eta}$ dependence, but deviates for small $\eta$ since the requirement of $n \gtrsim \nicefrac{1}{\eta^2}$ in Corollary~\ref{cor-nofilt} gets violated. %
    The error of the filtering algorithm follows a $\nicefrac{1}{\sqrt{\eta}}$ dependence in the large $\eta$ (or small $\nicefrac{1}{\eta}$) regime, and an independent of $\eta$ dependence in the small $\eta$ regime. Going beyond to even smaller $\eta$ causes deviations since Theorem~\ref{thm-TS-special} also requires $n \gtrsim \nicefrac{1}{\eta^2}$.
    The teacher-based filtering is with the threshold $\theta = 0$.
    \textbf{(b).} A similar trend on real data observed by \citet{fang2023data}. The y-axis shows $1-\text{Accuracy}$, which is different than the error metric in (a). However, we note that the qualitative trend of the orange line having a smaller slope than the blue line still holds.
    Numbers from \citet{fang2023data} are reproduced with permission.
    }
    \label{fig:experiment}
\end{figure}

In this section, we validate our theoretical results with a synthetic setup. With parameters $\dimx=10, \dimtx=8, \dimz=4$, and SNR $\snr=\tsnr=10^4$, and with randomly generated $\U, \tU$, we generate $n=10M$ samples according to the model in Section~\ref{sec-data-model}, and vary the clean fraction $\eta$. We experiment over $10$ values of $\eta$ geometrically decreasing from $1$ to $10^{-3}$. 
This experiment was run on a cluster of 50 CPUs with 500G memory, and required less than 10 minutes. 
Figure~\ref{fig:experiment:synthetic} shows the result and discusses the observations, which validate Corollary~\ref{cor-nofilt} and Theorem~\ref{thm-TS-special}. 
To extend these observations to real settings, the main limitations are posed by the modeling assumptions in Section~\ref{sec-setup}, and the change in the evaluation metric from subspace recovery to actual downstream accuracy.
Despite the limitations, Figure~\ref{fig:experiment:real} shows evidence that the qualitative observation drawn from the theory holds with real image-text data too, namely, the downstream model performance on reducing the clean fraction $\eta$ degrades more slowly with data filtering.
As the clean fraction decreases, both approaches exhibit degradation in performance. However, the rate of degradation, represented by the slope of the lines, is substantially smaller in the filtered setting.

\section{Conclusion and Broader Impacts} \label{sec-conclusion}

This paper presents a theoretical investigation into teacher-based data filtering for multimodal contrastive learning with stochastically corrupted data. We rigorously establish its benefit, demonstrating that filtering improves the error dependence on the clean data fraction, $\eta$, from $\nicefrac{1}{\eta}$ (no filtering) to $\nicefrac{1}{\sqrt{\eta}}$ in the large $\eta$ regime, and perhaps surprisingly, to independent of $\eta$ in the small $\eta$ regime. 
The latter finding suggests that teacher-based filtering can be particularly beneficial when data quality is low, achieving performance independent of the initial clean fraction. Our results provide a formal basis for the empirical success of teacher-based data filtering. 
The main limitations are posed by the assumption of linearity in Section~\ref{sec-setup}, and the model of stochastic corruptions in Assumption~\ref{assump-corruption}.
Future work could explore the optimal selection of filtering thresholds and investigate whether similar gains can be achieved with one-step filtering algorithms.

Our contributions are largely on the theoretical understanding of data filtering, and its potential benefits. At a high-level, effective data filtering can reduce the compute cost needed to train models, which has positive potential impacts through more judicious use of energy resources. 
On the other hand, data filtering can exacerbate the biases present in a dataset by selecting certain subpopulations more than the others. If this goes unchecked, it has potential negative impacts to society.

\section*{Acknowledgements}
We are grateful to Gavin Brown for insightful discussions and help with refining the statement of the main result.
SSD acknowledges the support of NSF DMS 2134106,  NSF IIS 2143493, the Sloan Fellowship, and the AI2050 program at Schmidt Sciences.
SO acknowledges the support of NSF grants no.~2112471, 2229876, and 2505865. 

\bibliographystyle{abbrvnat}
\bibliography{citations}

\newpage
\appendix
\section{Additional Illustrations} \label{sec-illustrations}

In this section, we provide some useful illustrations. Figure~\ref{fig:data-model} illustrates the corruption model described in Assumption~\ref{assump-corruption}. Figure~\ref{fig:learn-maps} illustrates the linear maps $\G, \tG$ used to generate the embeddings from observed data (according to the model in Fig~\ref{fig:data-model}). Figure~\ref{fig:joint-Gaussian} accompanies Remark~\ref{remark-joint-Gaussian}.
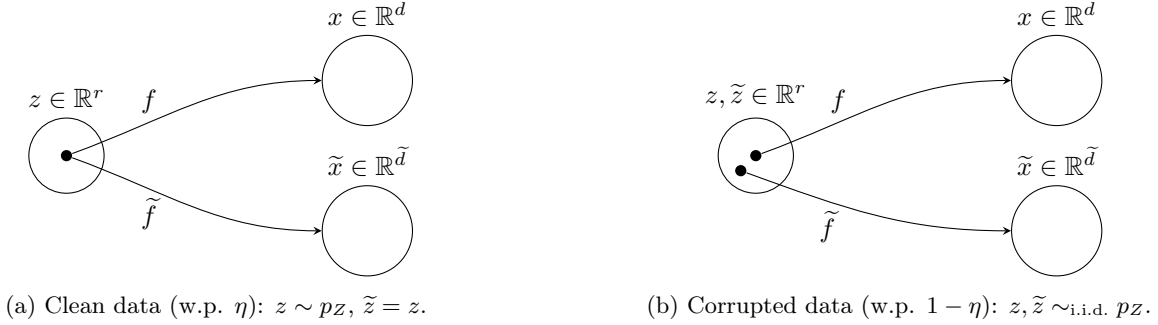
\begin{figure}[ht]
    \centering
    \begin{subfigure}[b]{0.45\textwidth}
        \centering
        \begin{tikzpicture}[>=stealth, on grid, auto]
            \node[draw,circle,minimum size=1cm] (latent1) at (0,0){};
            \node[above] at (latent1.north) {$z \in \mathbb{R}^r$};
            
            \node[draw,circle,minimum size=1.2cm] (data1) at (4,1){};
            \node[above] at (data1.north) {$x \in \mathbb{R}^d$};
            
            \node[draw,circle,minimum size=1.2cm] (data2) at (4,-1){};
            \node[above] at (data2.north) {$\widetilde{x} \in \mathbb{R}^{\widetilde{d}}$};
            
            \node[fill,circle,inner sep=1.5pt] (x1) at (0,0) {};
            
            \draw[->] (x1) to[out=20,in=180] node[above,pos=0.3] {$f$} (data1.west);
            \draw[->] (x1) to[out=-20,in=180] node[below,pos=0.3] {$\widetilde{f}$} (data2.west);
        \end{tikzpicture}
        \caption{Clean data (w.p. $\eta$): $z \sim p_Z$, $\widetilde{z} = z$.}
    \end{subfigure}
    \hfill
    \begin{subfigure}[b]{0.45\textwidth}
        \centering
        \begin{tikzpicture}[>=stealth, on grid, auto]
            \node[draw,circle,minimum size=1cm] (latent2) at (0,0){};
            \node[above] at (latent2.north) {$z, \widetilde{z} \in \mathbb{R}^r$};
            
            \node[draw,circle,minimum size=1.2cm] (data3) at (4,1){};
            \node[above] at (data3.north) {$x \in \mathbb{R}^d$};
            
            \node[draw,circle,minimum size=1.2cm] (data4) at (4,-1){};
            \node[above] at (data4.north) {$\widetilde{x} \in \mathbb{R}^{\widetilde{d}}$};
            
            \node[fill,circle,inner sep=1.5pt] (y1) at (0,0) {};
            \node[fill,circle,inner sep=1.5pt] (y2) at (-0.2,-0.2) {};
            
            \draw[->] (y1) to[out=20,in=180] node[above,pos=0.3] {$f$} (data3.west);
            \draw[->] (y2) to[out=-20,in=180] node[below,pos=0.3] {$\widetilde{f}$} (data4.west);
        \end{tikzpicture}
        \caption{Corrupted data (w.p. $1-\eta$): $z, \widetilde{z} \sim_{\text{i.i.d.}} p_Z$.}
    \end{subfigure}
    \caption{Model for stochastic corruptions. In this work, the forward maps $f, \widetilde{f}$ are linear (refer to Eq.~\eqref{eq-data-model}) and the latent distributions are Gaussians.}
    \label{fig:data-model}
\end{figure}
\begin{figure}[ht]
    \centering
    \begin{subfigure}[b]{0.45\textwidth}
      \centering
      \begin{tikzpicture}[>=stealth, on grid, auto]
        \node[draw,circle,minimum size=1.2cm] (data1) at (-4,1) {};
        \node[above] at (data1.north) {$x \in \mathbb{R}^d$};
        
        \node[draw,circle,minimum size=1.2cm] (data2) at (-4,-1) {};
        \node[above] at (data2.north) {$\widetilde{x} \in \mathbb{R}^{\widetilde{d}}$};
    
        \node[fill,circle,inner sep=1.5pt] (point1) at ($(-4,1)+(0.2,0)$) {};
        \node[fill,circle,inner sep=1.5pt] (point2) at ($(-4,-1)+(0,-0.2)$) {};
    
        \node[draw,circle,minimum size=1cm] (latent) at (0,0) {};
        \node[above] at (latent.north) {$\mathbb{R}^r$};
    
        \node[fill,circle,inner sep=1.5pt] (latent_point1) at ($ (latent) + (-0.2,0.2) $) {};
        \node[fill,circle,inner sep=1.5pt] (latent_point2) at ($ (latent) + (-0.2,-0.2) $) {};
    
        \draw[->] (point1) -- node[above,pos=0.3] {$\mathbf{G}$} (latent_point1);
        \draw[->] (point2) -- node[below,pos=0.3] {$\widetilde{\mathbf{G}}$} (latent_point2);
      \end{tikzpicture}
    \end{subfigure}
    \caption{On seeing multimodal data $(\x, \tx)$, linear maps $\mathbf{G}, \widetilde{\mathbf{G}}$ (learnable parameters) create the embeddings that lie in $\R^{\dimz}$ (the knowledge of $\dimz$, the true latent dimension, is assumed). The similarity is measured with the inner product $\langle \mathbf{G} x, \widetilde{\mathbf{G}} \widetilde{x} \rangle$.}
    \label{fig:learn-maps}
\end{figure}
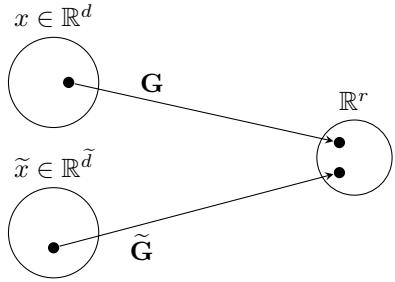
\begin{figure}[ht]
    \centering
    \begin{tikzpicture}[scale=0.7]
      \draw[->] (-3,0) -- (3,0) node[right] {$x$};
      \draw[->] (0,-2) -- (0,2) node[above] {$\widetilde x$};
      \foreach \ra/\rb in {2/1, 1.5/0.75, 1/0.5} {
        \draw[gray, solid, thin] (0,0) ellipse (\ra cm and \rb cm);
      }
      \begin{scope}[rotate=45]
        \foreach \ra/\rb in {2/1, 1.5/0.75, 1/0.5} {
          \draw[blue!50!white, solid, thin] (0,0) ellipse (\ra cm and \rb cm);
        }
      \end{scope}
      \begin{scope}[shift={(1.5,-1.5)}]
        \draw[gray, solid, thin] (0,0) -- ++(0.5,0) node[right] {independent: $\N(0, \Sigma_0)$};
        \draw[blue!50!white, solid, thin] (0,-0.5) -- ++(0.5,0) node[right] {correlated: $\N(0, \Sigma_1)$};
      \end{scope}
    \end{tikzpicture}
    \caption{Illustration of the joint distribution of $(\x, \tx)$. The overall distribution is a mixture of two zero mean Gaussians: the independent case (w.p. $1-\eta$) and the correlated case (w.p. $\eta$).}
    \label{fig:joint-Gaussian}
\end{figure}
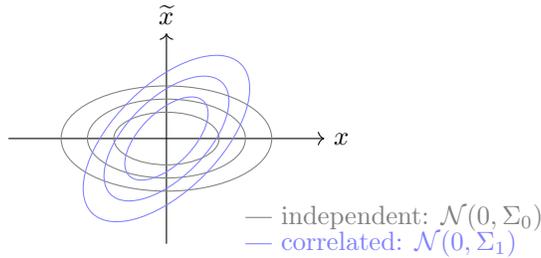
\begin{remark} \label{remark-joint-Gaussian}
    The distribution of $(\x, \tx) \in \R^{\dimx + \dimtx}$ from Section~\ref{sec-data-model} is a mixture of two zero-mean Gaussians. With weight $\eta$, the covariance matrix is $\Sigma_1$ (for $c=1$, i.e. the clean case). With weight $1-\eta$, the covariance is $\Sigma_0$ (for $c=0$). Figure~\ref{fig:joint-Gaussian} provides an illustration.
    \begin{equation*}
      \Sigma_1 = \left[\begin{array}{ c c }
        \U \U^\top + \snr^{-1}\Id & \U \tU^\top  \\
        \tU \U^\top   & \tU \tU^\top + \tsnr^{-1}\Itd
      \end{array}\right] , \;
      \Sigma_0 = \left[\begin{array}{ c c }
        \U \U^\top + \snr^{-1}\Id & \mathbf{0}  \\
        \mathbf{0}   & \tU \tU^\top + \tsnr^{-1}\Itd
      \end{array}\right]. 
    \end{equation*}
\end{remark}

\section{Background}

This section covers some useful background concepts.

\subsection{Measuring the distance between subspaces} \label{sec-app-background-sinTheta}

The concept of principal angles provides a geometrically intuitive way to measure the closeness between two subspaces. Let $\subspace{X}$ and $\subspace{Y}$ be two $r$-dimensional subspaces within a larger Euclidean space $\mathbb{R}^d$. There exist $r$ principal angles $0 \le \theta_1 \le \theta_2 \le \dots \le \theta_r \le \pi/2$ that describe the relative orientation of these subspaces.
\begin{itemize}
    \item $\theta_1$ represents the \textit{smallest} possible angle between any two unit vectors $x \in \subspace{X}$ and $y \in \subspace{Y}$.
    \item Subsequent angles $\theta_k$ capture the minimum angles within directions orthogonal to those defining the previous angles $\theta_1, \dots, \theta_{k-1}$.
    \item The cosines $\cos(\theta_i)$ measure the alignment (1 means aligned, 0 means orthogonal within that principal direction), while the sines $\sin(\theta_i)$ measure the separation or angle.
\end{itemize}

To aggregate this information into a single distance metric, we often use the frobenius norm of the sine of the principal angles, denoted $\|\sin\Theta(\subspace{X}, \subspace{Y})\|_F$. It is defined as
$$
\|\sin\Theta(\subspace{X}, \subspace{Y})\|_F = \sqrt{\sum_{i=1}^r \sin^2(\theta_i)} \;.
$$
This metric provides an overall measure of the difference between the subspaces. It's zero if and only if $\subspace{X} = \subspace{Y}$ (since all $\theta_i=0$), and it increases as the subspaces diverge. 

Computing this metric relies on matrix operations involving orthonormal bases for the subspaces. Let $\mat{X} \in \mathbb{R}^{d \times r}$ be a matrix whose columns form an orthonormal basis for $\subspace{X}$ (so $\mat{X}^\top \mat{X} = \mat{I}_r$). Similarly, let $\mat{Y} \in \mathbb{R}^{d \times r}$ be a matrix with orthonormal columns forming a basis for $\subspace{Y}$.
The distance metric $\|\sin\Theta(\subspace{X}, \subspace{Y})\|_F$ can be computed using $\mat{X}$ and $\mat{Y}$ via the following formula
$$
\|\sin\Theta(\subspace{X}, \subspace{Y})\|_F = \fnorm{\mat{X}_\perp^\top \mat{Y}} \;.
$$
Here, $\mat{X}_\perp$ is \textit{any} $d \times (d-r)$ matrix such that its columns form an orthonormal basis for the orthogonal complement of $\subspace{X}$, denoted $\subspace{X}^\perp$. This means that the combined matrix $[\mat{X} \ \mat{X}_\perp]$ must be a $d \times d$ orthogonal matrix. Notationally, we often just write $\|\sin\Theta(\mat{X}, \mat{Y})\|_F$ instead of using $\subspace{X}, \subspace{Y}$.

\subsection{Optimal unimodal estimation rates in the spiked covariance model} \label{sec-app-background-ratesSpikedCov}

Eq.~\eqref{eq-data-model} uses the well-known spiked covariance model for each of the two modalities, originally introduced by \citet{johnstoneSPIKED} and well-studied in the literature \cite{BAI2012167, zhang2021heteroskedasticpcaalgorithmoptimality, cai2016optimalestimationrankdetection}. \citet{cai2016optimalestimationrankdetection} establish optimal (minimax) estimation rates for the covariance matrix (i.e.\ $\U \U^\top + \gamma^{-1} \Id$) and the principal subspace (i.e.\ $\U$) in a more general \emph{sparse} spiked covariance model. In particular, \cite[Eq.\ (7)]{cai2016optimalestimationrankdetection} describes the minimax rate for covariance estimation, and \cite[Eq.\ (9)]{cai2016optimalestimationrankdetection} describes the minimax rate for subspace estimation. We use the latter result to get Eq.~\eqref{eq:spiked}. Since the problem of subspace estimation is invariant to scaling, we instantiate \cite[Eq.\ (9)]{cai2016optimalestimationrankdetection} for the estimation of data with covariance $\gamma \U\U^\top + \Id$ (since $\sigma=1$ is assumed in \cite[Eq.\ (1)]{cai2016optimalestimationrankdetection} to fix the problem scaling). With this, the paramaters map as $\lambda = \gamma$, $p = d$ and $k = d$ (since our model is not sparse). This establishes a rate (up to constants) of $\sqrt{\frac{d \gamma^{-1}(1+\gamma^{-1})}{n}}$ for the estimation in $2$-norm. An additional factor of $\sqrt{r}$ appears since we use the Frobenius-norm (i.e.\ the chordal distance in Definition~\ref{def-error}), and Eq.~\eqref{eq:spiked} follows.

\subsection{Discussion about alternative error metrics} \label{sec-app-metric-justification}

The metric in Definition~\ref{def-error} measures the estimation quality in each modality separately, i.e., quality of estimation of $\U$ and $\tU$, and is used in prior work \cite{nakada_understanding_2023} for the multimodal estimation problem of interest. However, this metric does not capture the alignment between the learned representations of the two modalities. %
This can be undesirable because:
\begin{itemize}
    \item Practically, Def~\ref{def-error} only captures the requirement when the downstream task uses the individual modality encoders separately. For example, using the image encoder from CLIP for a vision task like classification or segmentation (for instance, via finetuning or using a linear-head on top of the vision encoder as reported in the original CLIP paper \citet{radford2021learning}). However, many downstream applications require encoders of both modalities, the most classic example being zero-shot image classification, where the image and text encoders work together and not in isolation. The metric in Definition~\ref{def-error} does not evaluate the quality of the learnt solution to reflect this.
    \item Theoretically, the metric in Def~\ref{def-error} is (left) rotation-invariant. That is, recovering $\G = \U$ or $\G = \mathbf{Q}\U$ (for an orthonormal matrix $\mathbf{Q}$) results in the same metric value. But the transformation of $\G = \mathbf{Q} \U$ does not preserve the value of $\G^\top \tG$. This is precisely the reason why the metric in Def~\ref{def-error} does not suffice to measure the quality of cross-modality estimation.
    Mathematically, the subject of Canonical Correlation Analysis (CCA) is highly related to this problem of alignment. In particular, Def~\ref{def-error} is using the PCA metric for a CCA-like problem. The CCA metric (introduced in Def~\ref{def-error-ccd}) in particular captures the cross-modality linkage, and is perhaps desirable in this problem. 
\end{itemize}

Figure~\ref{fig:rough-metric-revisit} shows how the metric in Definition~\ref{def-error} (dubbed $\ERRssd$ for `separate sine distance') is `solved' by applying unimodal PCA to the observed data, but the CCA metric (dubbed $\ERRccd$ for `cross-correlation distance') requires the linkage of data across modalities.
\begin{figure}[ht]
    \centering
    \begin{subfigure}{.49\textwidth}
    \centering
    \includegraphics[width=0.6\linewidth]{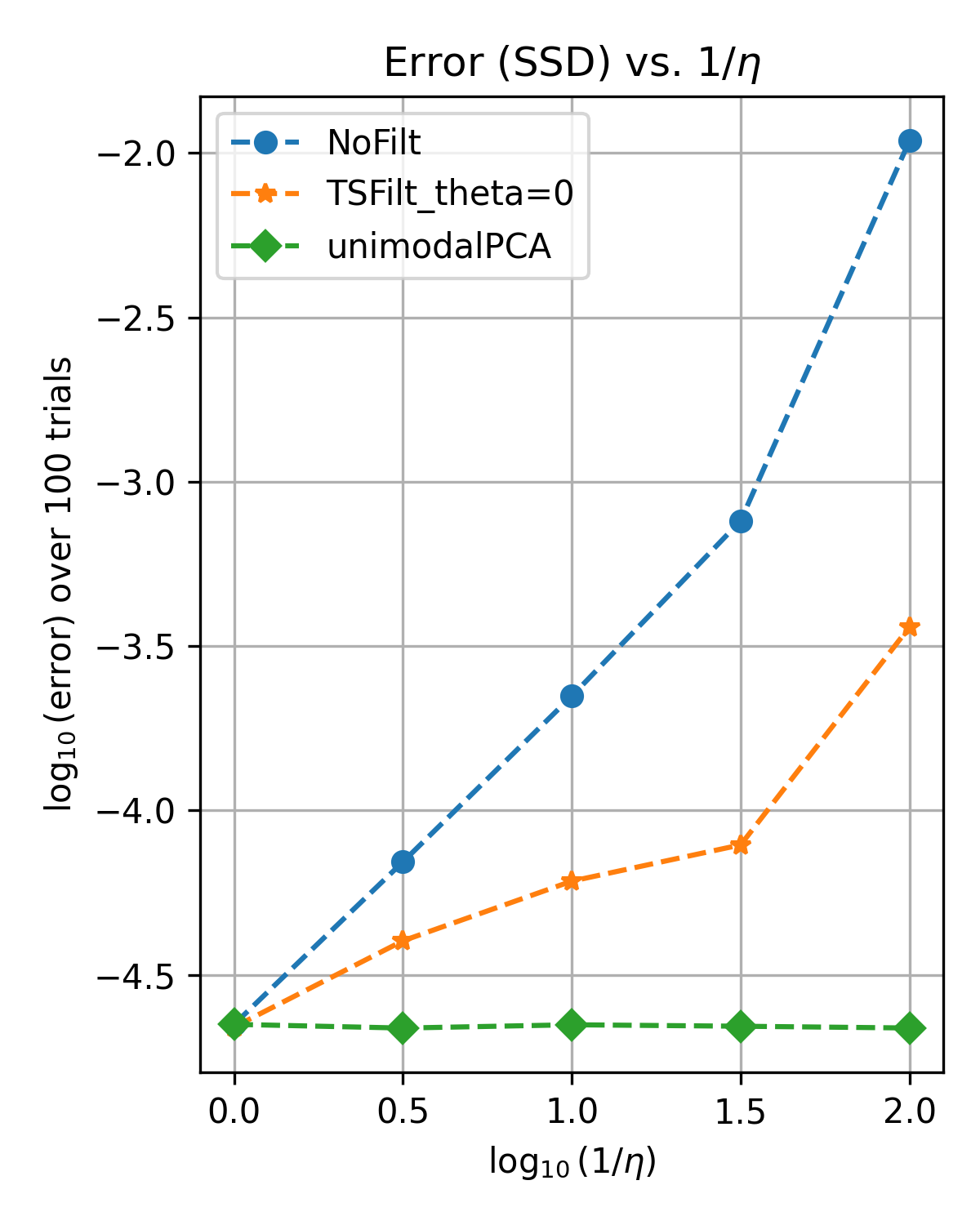}
    \caption{$\ERRssd$ from Def~\ref{def-error}, i.e. PCA metric, \cite[Eq.\ (8)]{cai2016optimalestimationrankdetection}.}
    \label{fig:rough-metric-revisit:PCA}
    \end{subfigure}
    \hfill
    \begin{subfigure}{.49\textwidth}
      \centering
      \includegraphics[width=0.6\linewidth]{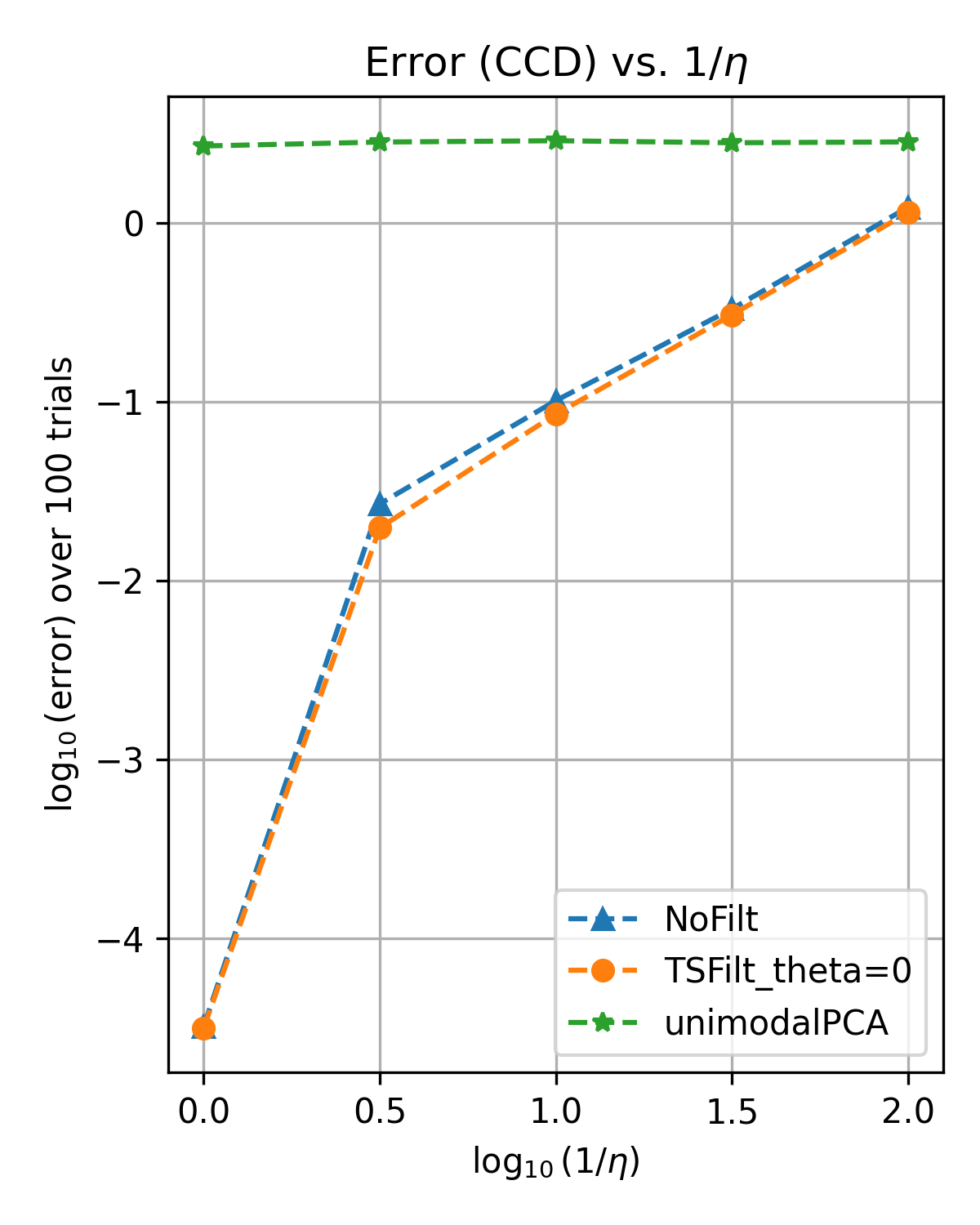}
      \caption{$\ERRccd$ from Def~\ref{def-error-ccd}, i.e. CCA metric, \cite[Eq.\ (6)]{Gao_2015}.}
      \label{fig:rough-metric-revisit:CCA}
    \end{subfigure}
    \caption{Comparison of error metrics from Defs~\ref{def-error} and~\ref{def-error-ccd}.}
    \label{fig:rough-metric-revisit}
\end{figure}

We will now introduce certain new metrics that capture different notions of parameter recovery (of $\U, \tU$) in our problem. For ease of naming and reference, we will call the metric in Def~\ref{def-error} as $\ERRssd$, where SSD refers to \textit{separate sine distance}.

Motivated by the literature on CCA, we reuse the following notion of error from \citet[Eq. (6)]{Gao_2015}, which we call the \textit{cross-correlation distance} (CCD).
\begin{definition} \label{def-error-ccd}
    For a learnt solution $\G, \tG$, the CCD metric is defined as
    \begin{equation} \label{eq-define-error-ccd}
        \ERRccd(\G, \tG) := \left\| \underbrace{\U \, \tU^\top}_{\dimx \times \dimtx} \,-\, \underbrace{\lsv_\dimz(\G^\top \tG)}_{\hU} \, \underbrace{\rsv_\dimz(\G^\top \tG)^\top}_{\htU^\top} \right\|_F  \;.
    \end{equation}
\end{definition}

Similarly, Definition~\ref{def-error-def3} measures a different but related notion of subspace recovery.
\begin{definition} \label{def-error-def3}
    For a learnt solution $\G, \tG$, the error is
    \begin{equation} \label{eq-define-error-def3}
        \ERR(\G, \tG) := \left\| \underbrace{\U^\top  \underbrace{\lsv_\dimz(\G^\top \tG)}_{\hU}}_{\dimz \times \dimz} \,-\, \tU^\top   \underbrace{\rsv_\dimz(\G^\top \tG)}_{\htU} \right\|_F  \;.
    \end{equation}
\end{definition}

Motivated by downstream use case, Definition~\ref{def-error-lpe} measures: \textit{For a fresh sample under the clean marginal of the underlying distribution, with \textbf{no} noise, what is the error in recovering the latent representation?} We call this metric the \emph{label prediction error} (LPE).
\begin{definition} \label{def-error-lpe}
    For a learnt solution $\G, \tG$, for $\x := \U \z$ and $\tx := \tU \z$, the LPE metric is
    \begin{equation} \label{eq-define-error-lpe}
        \ERRlpe(\G, \tG) := \sqrt{ \E_{z \sim \N(0, \Idz)} \left[ \abs{ \x^\top \underbrace{\lsv_\dimz(\G^\top \tG)}_{\hU} \, \underbrace{\rsv_\dimz(\G^\top \tG)^\top}_{\htU^\top} \, \tx \,-\, \z^\top \z}^2 \right] }\;.
    \end{equation}
\end{definition}

\begin{remark} \label{remark-def-lpe}
        Using the substitution $\mbM := \U^\top \hU \, \htU^\top \tU - \Idz$, we can simplify the error to 
        \begin{equation}
            \ERRlpe(\G, \tG) := \sqrt{ \Tr(\mbM)^2 + \Tr(\mbM^2) + \Tr( \mbM^\top \mbM) } \;.
        \end{equation}
\end{remark}

In Figures~\ref{fig:rough-metric-revisit} and~\ref{fig:rough-metric-investigation}, we compare all four metrics.
\begin{figure}[ht]
    \centering
    \includegraphics[width=0.6\linewidth]{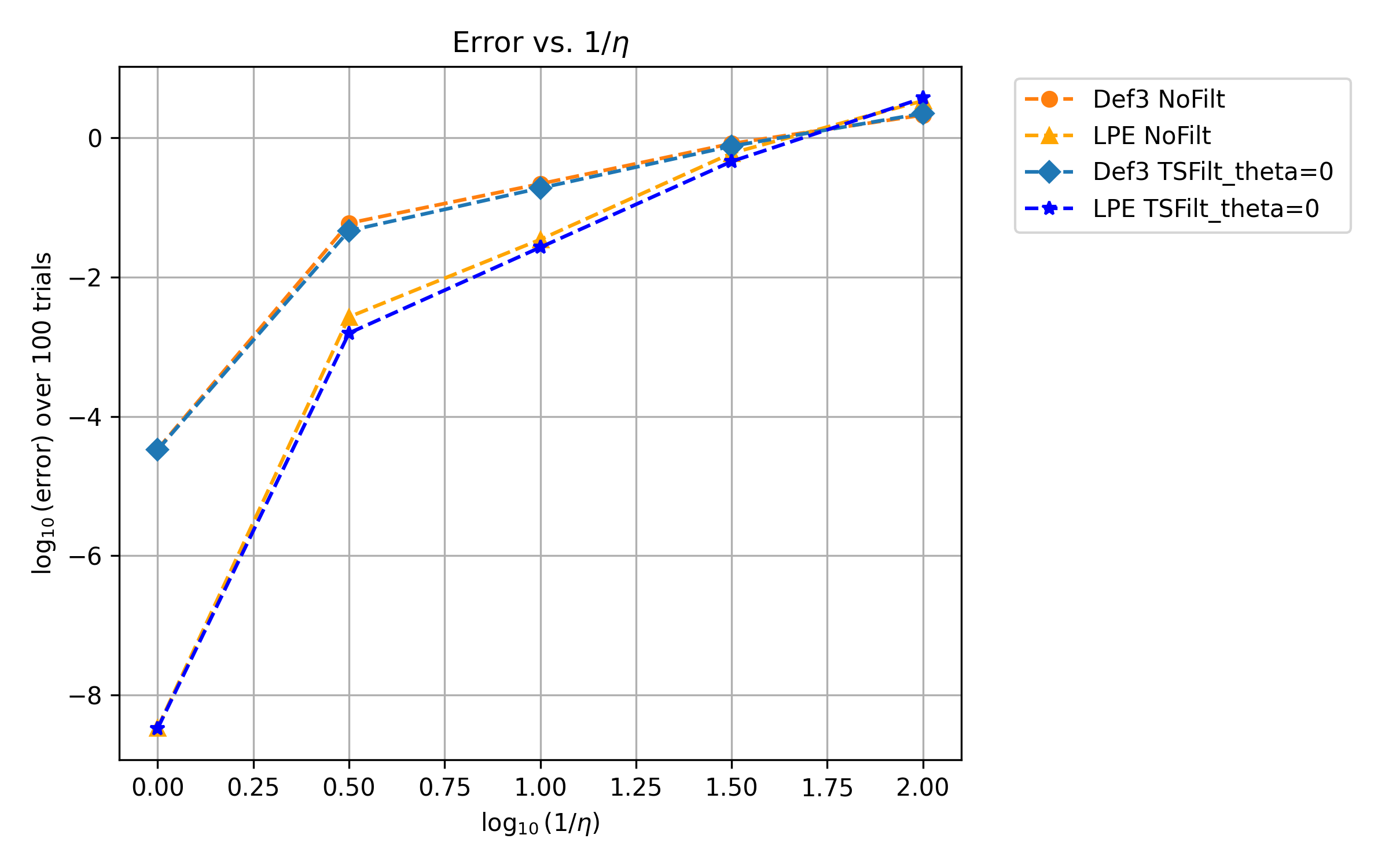}
    \caption{Comparison of error metrics from Defs~\ref{def-error-def3} and~\ref{def-error-lpe}.}
    \label{fig:rough-metric-investigation}
\end{figure}

\section{Lemmas} \label{sec-app-Lemmas}

This section presents Lemmas used in the proofs. The first three Lemmas are standard results in the literature, and we include them without proof.

\begin{lemma}[Weyl's Inequality] \label{lem-weyl-type}
For matrices $A, B \in \mathbb{R}^{m \times n}$, let $p = \min(m,n)$ and let $\sigma_1(M) \ge \sigma_2(M) \ge \dots \ge \sigma_p(M) \ge 0$ denote the singular values for $M \in \{A, B\}$. Then, for all $j=1, \dots, p$, it holds that
\begin{equation*}
    \lvert \sigma_j (A) - \sigma_j (B) \rvert \leq \| A - B \|_2 \;.
\end{equation*}
\end{lemma}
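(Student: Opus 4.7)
The plan is to deduce the bound from the Courant-Fischer variational characterization of singular values. Recall that for any $M \in \mathbb{R}^{m \times n}$ with $p = \min(m,n)$ and $j \in \{1, \dots, p\}$, the $j$-th singular value admits the min-max representation
\begin{equation*}
    \sigma_j(M) \;=\; \min_{\substack{V \subseteq \mathbb{R}^n \\ \dim V = n-j+1}} \;\max_{\substack{x \in V \\ \|x\|=1}} \|Mx\| \;,
\end{equation*}
which follows from the SVD of $M$ together with the interlacing argument for eigenvalues of $M^\top M$.

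Given this, I would exploit the reverse and forward triangle inequalities on the Euclidean norm: for every unit vector $x \in \mathbb{R}^n$,
\begin{equation*}
    \bigl|\|Ax\| - \|Bx\|\bigr| \;\le\; \|(A-B)x\| \;\le\; \|A-B\|_2 \;,
\end{equation*}
where the last step uses the definition of the operator norm. Hence $\|Ax\| \le \|Bx\| + \|A-B\|_2$ pointwise on the unit sphere.

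Fix $j$ and let $V^\star$ be a minimizing subspace of dimension $n-j+1$ achieving $\sigma_j(B)$ in the variational formula above. Then for every unit $x \in V^\star$,
\begin{equation*}
    \|Ax\| \;\le\; \|Bx\| + \|A-B\|_2 \;\le\; \sigma_j(B) + \|A-B\|_2 \;,
\end{equation*}
so taking the max over unit $x \in V^\star$ and then the min over all $(n-j+1)$-dimensional subspaces gives $\sigma_j(A) \le \sigma_j(B) + \|A-B\|_2$. By symmetry of the roles of $A$ and $B$, the reverse inequality $\sigma_j(B) \le \sigma_j(A) + \|A-B\|_2$ also holds, and combining yields the claim.

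I do not anticipate a genuine obstacle; the only place to be careful is the statement of the min-max formula when $m \ne n$, where one works with $M^\top M \in \mathbb{R}^{n \times n}$ (or $MM^\top$), whose top $p$ eigenvalues are the squared singular values of $M$. An equally clean alternative would be to cite Weyl's eigenvalue inequality applied to the Hermitian dilation $\begin{bmatrix}0 & M \\ M^\top & 0\end{bmatrix}$, whose nonzero eigenvalues are $\pm \sigma_i(M)$; either route gives the result in a few lines.
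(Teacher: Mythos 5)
Your proof is correct. The paper states this lemma without proof, noting explicitly that "the first three Lemmas are standard results in the literature, and we include them without proof," so there is no argument in the paper to compare against. Your route via the Courant--Fischer min-max characterization of singular values, combined with the reverse triangle inequality and the operator-norm bound $\|(A-B)x\|\le\|A-B\|_2$, is one of the two canonical proofs of Weyl's singular-value perturbation inequality; the alternative you mention, applying the eigenvalue Weyl inequality to the Hermitian dilation $\begin{pmatrix}0 & M\\ M^\top & 0\end{pmatrix}$, is equally standard. All steps check out, including the existence of a minimizing subspace $V^\star$ (attained by the span of the last $n-j+1$ right singular vectors of $B$) and the symmetry argument that upgrades the one-sided inequality to the absolute-value form.
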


\begin{lemma} [Wedin's Theorem] \label{lem-wedin-thm}
Let $A, \hat{A} \in \mathbb{R}^{m \times n}$ be matrices of the same size. Let $r \leq \min(m,n)$ be the rank of both $A, \hat{A}$, and let the SVDs be $A = U \Sigma V^\top$ and $\hat{A} = \hat{U} \hat{\Sigma} \hat{V}^\top$. 
Let $\sigma_{r}(A) > 0$ denote the $r^{th}$ singular value of $A$, and assume $\sigma_{r}(A) > \| \hat{A} - A \|_2$. Then it holds that:
    \begin{align*}
        \left\| \sin \Theta (\hat{U}, U) \right\|_F &\leq \frac{ \| \hat{A} - A \|_F }{\sigma_{r} \left( A \right) - \| \hat{A} - A \|_2} \;,\\
        \left\| \sin \Theta (\hat{V}, V) \right\|_F &\leq \frac{ \| \hat{A} - A \|_F }{\sigma_{r} \left( A \right) - \| \hat{A} - A \|_2} \;.
    \end{align*}
\end{lemma}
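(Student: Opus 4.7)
The plan is to establish the $U$ bound first, and then derive the $V$ bound by applying the same argument to the transposed pair $(A^\top, \hat{A}^\top)$. The entire argument rests on one clean algebraic identity that exploits the orthogonality relation $U_\perp^\top A = 0$, where $U_\perp \in \R^{m \times (m-r)}$ denotes a matrix extending $U$ to an orthonormal basis of $\R^m$. First, I would invoke the standard subspace-distance characterization from Appendix~\ref{sec-app-background-sinTheta} to rewrite $\fnorm{\sin\Theta(\hat U, U)} = \fnorm{U_\perp^\top \hat U}$. Then, starting from the SVD relation $\hat A \hat V = \hat U \hat \Sigma$, I would pre-multiply by $U_\perp^\top$. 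Splitting $\hat A = A + (\hat A - A)$ and using $U_\perp^\top A = U_\perp^\top U \Sigma V^\top = 0$ collapses the left side to $U_\perp^\top (\hat A - A) \hat V$, producing
\begin{equation*}
    U_\perp^\top \hat U \hat \Sigma \;=\; U_\perp^\top (\hat A - A) \hat V \;.
\end{equation*}

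Since both $A$ and $\hat A$ are assumed to have rank exactly $r$, $\hat \Sigma \in \R^{r \times r}$ is square and invertible, so I can post-multiply by $\hat \Sigma^{-1}$ to obtain $U_\perp^\top \hat U = U_\perp^\top (\hat A - A) \hat V \hat \Sigma^{-1}$. Taking Frobenius norms on both sides and using submultiplicativity with $\norm{U_\perp^\top} = \norm{\hat V} = 1$ and $\norm{\hat \Sigma^{-1}} = 1/\sigma_r(\hat A)$ yields $\fnorm{U_\perp^\top \hat U} \leq \fnorm{\hat A - A} / \sigma_r(\hat A)$. The last step converts the denominator from $\sigma_r(\hat A)$ to $\sigma_r(A)$ via Weyl's inequality (Lemma~\ref{lem-weyl-type}), giving $\sigma_r(\hat A) \geq \sigma_r(A) - \norm{\hat A - A}$, which is strictly positive by the hypothesis $\sigma_r(A) > \norm{\hat A - A}$. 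Rearranging yields the stated $U$ bound.

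For the $V$ bound, I would apply the $U$ argument verbatim to the transposed pair $(A^\top, \hat A^\top)$. The SVDs transpose as $A^\top = V \Sigma U^\top$ and $\hat A^\top = \hat V \hat \Sigma \hat U^\top$, so the roles of left and right singular vectors swap, while the singular values and both the spectral and Frobenius norms of $\hat A - A$ are invariant under transposition. This immediately produces the matching bound on $\fnorm{\sin\Theta(\hat V, V)}$. There is no real obstacle here: this is a classical perturbation result, and its core is the one-line algebraic identity enabled by $U_\perp^\top A = 0$, followed by a routine application of Weyl's inequality. The only subtlety worth flagging is the equal-rank assumption on $A$ and $\hat A$, which is exactly what makes $\hat \Sigma$ a square invertible matrix and legitimizes the post-multiplication by $\hat \Sigma^{-1}$.
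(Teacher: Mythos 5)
The paper does not prove this lemma; it is explicitly listed among the "standard results in the literature \dots included without proof," so there is no proof of the paper's own to compare against. Your argument is correct and is the standard derivation: the identity $U_\perp^\top \hat U \hat\Sigma = U_\perp^\top(\hat A - A)\hat V$ obtained from $U_\perp^\top A = 0$, post-multiplication by $\hat\Sigma^{-1}$ (invertible precisely because $\hat A$ has rank exactly $r$), the norm bound $\fnorm{U_\perp^\top \hat U} \le \fnorm{\hat A - A}/\sigma_r(\hat A)$, and Weyl's inequality to replace $\sigma_r(\hat A)$ by $\sigma_r(A) - \norm{\hat A - A}$ (positive by hypothesis) are all sound, as is the transposition trick for the $V$ bound. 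One harmless edge case worth a one-line remark is $r = m$ (or $r = n$ for the $V$ statement), where $U_\perp$ is vacuous and $\sin\Theta(\hat U, U) = 0$ trivially; your inequality holds there too.
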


\begin{lemma} [Whittle's Inequality] \label{lem:whittle_inequality}
    Let $X_1, X_2, \dots$ be a sequence of independent random variables such that: $(i)$ $\E[X_k] = 0$ for all $k \ge 1$, and $(ii)$ the distribution of each $X_k$ is symmetric about zero (i.e., $X_k$ and $-X_k$ have the same distribution).
    Let $S_n = \sum_{k=1}^n X_k$ be the partial sum (with $S_0 = 0$).
    If $\phi: \mathbb{R} \to \mathbb{R}$ is a convex function such that $\phi(0) = 0$, then the sequence $\E[\phi(S_n)]$ is non-decreasing in $n$. That is, for all $n \ge 1$:
    $$ \E[\phi(S_n)] \ge \E[\phi(S_{n-1})] \;.$$
\end{lemma}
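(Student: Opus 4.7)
The plan is to combine two ingredients: a symmetrization step that exploits the symmetry and independence of $X_n$, and the two-point midpoint inequality for convex $\phi$. The hypotheses $\E[X_k]=0$ and $\phi(0)=0$ act as sanity conditions (they ensure, for example, $\E[\phi(S_0)]=0$ so the claim is non-vacuous at $n=1$), but the inductive monotonicity step itself relies only on symmetry, independence, and convexity.

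First, I would exploit symmetry. Since $X_n$ is symmetric about $0$ and independent of $S_{n-1}=X_1+\cdots+X_{n-1}$, the joint law of $(S_{n-1}, X_n)$ equals the joint law of $(S_{n-1}, -X_n)$. In particular $S_n = S_{n-1}+X_n$ and $S_{n-1}-X_n$ are equal in distribution, so
\begin{equation*}
  \E[\phi(S_n)] \;=\; \E[\phi(S_{n-1}+X_n)] \;=\; \E[\phi(S_{n-1}-X_n)] \;=\; \tfrac{1}{2}\,\E\!\left[\phi(S_{n-1}+X_n) + \phi(S_{n-1}-X_n)\right] \;.
\end{equation*}

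Next, I would apply convexity pointwise. For every $a,b\in\R$, writing $a = \tfrac{1}{2}(a+b) + \tfrac{1}{2}(a-b)$ and using midpoint convexity of $\phi$ gives $\phi(a) \le \tfrac{1}{2}\phi(a+b) + \tfrac{1}{2}\phi(a-b)$. Instantiating this pointwise with $a = S_{n-1}(\omega)$ and $b = X_n(\omega)$ and taking expectations,
\begin{equation*}
  \E[\phi(S_{n-1})] \;\le\; \tfrac{1}{2}\,\E\!\left[\phi(S_{n-1}+X_n) + \phi(S_{n-1}-X_n)\right] \;=\; \E[\phi(S_n)] \;,
\end{equation*}
which is exactly the claimed inequality $\E[\phi(S_n)] \ge \E[\phi(S_{n-1})]$, and iterating over $n$ gives the full monotonicity statement.

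There is no real obstacle here --- the argument is essentially a two-line symmetrization plus convexity. The only point requiring mild care is integrability: any convex $\phi:\R\to\R$ admits a global affine minorant $\phi(t)\ge \alpha t+\beta$, so all displayed expectations are well-defined in $(-\infty,+\infty]$ and the manipulations are valid in the extended reals. I would note in passing that the proof never invokes $\E[X_k]=0$, consistent with the fact that the essence of Whittle's monotonicity is symmetric increments together with convexity of $\phi$.
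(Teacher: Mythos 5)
The paper does not actually prove this lemma; it is stated among a list of ``standard results in the literature'' that are included without proof. So there is no paper proof to compare against, and your proposal has to stand on its own --- which it does. The symmetrization-plus-convexity argument is correct and complete: independence of $X_n$ from $S_{n-1}$ together with $X_n \stackrel{d}{=} -X_n$ gives $(S_{n-1},X_n)\stackrel{d}{=}(S_{n-1},-X_n)$, hence $\E[\phi(S_{n-1}+X_n)] = \E[\phi(S_{n-1}-X_n)]$; averaging and applying the pointwise convexity bound $\phi(a)\le \tfrac12\phi(a+b)+\tfrac12\phi(a-b)$ yields the claim. Your integrability remark (a convex $\phi:\R\to\R$ has an affine minorant, so $\E[\phi(\cdot)]$ is well-defined in $(-\infty,+\infty]$) closes the only measure-theoretic gap. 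Your side observation that $\E[X_k]=0$ is never used is also correct --- indeed it is redundant with symmetry whenever the mean exists --- and $\phi(0)=0$ is pure normalization. This is the standard textbook proof of the result and would serve as a clean self-contained justification had the authors chosen to include one.
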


\begin{lemma} \label{lem:rank-on-convex-combination}
Let $A,B\in\mathbb{R}^{m\times n}$ with $\operatorname{rank}(A)=r\ge 1$.
If $\|A-B\|_2 < \sigma_{r}(A)$, then for every $t\in[0,1]$, it holds that $\operatorname{rank}\big((1-t)A+tB\big)\;\ge\; r$.
\end{lemma}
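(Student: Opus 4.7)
The plan is to apply Weyl's inequality (Lemma~\ref{lem-weyl-type}) to the matrix $C(t) := (1-t)A + tB$ viewed as a perturbation of $A$. The key observation is that $C(t) - A = t(B-A)$, so
\begin{equation*}
    \|C(t) - A\|_2 \;=\; t\,\|B-A\|_2 \;\leq\; \|B-A\|_2 \;<\; \sigma_r(A)
\end{equation*}
for every $t \in [0,1]$, using the hypothesis in the last step.

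Next, I invoke Weyl's inequality with $j = r$ applied to $A$ and $C(t)$ to obtain
\begin{equation*}
    |\sigma_r(C(t)) - \sigma_r(A)| \;\leq\; \|C(t) - A\|_2 \;<\; \sigma_r(A) \;,
\end{equation*}
which yields $\sigma_r(C(t)) \;>\; \sigma_r(A) - \sigma_r(A) + (\sigma_r(A) - \|C(t)-A\|_2) \;=\; \sigma_r(A) - \|C(t)-A\|_2 \;>\; 0$. Since the $r$-th singular value of $C(t)$ is strictly positive, $C(t)$ has at least $r$ nonzero singular values, hence $\operatorname{rank}(C(t)) \geq r$.

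I do not anticipate any substantive obstacle: the entire argument reduces to verifying that the spectral-norm perturbation is strictly smaller than $\sigma_r(A)$ uniformly in $t$, which follows from $t \leq 1$, and then quoting Weyl. The only minor subtlety is ensuring the bound on the perturbation holds at the endpoint $t=1$, which is exactly where the hypothesis $\|A-B\|_2 < \sigma_r(A)$ is used with equality in the perturbation bound. No additional machinery (e.g., Wedin, convexity arguments, or continuity in $t$) is required.
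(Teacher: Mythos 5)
Your proof is correct and follows essentially the same route as the paper's: both observe $\|C(t)-A\|_2 = t\|A-B\|_2 \le \|A-B\|_2 < \sigma_r(A)$ and then apply Weyl's inequality (Lemma~\ref{lem-weyl-type}) with $j=r$ to conclude $\sigma_r(C(t)) > 0$, hence $\operatorname{rank}(C(t)) \ge r$. The only cosmetic difference is that the paper states the one-sided form $\sigma_r(C(t)) \ge \sigma_r(A) - \|C(t)-A\|_2$ directly rather than passing through the two-sided $|\sigma_r(C(t)) - \sigma_r(A)|$ bound.
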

\begin{proof}
Let $X_t=(1-t)A+tB$. For any matrices $M,N$ and any $k$,
\[
\sigma_k(M)\;\ge\;\sigma_k(N)-\|M-N\|_2,
\]
which follows Lemma~\ref{lem-weyl-type}.
Applying this with $M=X_t$, $N=A$, and $k=r$,
\[
\sigma_r(X_t)\;\ge\;\sigma_r(A)-\|X_t-A\|_2
=\sigma_r(A)-t\|A-B\|_2
\;\ge\;\sigma_r(A)-\|A-B\|_2\;>\;0,
\]
for all $t\in[0,1]$ because $\|A-B\|_2<\sigma_r(A)$.
Hence $\sigma_r(X_t)>0$, so $\operatorname{rank}(X_t)\ge r$.
\end{proof}

\begin{lemma} \label{lem-log-concave}
    Let $X$ be a random variable with a log-concave density, mean $\mu_X$, and variance $\sigma_X^2$.
    It holds that
    \begin{equation*}
        \E[ X \, | \, X > \theta ] \leq \theta + e \, \sigma_X \;, \;\;\; \text{ for } \theta \geq \mu_X\;.
    \end{equation*}
\end{lemma}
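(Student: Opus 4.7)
My plan is to reduce the bound to the endpoint $\theta = \mu_X$ via the decreasing mean residual life (DMRL) property of log-concave distributions, then apply Cauchy-Schwarz, and finally invoke the classical log-concave tail inequality at the mean.

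Let $e(\theta) := \E[X - \theta \mid X > \theta]$ and $S(\theta) := \Pr(X > \theta)$, with hazard rate $h(\theta) := f(\theta)/S(\theta)$. Log-concavity of the density $f$ implies log-concavity of $S$ (a standard consequence of Pr\'ekopa-Leindler), so $\log S$ is concave. The tangent inequality at $\theta$ gives $S(\theta + t)/S(\theta) \leq e^{-h(\theta) t}$ for $t \geq 0$; integrating in $t$ yields the hazard-rate upper bound $e(\theta) \leq 1/h(\theta)$. A direct computation from $e(\theta) = \int_\theta^\infty (x - \theta) f(x)\, dx / S(\theta)$ shows
\[
    e'(\theta) \;=\; h(\theta)\, e(\theta) - 1 \;\leq\; 0,
\]
so $e$ is non-increasing (this is the well-known DMRL property). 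In particular, $e(\theta) \leq e(\mu_X)$ for $\theta \geq \mu_X$, reducing the problem to bounding $e(\mu_X)$.

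By Cauchy-Schwarz applied to $(X - \mu_X)\, \mathbf{1}\{X > \mu_X\}$,
\[
    e(\mu_X) \cdot S(\mu_X) \;=\; \E\bigl[(X - \mu_X)\, \mathbf{1}\{X > \mu_X\}\bigr] \;\leq\; \sqrt{\sigma_X^2 \cdot S(\mu_X)} \,,
\]
giving $e(\mu_X) \leq \sigma_X / \sqrt{S(\mu_X)}$. Invoking the classical tail bound $\Pr(X \geq \mu_X) \geq 1/e$ for log-concave distributions (a sharp inequality, tight in the limit at the shifted exponential), we conclude $e(\mu_X) \leq \sigma_X \sqrt{e} \leq e\, \sigma_X$, which establishes the claim.

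The main obstacle I anticipate is the final step, since the bound $\Pr(X \geq \mu_X) \geq 1/e$ is a non-trivial structural property of log-concave distributions that is not immediate from the definition. In the write-up I would either cite it as a standard result from the log-concave literature (e.g.\ from the surveys of Bagnoli-Bergstrom or Saumard-Wellner) or include a short self-contained proof via an exponential tilting argument combined with Markov's inequality. The remaining steps rely only on log-concavity of the survival function and two elementary inequalities (the hazard-rate bound and Cauchy-Schwarz).
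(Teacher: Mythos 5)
Your proof is correct and follows the same overall skeleton as the paper's (reduce to $\theta=\mu_X$ via the DMRL property, then bound $e(\mu_X)$ using $\Pr(X\geq\mu_X)\geq 1/e$), but your Cauchy--Schwarz step is applied differently and actually yields a strictly sharper constant. The paper bounds $\E[(X-\mu_X)^+]\leq\sqrt{\E[((X-\mu_X)^+)^2]}\leq\sigma_X$ outright and then divides by $\Pr(X>\mu_X)\geq 1/e$, giving $e\,\sigma_X$. You instead apply Cauchy--Schwarz to the product $(X-\mu_X)\cdot\mathbf{1}\{X>\mu_X\}$, which keeps a factor $\sqrt{S(\mu_X)}$ in the numerator and leaves $e(\mu_X)\leq\sigma_X/\sqrt{S(\mu_X)}\leq\sqrt{e}\,\sigma_X$ after dividing by $S(\mu_X)$ --- an improvement of a $\sqrt{e}$ factor, which of course still implies the stated lemma. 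You also supply a self-contained derivation of the DMRL property via the hazard-rate identity $e'(\theta)=h(\theta)e(\theta)-1$ and the bound $e(\theta)\leq 1/h(\theta)$ from log-concavity of the survival function, whereas the paper simply cites Bagnoli--Bergstrom; either is fine, and the remaining ingredient $\Pr(X\geq\mu_X)\geq 1/e$ is a standard citation in both treatments (the paper uses Lov\'asz--Vempala, Lemma 5.4).
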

\begin{proof}
Let $m(x) = \E[X-x \, | \, X>x]$ be the mean residual life function. 
We want to bound $\E[X \, | \, X>\theta] = \theta + m(\theta)$ for $\theta \geq \mu_X$.
Due to log-concavity of $X$, $m(x)$ is non-increasing (see, eg, \citet[Theorem 6]{bagnoli2005log}).
Since $m(x)$ is non-increasing, $m(\theta) \le m(\mu_X) = \E[X-\mu_X | X>\mu_X]$. We will now bound the conditional expectation for this case of $\theta = \mu_X$.

Let $Y = X-\mu_X$. Then $\E[Y]=0$ and $\V(Y)=\sigma_X^2$.
$m(\mu_X) = \E[Y \, | \, Y>0] = \frac{\E[Y^+]}{\Pr(Y>0)}$, where $Y^+ = \max(0,Y)$. We know $\E[Y^+] \le \sqrt{\E[(Y^+)^2]} \le \sqrt{\E[Y^2]} = \sigma_X$. As for the denominator, we know that for any random variable $X$ with a log-concave density and mean $\mu_X$, $\Pr(X \ge \mu_X) \ge \nicefrac{1}{e}$ (see, eg, \citet[Lemma 5.4]{lovasz2007geometry}). 
Thus, $m(\mu_X) \le \frac{\sigma_X}{1/e} = e \, \sigma_X$.
\end{proof}

\begin{lemma} \label{lem-productExpectation-indep}
Let $\x, \y \in \mathbb{R}^d$ and $\tx, \ty \in \mathbb{R}^{\widetilde{d}}$ be random vectors. Assume that the pair $(\x, \tx)$ is independent of the pair $(\y, \ty)$.
Let $\mbA$ be a fixed $d \times \widetilde{d}$ matrix and let $\theta \in \mathbb{R}$ be a scalar threshold. Define the events $C_x = \{ \x^\top \mbA \tx > \theta \}$ and $C_y = \{ \y^\top \mbA \ty > \theta \}$. Assume that these events have non-zero probability, i.e., $\Pr(C_x) > 0$ and $\Pr(C_y) > 0$.
Then the conditional expectation of the outer product $\x \ty^\top$ given both events $C_x$ and $C_y$ factorizes as follows:
\begin{equation*}
    \E \, \left[ \x \ty^\top \; | \; \x^\top \mbA \, \tx > \theta, \; \y^\top \mbA \, \ty > \theta  \right] = \E \, \left[ \x \; | \; \x^\top \mbA \, \tx > \theta \right] \cdot \bigg( \E \, \left[ \ty \; | \; \y^\top \mbA \, \ty > \theta  \right] \bigg)^\top \;.
\end{equation*}
\end{lemma}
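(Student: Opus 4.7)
The plan is to exploit that the conditioning events live on independent blocks of the probability space, so conditioning does not destroy independence.

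First, I would observe that the event $C_x = \{\x^\top \mbA \tx > \theta\}$ is measurable with respect to $\sigma(\x, \tx)$, and similarly $C_y = \{\y^\top \mbA \ty > \theta\}$ is measurable with respect to $\sigma(\y, \ty)$. Since the paper assumes $(\x, \tx) \Indep (\y, \ty)$, the indicators $\mathbf{1}_{C_x}$ and $\mathbf{1}_{C_y}$ are independent, and therefore $\Pr(C_x \cap C_y) = \Pr(C_x)\,\Pr(C_y) > 0$ by the nondegeneracy hypothesis, so the conditional expectation on the left-hand side is well defined.

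Next, I would argue that the conditional law of the full tuple $(\x, \tx, \y, \ty)$ given $C_x \cap C_y$ is the product of the conditional law of $(\x, \tx)$ given $C_x$ and the conditional law of $(\y, \ty)$ given $C_y$. This is a one-line computation using the definition of conditional probability: for measurable sets $E \in \sigma(\x,\tx)$ and $F \in \sigma(\y,\ty)$,
\begin{equation*}
\Pr\bigl((\x,\tx) \in E,\, (\y,\ty) \in F \,\big|\, C_x \cap C_y\bigr)
= \frac{\Pr(E \cap C_x)\,\Pr(F \cap C_y)}{\Pr(C_x)\,\Pr(C_y)},
\end{equation*}
using independence of the two blocks in both numerator and denominator. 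The right-hand side is exactly the product of the two marginal conditional probabilities.

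From this factorization, two facts follow immediately and together give the claim. First, under the conditional law given $C_x \cap C_y$, the vector $\x$ is independent of the vector $\ty$, so
\begin{equation*}
\E[\x \ty^\top \mid C_x \cap C_y] = \E[\x \mid C_x \cap C_y]\,\E[\ty \mid C_x \cap C_y]^\top.
\end{equation*}
Second, the conditional marginal of $\x$ given $C_x \cap C_y$ coincides with its conditional marginal given $C_x$ alone (integrate out the $(\y,\ty)$ block in the product form above), and analogously for $\ty$. Substituting these identities gives the desired factorization.

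I do not anticipate a real obstacle here; the statement is a clean consequence of the block-independence structure, and the only care needed is to integrability (handled by the standing model, since $\x$ and $\ty$ are Gaussian-plus-bounded-signal under Assumptions~\ref{assump-corruption} and~\ref{assump-noise}, so all conditional expectations exist) and to verifying $\Pr(C_x \cap C_y) > 0$, which is handled by independence of the two events.
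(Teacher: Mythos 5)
Your proposal is correct and relies on the same core fact as the paper's proof, namely that block independence of $(\x,\tx)$ and $(\y,\ty)$ makes $\Pr(C_x\cap C_y)$ and $\E[\x\ty^\top\mathbb{I}_{C_x\cap C_y}]$ both factorize. You phrase this as a product decomposition of the conditional law (then conditional independence of $\x$ and $\ty$, plus coincidence of conditional marginals), whereas the paper works directly with the numerator/denominator of the conditional expectation; these are two presentations of the same one-line argument.
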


\begin{proof}
The definition of conditional expectation given multiple events is conditioning on their intersection. Here $\mathbb{I}$ denotes the indicator function.
$$
\E \, [ \, \x \ty^\top \, | \, C_x, C_y \,] = \E \, [ \, \x \ty^\top \, | \, C_x \cap C_y \,] = \frac{\E \, [ \,  \x \ty^\top \mathbb{I}_{C_x \cap C_y} \,]}{\Pr(C_x \cap C_y)} \;.
$$

The event $C_x$ is determined solely by the random variables $\x$ and $\tx$.
The event $C_y$ is determined solely by the random variables $\y$ and $\ty$.
By the initial assumption, the pair $(\x, \tx)$ is independent of the pair $(\y, \ty)$. Therefore, the event $C_x$ is independent of the event $C_y$.
This implies $\Pr(C_x \cap C_y) = \Pr(C_x) \,  \Pr(C_y)$. Hence the denominator factorizes (and is non-zero since $\Pr(C_x)>0$ and $\Pr(C_y)>0$).

Now consider the numerator. Since $C_x$ and $C_y$ are independent, $\mathbb{I}_{C_x \cap C_y} = \mathbb{I}_{C_x} \mathbb{I}_{C_y}$, which implies
$$
\E \, [ \x \ty^\top \, \mathbb{I}_{C_x \cap C_y}] = \E \, [\x \ty^\top \, \mathbb{I}_{C_x} \mathbb{I}_{C_y}] = \E \, [ \x \, \mathbb{I}_{C_x} ] \cdot \E \, [  \ty \, \mathbb{I}_{C_y} ]^\top \;,
$$
again, due to independence of the pairs. Hence the numerator also factorizes.
\end{proof}

\begin{lemma} \label{lem:cond_exp_zero_general}
Let $x \in \mathbb{R}^d$ and $\widetilde{x} \in \mathbb{R}^{\widetilde{d}}$ be random vectors such that their joint distribution is a multivariate normal distribution with zero mean.
Let $\mbA$ be a fixed $d \times \widetilde{d}$ matrix, and consider the conditioning event $\mathcal{R} = \{ (x, \widetilde{x}) \mid x^\top \mbA \, \widetilde{x} > \theta \}$ for some threshold $\theta \in \mathbb{R}$. Assume that the probability of this event is non-zero, i.e., $\Pr(\mathcal{R}) > 0$. Then
$$
\E \, [x \mid x^\top \mbA \widetilde{x} > \theta] = \mathbf{0}_d \;.
$$
\end{lemma}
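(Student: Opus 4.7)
The plan is to exploit the symmetry of the zero-mean joint Gaussian distribution under the sign-flip map $(x, \widetilde{x}) \mapsto (-x, -\widetilde{x})$. Since $(x, \widetilde{x})$ is jointly zero-mean Gaussian, its distribution is fully determined by its covariance, and hence invariant under negation of the full vector. That is, $(-x, -\widetilde{x}) \stackrel{d}{=} (x, \widetilde{x})$.

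The key observation is that the conditioning event is also invariant under this sign-flip, because the bilinear form is even in the pair: $(-x)^\top \mbA (-\widetilde{x}) = x^\top \mbA \widetilde{x}$. Therefore the event $\{x^\top \mbA \widetilde{x} > \theta\}$ is preserved.

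Combining these two facts, I would apply the change of variables $(x, \widetilde{x}) \leftrightarrow (-x, -\widetilde{x})$ inside the conditional expectation to obtain
\begin{equation*}
    \E[x \mid x^\top \mbA \widetilde{x} > \theta] \;=\; \E[-x \mid (-x)^\top \mbA (-\widetilde{x}) > \theta] \;=\; -\E[x \mid x^\top \mbA \widetilde{x} > \theta] \;,
\end{equation*}
where the first equality uses the distributional equivalence $(x,\widetilde{x}) \stackrel{d}{=} (-x,-\widetilde{x})$ and the second uses the invariance of the event. Any vector equal to its own negation must be zero, so $\E[x \mid x^\top \mbA \widetilde{x} > \theta] = \mathbf{0}_d$.

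There is no real obstacle here; the only thing to be careful about is that $\Pr(\mathcal{R}) > 0$ is assumed in the statement, so the conditional expectation is well-defined and the symmetry argument goes through component-wise. No computation of the Gaussian density or of $\mbA$'s structure is needed.
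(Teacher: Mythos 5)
Your proof is correct and takes essentially the same symmetry route as the paper: both exploit that the zero-mean Gaussian density is invariant under $(x,\widetilde{x}) \mapsto (-x,-\widetilde{x})$ and that the conditioning region $\mathcal{R}$ is preserved by that map. The paper phrases the argument in terms of the integral $\int_{\mathcal{R}} x\, p(Z)\, dZ$ vanishing by symmetry; you phrase it via distributional equivalence and the identity $\E[x\mid\mathcal{R}] = -\E[x\mid\mathcal{R}]$, which is a slightly cleaner wrap-up of the same idea.
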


\begin{proof}
Let $Z = \left( x, \widetilde{x} \right) \in \mathbb{R}^{d+\widetilde{d}}$. The joint probability density function of $Z$, denoted by $p(Z)$, corresponds to the $\N(0, \Sigma_{joint})$ distribution for some covariance matrix $\Sigma_{joint}$. The conditional expectation is defined as:
$$
\E \, [x \mid x^\top \mbA \, \tx > \theta] = \E \, [x \mid Z \in \mathcal{R}] = \frac{ \int_{\mathcal{R}} x \, p(Z) \, dZ }{ \int_{\mathcal{R}} p(Z) \, dZ } = \frac{ \int_{\mathcal{R}} x \, p(Z) \, dZ }{ P(\mathcal{R}) }
$$
We focus on the numerator integral and show that it is zero owing to symmetry.
First note that $p(Z)$ is symmetric around the origin. That is, $p(Z) = p(-Z)$ for all $Z \in \mathbb{R}^{d+\widetilde{d}}$.
Second, observe that 
under the transformation $Z \mapsto -Z$, the condition becomes $(-u)^\top \mbA (-\widetilde{u}) > \theta$, which simplifies to $u^\top \mbA \widetilde{u} > \theta$. Thus, the region $\mathcal{R}$ is symmetric with respect to the origin: $Z \in \mathcal{R} \iff -Z \in \mathcal{R}$.
\end{proof}

\begin{lemma} \label{lem-cond-exp-BOTH}
Consider the random variable $z := uv$, where $u,v$ are jointly Gaussian as 
\begin{equation*}
    \begin{pmatrix}
        u \\
        v
    \end{pmatrix} \sim \N\left(0, 
    \begin{pmatrix}
        \sigma^2_u & \gamma \\
        \gamma & \sigma^2_v
    \end{pmatrix}
    \right) , \; \; \; \text{ with } 0 < \sigma^2_u, \sigma^2_v, \text{ and } 0 \leq \gamma < \sigma_u \sigma_v \;.
\end{equation*}
Let $\{ z_k \}_{k=1}^r$ be $r$ independent copies. The conditional expectation is upper and lower bounded as
\begin{align*}
    \E \, \left[ z_i \, \bigg| \, \sum_{k=1}^r z_k > \theta \right] &\geq \max \left\{ \gamma, \frac{\theta}{r} \right\} \text{ for all } \theta \in \R\;, \\
    \E \, \left[ z_i \, \bigg| \, \sum_{k=1}^r z_k > \theta \right] &\leq \max \left\{ \gamma, \frac{\theta}{r} \right\} + e \, \sqrt{\frac{\sigma_u^2 \sigma_v^2 + \gamma^2}{r}} \text{ for } \theta \geq 0 \;.
\intertext{For the specific case of $\gamma = 0$ (i.e. $u,v$ independent) and $\theta = 0$, a stronger lower bound is}
    \E \, \left[ z_i \, \bigg| \, \sum_{k=1}^r z_k > 0 \right] &\geq \frac{2}{\pi r} \, \sigma_u \, \sigma_v \;. 
\end{align*}
\end{lemma}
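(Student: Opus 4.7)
The plan is to reduce the three inequalities to statements about $W := \sum_{k=1}^r z_k$, using exchangeability of the i.i.d.\ summands to write $\E[z_i\mid W>\theta] = \tfrac{1}{r}\E[W\mid W>\theta]$. The relevant moments are $\mu_W = r\gamma$ and, by Isserlis' theorem applied to $\E[(u_k v_k)^2]$ for the jointly Gaussian pair $(u_k,v_k)$, $\sigma_W^2 = r(\sigma_u^2\sigma_v^2 + \gamma^2)$. The displayed bounds on $\E[z_i\mid W>\theta]$ then translate (after multiplying by $r$) into $\max\{\mu_W,\theta\} \le \E[W\mid W>\theta] \le \max\{\mu_W,\theta\} + e\sigma_W$ (with $\theta \ge 0$ for the upper bound), plus the tighter lower bound $\E[W\mid W>0]\ge (2/\pi)\sigma_u\sigma_v$ in the $\gamma=\theta=0$ case.

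For the generic lower bound I combine two trivialities: $\E[W\mid W>\theta]\ge \theta$ on the conditioning event, and $\E[W\mid W\le\theta]\le \theta$, which together with the total-expectation identity $\mu_W = \Pr(W>\theta)\E[W\mid W>\theta] + \Pr(W\le\theta)\E[W\mid W\le\theta]$ forces $\E[W\mid W>\theta]\ge \mu_W$ whenever $\theta\le\mu_W$. For the tighter bound at $\gamma=\theta=0$, I would write $u_k=\sigma_u X_k$ and $v_k=\sigma_v Y_k$ with $X_k,Y_k\sim_{\mathrm{iid}}\N(0,1)$, so each $z_k=\sigma_u\sigma_v X_kY_k$ is symmetric about $0$, and hence so is $W$. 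Symmetry yields $\Pr(W>0)=1/2$ and $\E[W\mathbf{1}_{W>0}]=\tfrac{1}{2}\E[|W|]$, giving $\E[W\mid W>0]=\E[|W|]$; then applying Lemma~\ref{lem:whittle_inequality} (Whittle) with the convex $\phi(x)=|x|$ to the symmetric independent sequence $\{z_k\}$ yields $\E[|W|]\ge \E[|z_1|]=\sigma_u\sigma_v\cdot\E[|X_1|]\E[|Y_1|]=(2/\pi)\sigma_u\sigma_v$, completing this part after dividing by $r$.

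The main obstacle is the upper bound, because $W$ is not log-concave in general (for instance, at $r=1,\gamma=0$ the density of $XY$ has a logarithmic singularity at $0$), so Lemma~\ref{lem-log-concave} cannot be applied to $W$ directly. My plan is to exploit a \emph{conditional} Gaussianity via the factorization $u_k=\sigma_u X_k$, $v_k=\rho\sigma_v X_k+\sqrt{1-\rho^2}\sigma_v Y_k$ with $\rho=\gamma/(\sigma_u\sigma_v)$ and $X_k,Y_k\sim_{\mathrm{iid}}\N(0,1)$, yielding $z_k=\gamma X_k^2+\beta X_k Y_k$ with $\beta=\sqrt{\sigma_u^2\sigma_v^2-\gamma^2}$, hence $W=\gamma A+\beta B$ where $A:=\sum_{k=1}^r X_k^2\sim \chi_r^2$ and $B\mid X\sim \N(0,A)$. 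Conditional on $A$, the law $W\mid A\sim \N(\gamma A,\beta^2 A)$ \emph{is} log-concave, so Lemma~\ref{lem-log-concave} applied conditionally (combined with the fact that $\theta'\mapsto \E[X\mid X>\theta']$ is non-decreasing, handling $\theta<\gamma a$) yields $\E[W\mid W>\theta,\,A=a]\le \max(\gamma a,\theta)+e\beta\sqrt{a}$.

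The hardest step will be taking iterated expectation over $A$ conditional on $\{W>\theta\}$. Unconditionally, the right-hand side is of the right order: Jensen gives $\E[\sqrt{A}]\le\sqrt{r}$, hence $e\beta\sqrt{r}\le e\sigma_W$; and splitting $\max(\gamma A,\theta)\le \max(\mu_W,\theta)+\gamma(A-r)_+$ with the moment bound $\E[(A-r)_+]\le \sqrt{\V(A)}=\sqrt{2r}$ absorbs the remaining piece into $\sigma_W$. The technical obstacle is that conditioning on $\{W>\theta\}$ tilts the law of $A$ upward (since $a\mapsto\Pr(W>\theta\mid A=a)$ is monotone when $\gamma>0$), so the unconditional bounds must be promoted to conditional ones without breaking the constant $e$. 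I would handle this with a direct tail estimate on $A\mid W>\theta$, leveraging the explicit Gaussian form of $W\mid A$ together with the $\chi_r^2$ law of $A$ to show the tilt contributes only an additive $O(\sigma_W)$ correction that can be folded into the slack term.
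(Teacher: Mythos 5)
Your lower-bound arguments (total-expectation trick for the $\max\{\gamma,\theta/r\}$ bound; symmetry plus Whittle for the $\gamma=\theta=0$ case) are fine and essentially the same route as the paper, modulo the paper using a monotonicity-of-$h(\theta)=\E[W\mid W>\theta]$ argument where you use total expectation.

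On the upper bound, your objection to the paper's proof is correct: the density of $z=uv$, namely $f_z(x)\propto e^{bx}K_0(a|x|)$, has a logarithmic singularity at $x=0$ inherited from $K_0$, so it is not a log-concave density for any $\gamma\in[0,\sigma_u\sigma_v)$; in fact $K_0$ is log-\emph{convex} on $(0,\infty)$. The paper's assertion ``$K_0(a|x|)$ is log-concave'' is false, and so its derivation of log-concavity of $W=\sum_k z_k$ via ``sum of log-concave is log-concave'' does not go through as written.

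However, the paper's \emph{conclusion} that $W$ is log-concave is still true once $r\ge 2$ (which is exactly the hypothesis of Theorem~\ref{thm-TS-special}, where this lemma is used), and there is a much simpler fix than the conditional-on-$A$ machinery you sketch. Diagonalize the quadratic form: writing $u=\sigma_u X$, $v=\rho\sigma_v X+\sqrt{1-\rho^2}\,\sigma_v Y$ with $X,Y\sim_{\mathrm{iid}}\N(0,1)$ and $\rho=\gamma/(\sigma_u\sigma_v)$, one finds $z=uv=\lambda_+ P^2-|\lambda_-|Q^2$ for independent standard normals $P,Q$, where $\lambda_\pm=\tfrac12(\gamma\pm\sigma_u\sigma_v)$, so $\lambda_+>0$ and $\lambda_-<0$ for $\gamma<\sigma_u\sigma_v$. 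Summing over $k$ gives $W=\lambda_+A-|\lambda_-|B$ with $A,B\sim\chi^2_r$ independent. For $r\ge 2$ each of $A$ and $B$ has a log-concave density (the $\chi^2_r$ density $\propto a^{r/2-1}e^{-a/2}$ is log-concave exactly when $r\ge 2$), hence $\lambda_+A$ and $-|\lambda_-|B$ are log-concave, and their sum (a convolution of log-concave densities) is log-concave by Pr\'ekopa. One can then apply Lemma~\ref{lem-log-concave} directly to $W$, giving the stated upper bound with the variance $\sigma_W^2=r(\sigma_u^2\sigma_v^2+\gamma^2)$ that you already computed. This avoids the step you correctly flag as problematic in your own plan, namely controlling the tilt of the law of $A$ under the event $\{W>\theta\}$; that iterated-expectation argument would need a nontrivial anti-concentration estimate to preserve a clean multiplicative constant, and as written your proposal leaves it unverified. (The lemma as stated does not restrict to $r\ge 2$, but $r=1$ is not used downstream and is genuinely problematic since $W=z_1$ has the $K_0$ singularity.)
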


\begin{proof}
\textbf{Simplify the expression.}
Observe that $z_k$ are i.i.d. random variables. The expectation is $\E[z_k] = \E[u v] = \gamma$ (since $\E[u] = 0 = \E[v]$).
Let $S = \sum_{k=1}^r z_k$, and let $p_S(.)$ denote the PDF of $S$. The expectation is $\E[S] = r \gamma$, and the variance is $\V[S] = r \left( \sigma^2_u \sigma^2_v + \gamma^2\right)$.

Due to the symmetry among the i.i.d. variables $z_k$, the conditional expectation $\E[z_i \, | \, S > \theta]$ is the same for all $i \in \{1, \dots, r\}$. Let $Q(\theta) = \E[z_i \, | \, S > \theta]$.
By linearity of expectation, we have
\begin{align}
    \E[S \, | \, S > \theta] = \E\left[\sum_{k=1}^r z_k \, \bigg| \, S > \theta\right] &= \sum_{k=1}^r \E[z_k \, | \, S > \theta] = r \, Q(\theta) \;. \tag*{} \\
    \implies Q(\theta) = \frac{1}{r} \, &\E[S \, | \, S > \theta] \;. \label{_eq-Qtheta-depen}
\end{align}

\textbf{Proof of lower bounds: general case lower bound $\frac{\theta}{r}$.}
Observe that
\begin{align}
    \E[S \, | \, S > \theta] &= \frac{\int_{\theta}^{\infty} s \, p_S(s) ds}{\int_{\theta}^{\infty} p_S(s) ds} \label{_eq-condexp} \\
    &\geq \frac{\int_{\theta}^{\infty} \theta \, p_S(s) ds}{\int_{\theta}^{\infty} p_S(s) ds} = \theta \;. \tag*{}
\end{align}
Combining this with Eq.~\eqref{_eq-Qtheta-depen} shows the $\nicefrac{\theta}{r}$ lower bound.

\textbf{Proof of lower bounds: general case lower bound $\gamma$.}
For this, we show $\E[S \, | \, S > \theta]$ is non-decreasing in $\theta$. Let $h(\theta) = \E[S \, |\, S > \theta]$. Using Eq.~\eqref{_eq-condexp}, its derivative is given by
\begin{align}
    h'(\theta) &= \frac{- \theta \, p_S(\theta) \int_{\theta}^\infty p_S(s) ds + p_S(\theta) \int_{\theta}^\infty s \, p_S(s) ds}{\Pr(S > \theta)^2} \tag*{} \\
    &= \frac{p_S(\theta)}{\Pr(S > \theta)^2} \int_{\theta}^{\infty} \underbrace{(s - \theta)}_{\geq 0} p_S(s) ds \geq 0 \;. \label{_eq-cond-exp-increasing}
\end{align}
Thus $\E[S \, | \, S > \theta]$ is non-decreasing in $\theta$. In particular, $\E[S \, | \, S > \theta] \geq \E[S]$ (i.e. the unconditional limit in the limit $\theta \rightarrow -\infty$). Since $\E[S] = r\gamma$, using this in Eq.~\eqref{_eq-Qtheta-depen} shows the lower bound of $\gamma$.

\textbf{Proof of lower bounds: the specific case of $\gamma = 0$ and $\theta = 0$.}
Since the distribution of $z_k$ is symmetric around zero, the distribution of $S = \sum_{k} z_k$ is also symmetric around zero. Therefore, $\Pr(S>0) = \nicefrac{1}{2}$. Using this, we get
\begin{equation}
    \E[S \, | \, S > 0] = \frac{\int_0^\infty s \, p_S(s) ds}{\Pr(S > 0)} = 2 \int_0^\infty s \, p_S(s) ds \;.  \label{_eq-condexp-1} 
\end{equation}
Also, the expectation of the absolute value is $\E[|S|] = \int_{-\infty}^\infty |s| \, p_S(s) ds$. Due to symmetry (i.e. $p_S(-s) = p_S(s)$), we get
\begin{equation}
    \E[|S|] = \int_{-\infty}^0 (-s) \, p_S(s) ds + \int_0^\infty s \, p_S(s) ds = 2 \int_0^\infty s \, p_S(s) ds \;. 
    \label{_eq-condexp-2} 
\end{equation}
Using Eq.~\eqref{_eq-condexp-1} and Eq.~\eqref{_eq-condexp-2}, we get 
\begin{align*}
    \E[S \, | \, S > 0] = \E[|S|] &= \E \left[ \bigg| \sum_{k=1}^r z_k \bigg| \right] \\
    & \geq^{(\dagger)} \E [ |z_1| ] \\
    & = \E[ |u_1 v_1| ] = \E[ |u_1 | \, |v_1| ] = \E[ |u_1|] \, \E[|v_1|]  \tag{using independence}\\
    &= \sigma_u \sigma_v \, \E[|a|]^2 = \frac{2}{\pi} \, \sigma_u \sigma_v  \tag{for $a \sim \N(0, 1)$}  \;.
\end{align*}
Eq (\textdagger) holds intuitively. To formally show it, we invoke Lemma \ref{lem:whittle_inequality} (Whittle's inequality) on the convex function $\phi(x) = |x|$.
Using this with Eq.~\eqref{_eq-Qtheta-depen} gives the desired result.

\textbf{Proof of the upper bound.} 
The probability density function of $z = uv$ is given by
\begin{equation*}
    f_z(x) = \frac{1}{\pi \sigma_u \sigma_v \sqrt{1-\rho^2}} \exp\left(\frac{\rho x}{\sigma_u\sigma_v(1-\rho^2)}\right) K_0\left(\frac{|x|}{\sigma_u\sigma_v(1-\rho^2)}\right) \;,
\end{equation*}
where $\rho = \gamma/(\sigma_u\sigma_v)$ denotes the correlation factor. Note that $\abs{\rho} < 1$ is ensured via $\gamma < \sigma_u \sigma_v$ in the lemma statement.
The function $K_0(a|x|)$ is log-concave for $a>0$. The term $\exp(bx)$ is log-linear (hence log-concave). The product of log-concave functions is log-concave. Thus, $f_z(x)$ is log-concave. Since $S$ is a sum of $r$ i.i.d. random variables with log-concave densities, $S$ also has a log-concave density.
We use Lemma~\ref{lem-log-concave} to get that $\E[S \, | \, S > \theta] \leq \theta + e \, \sqrt{r \left( \sigma^2_u \sigma^2_v + \gamma^2 \right)}$ for $\theta \geq r \gamma$. For $\theta \in [0, r\gamma]$, we use the non-decreasing property of $\E[S \,| \, S > \theta]$ from Eq.~\eqref{_eq-cond-exp-increasing}. Plugging into Eq.~\eqref{_eq-Qtheta-depen} concludes the argument.
\end{proof}

\begin{lemma} \label{lem-ftheta-special}
    Consider Gaussian random variables $x, y \in \R^r$, such that
    \begin{equation*}
        \begin{pmatrix}
            x \\
            y
        \end{pmatrix} \sim \N \left(0, 
        \begin{pmatrix}
            a_x \Idz & a_{xy} \, \Idz \\
            a_{xy} \, \Idz & a_{y} \Idz 
        \end{pmatrix}
        \right), \; \; \; \text{ with } a_x, a_y > 0, \; a_{xy} \geq 0 \;.
    \end{equation*}
    For $\theta \in \R$, define $\mbA(\theta) := \E \left[ x y^\top \, | \, x^\top y > \theta \right]$. It holds that $\mbA(\theta)$ satisfies
    \begin{align*}
        \mbA(\theta) = f(\theta) \, \Idz \;,
    \end{align*}
    where $f(\theta)$ is a scalar function of $\theta \in \R$, such that 
    \begin{equation*}
        \max\left\{ a_{xy}, \frac{\theta}{r} \right\} + e \, \sqrt{\frac{a_x a_y + a_{xy}^2}{r}} \geq f(\theta) \geq \max \left\{ a_{xy}, \frac{\theta}{r} \right\} \;.
    \end{equation*}
    In the special case of $a_{xy} = 0$, it further holds that $f(0) \geq \nicefrac{2 \sqrt{a_x a_y}}{\pi r}$.
\end{lemma}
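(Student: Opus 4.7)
The plan is to reduce the lemma to the already-proved Lemma~\ref{lem-cond-exp-BOTH} by first exploiting the rotational symmetry of the joint distribution to collapse the matrix $\mbA(\theta)$ to a scalar multiple of $\Idz$.

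\textbf{Step 1 (Symmetry).} I would first show that $\mbA(\theta)=f(\theta)\Idz$ by using the fact that the given covariance is a Kronecker product of a $2\times 2$ matrix with $\Idz$. Concretely, for any orthogonal $Q\in O(r)$, the pair $(Qx,Qy)$ has the same joint Gaussian law as $(x,y)$ (the block entries $a_x\Idz,a_{xy}\Idz,a_y\Idz$ are each invariant under $Q\cdot Q^\top$), and the conditioning event is preserved since $(Qx)^\top(Qy)=x^\top y$. Therefore
\begin{equation*}
Q\mbA(\theta)Q^\top \;=\; \E\!\left[(Qx)(Qy)^\top \,\big|\, x^\top y>\theta\right] \;=\; \mbA(\theta),
\end{equation*}
i.e.\ $\mbA(\theta)$ commutes with every orthogonal matrix, so by Schur's lemma (or direct diagonalization) it must be a scalar multiple of $\Idz$. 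Taking a normalized trace gives
\begin{equation*}
f(\theta) \;=\; \frac{1}{r}\,\Tr\bigl(\mbA(\theta)\bigr) \;=\; \frac{1}{r}\,\E\!\left[x^\top y \,\big|\, x^\top y>\theta\right].
\end{equation*}

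\textbf{Step 2 (Reduction to Lemma~\ref{lem-cond-exp-BOTH}).} Because the off-diagonal blocks are proportional to $\Idz$, the coordinates decouple: writing $x=(x_1,\dots,x_r)$ and $y=(y_1,\dots,y_r)$, the pairs $(x_k,y_k)$ for $k=1,\dots,r$ are i.i.d.\ centered bivariate Gaussian with variances $a_x,a_y$ and covariance $a_{xy}$. In the notation of Lemma~\ref{lem-cond-exp-BOTH}, set $u=x_k$, $v=y_k$, $\sigma_u^2=a_x$, $\sigma_v^2=a_y$, $\gamma=a_{xy}$, and $z_k=x_k y_k$. Then $S:=\sum_{k=1}^r z_k = x^\top y$, and by symmetry across coordinates
\begin{equation*}
f(\theta) \;=\; \frac{1}{r}\,\E[S\mid S>\theta] \;=\; \E\!\left[z_i \,\Big|\, \sum_{k=1}^r z_k>\theta\right].
\end{equation*}
The three desired bounds now follow immediately by quoting the three conclusions of Lemma~\ref{lem-cond-exp-BOTH}: the two-sided bound with $\max\{a_{xy},\theta/r\}$, the additive upper bound $e\sqrt{(a_x a_y + a_{xy}^2)/r}$, and the sharper lower bound $\tfrac{2\sqrt{a_x a_y}}{\pi r}$ in the case $a_{xy}=0$, $\theta=0$.

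\textbf{Main obstacle.} The only non-routine step is the symmetry argument in Step~1, and even this is essentially one line once one notices the $I_r$-block structure of the covariance. The hypothesis $a_{xy}\ge 0$ together with $a_x,a_y>0$ guarantees the covariance is positive definite whenever $a_{xy}<\sqrt{a_x a_y}$, matching the regime $0\le\gamma<\sigma_u\sigma_v$ required by Lemma~\ref{lem-cond-exp-BOTH}; I would verify this compatibility explicitly so that the reduction is legitimate. No further calculations are needed beyond invoking the quoted lemma.
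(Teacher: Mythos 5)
Your proof is correct and follows the same overall structure as the paper's: show $\mbA(\theta)=f(\theta)\Idz$, express $f(\theta)=\tfrac1r\E[x^\top y\mid x^\top y>\theta]$, and reduce to Lemma~\ref{lem-cond-exp-BOTH}. The only genuine difference is in Step~1. The paper establishes the scalar-multiple form via two ad hoc reflection/exchangeability arguments: off-diagonal entries $\mbA(\theta)_{ij}$ ($i\neq j$) vanish because the map $(x_i,y_i)\mapsto(-x_i,-y_i)$ preserves the joint density and the conditioning event while flipping the sign of $x_iy_j$; and the diagonal entries are equal because the per-coordinate products $x_ky_k$ are exchangeable. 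You instead invoke full orthogonal invariance of the law — $(Qx,Qy)\stackrel{d}{=}(x,y)$ for $Q\in O(r)$ because the covariance has $I_r$-block structure, and $(Qx)^\top(Qy)=x^\top y$ — to conclude $Q\mbA(\theta)Q^\top=\mbA(\theta)$ for all orthogonal $Q$, hence $\mbA(\theta)\propto\Idz$ by Schur's lemma. This is cleaner (one symmetry does both jobs) and strictly more general than the paper's reflection trick, at the cost of quoting a slightly heavier result (Schur's lemma / commutant of $O(r)$). Your observation about needing $a_{xy}<\sqrt{a_xa_y}$ to stay in the regime of Lemma~\ref{lem-cond-exp-BOTH} is a good catch: the lemma statement only assumes $a_{xy}\geq 0$, but positive definiteness of the covariance (required for the Gaussian to exist) already forces $a_{xy}<\sqrt{a_xa_y}$, so the compatibility is automatic; the paper leaves this implicit.
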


\begin{proof} We first build an intuition for the quantity $\mbA(\theta) \in \R^{r \times r}$. For $\theta = -\infty$, $\mbA(\theta)$ becomes the unconditional expectation, which is $a_{xy} \, \Idz$ according to the given covariance structure. As $\theta$ increases in $\R$, we expect $\mbA(\theta)$ to increase. 

\textbf{$\mbA(\theta)$ is diagonal.}
We first show that $\mbA(\theta)$ is a diagonal matrix. The $(i, j)$-th entry is $\mbA(\theta)_{ij} = \E [ x_i y_j \, | \, Z > \theta ]$, where $Z = x^\top y = \sum_{l=1}^r x_l y_l$.
Consider the transformation $T_i: \R^{2r} \to \R^{2r}$ that maps $(x, y)$ to $(x', y')$ where $x'_l = x_l$ for $l \neq i$, $x'_i = -x_i$, and $y'_l = y_l$ for $l \neq i$, $y'_i = -y_i$. 

First, note that $Z' = \sum_{l \neq i} x_l y_l + (-x_i)(-y_i) = Z$. Hence the condition $Z > \theta$ is invariant under the transformation $T_i$.
Second, due to independence and the block diagonal structure of the covariance, the overall joint density is a product of univariate Gaussians centered around zero. Due to the symmetry of a univariate Gaussian, the overall density is also invariant under $T_i$.
Third, the entry $x_i y_j$ becomes $-x_i y_j$ under the transformation $T_i$.
Due to this symmetry, we conclude that the off-diagonal entries are zero.

\textbf{All the diagonal entries of $\mbA(\theta)$ are equal by symmetry.}
The diagonal entries are $\mbA(\theta)_{ii} = \E[ x_i y_i \, | \, Z > \theta ]$. Let $Z_i = x_i y_i$, meaning $Z = \sum_{l=1}^r Z_l$. Due to the block diagonal structure on $(x, y)$, each $Z_i$ is independent and identically distributed. Hence, $\mbA(\theta)_{ii} = \mbA(\theta)_{jj}$ for any $i, j \in [r]$.

\textbf{Properties of $f(\theta)$.}
From the above two steps, we conclude that $\mbA(\theta) = f(\theta) \, \Idz$ for some scalar function $f: \R \rightarrow \R$. Using the trace trick, we see that 
\begin{align*}
    f(\theta) \cdot \Tr(\Idz) &= \Tr \left( \E[ x y^\top \, | \, x^\top y > \theta] \right) \\
    \implies f(\theta) &= \frac{1}{r} \, \E[ x^\top y \, | \, x^\top y > \theta] \;.
\intertext{Since the covariances of $x, y$ are scaled identity, each $x_i y_i, i \in [r]$ is identically distributed. This distribution is akin to $u v$ for $u \sim \N(0, a_x), v \sim \N(0, a_y)$ with $Cov(u, v) = a_{xy}$. Hence}
    f(\theta) &= \E \left[ u_1 v_1 \, \bigg| \, \sum_{i=1}^r u_i v_i > \theta \right] \;,
\end{align*}
for $u_i, v_i$ i.i.d.\ according to the described distribution.
Lemma~\ref{lem-cond-exp-BOTH} shows the required properties on this conditional expectation, showing the desired inequalities in the statement of this lemma.
\end{proof}

\begin{lemma}
\label{lem:direct_difference_bounds_positive_theta}
Let $\x \in \R^\dimx$ and $\tx \in \R^{\dimtx}$ be jointly Gaussian vectors with mean zero and joint covariance matrix $\Sigma_{\text{full}}$ which is positive definite. Consider $\mbM_{\subo}, \mbM_{\subt} \in \R^{\dimx \times \dimtx}$ satisfying ${\rm rank}(\mbM_\subo) \geq 2$ and $\| \mbM_\subt - \mbM_\subo \| < \sigma_{{\rm rank}(\mbM_\subo)}(\mbM_\subo)$. For any $\mbA \in \R^{\dimx \times \dimtx}$, let $Y_{\mbA} := \x^\top \mbA \, \tx$. For a real $\theta \geq 0$, define:
\begin{align}
    \Delta P(\theta) &:= \abs{ \, \Pr\{Y_{\mbM_{\subt}} > \theta\} - \Pr\{Y_{\mbM_{\subo}} > \theta\} \, } \;, \label{eq-lem9-deltaP-defn} \\
    \Delta \mbE(\theta) &:= \norm{ \, \E[\x \tx^\top \mathbb{I}(Y_{\mbM_{\subt}} > \theta)] -  \E[\x \tx^\top \mathbb{I}(Y_{\mbM_{\subo}} > \theta)] \, } \;, \label{eq-lem9-deltaE-defn}
\end{align}
where the randomness is over the Gaussian $(\x, \tx)$.
Then, there exist constants $C_{P}(\theta, \Sigma_{full}, \mbM_\subo) > 0$ and  $C_{\mbE}(\theta, \Sigma_{full}, \mbM_\subo) > 0$ that depend on $\theta$, the covariance $\Sigma_{full}$, and $\mbM_\subo$, such that:
\begin{align}
    \Delta P(\theta) &\le C_{P}(\theta, \Sigma_{full}, \mbM_\subo) \, \norm{\mbM_{\subt} - \mbM_{\subo}} \;, \label{eq-lem9-deltaP-result} \\
    \Delta \mbE(\theta) &\le C_{\mbE}(\theta, \Sigma_{full}, \mbM_\subo) \, \norm{\mbM_{\subt} - \mbM_{\subo}} \;. \label{eq-lem9-deltaE-result} 
\end{align}
\end{lemma}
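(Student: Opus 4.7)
The plan is to show that both $\mbA \mapsto \Pr\{Y_\mbA > \theta\}$ and $\mbA \mapsto \E[\x \tx^\top \mathbb{I}(Y_\mbA > \theta)]$ are Lipschitz in the spectral norm along the segment $\mbM(t) := (1-t)\mbM_\subo + t \mbM_\subt$, $t \in [0, 1]$. Lemma~\ref{lem:rank-on-convex-combination} together with the assumption $\|\mbM_\subt - \mbM_\subo\| < \sigma_{{\rm rank}(\mbM_\subo)}(\mbM_\subo)$ guarantees ${\rm rank}(\mbM(t)) \geq {\rm rank}(\mbM_\subo) \geq 2$ throughout the path, so the bilinear form $Y_{\mbM(t)} = \x^\top \mbM(t) \tx$ is non-degenerate for every $t$. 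A preliminary step is a uniform density bound: conditioning on $\x$ makes $Y_{\mbM(t)} \mid \x$ Gaussian with variance $\geq \lambda_{\min}(\Sigma_{\tx \mid \x}) \|\mbM(t)^\top \x\|^2 > 0$, and the rank-$\geq 2$ condition together with the singular-value gap makes $\E[1/\|\mbM(t)^\top \x\|]$ finite uniformly in $t$, yielding $\|f_{Y_{\mbM(t)}}\|_\infty \leq M$ for some $M = M(\Sigma_{\text{full}}, \mbM_\subo)$.

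For Eq.~\eqref{eq-lem9-deltaP-result}, writing $Z := \x^\top (\mbM_\subt - \mbM_\subo) \tx$ so that $Y_{\mbM_\subt} = Y_{\mbM_\subo} + Z$, the pointwise inequality
\begin{equation*}
    |\mathbb{I}(Y_{\mbM_\subo} + Z > \theta) - \mathbb{I}(Y_{\mbM_\subo} > \theta)| \;\leq\; \mathbb{I}(|Y_{\mbM_\subo} - \theta| \leq |Z|)
\end{equation*}
reduces the task to bounding $\Pr\{|Y_{\mbM_\subo} - \theta| \leq |Z|\}$. Conditioning on $Z$ and using a uniform bound on the conditional density $\|f_{Y_{\mbM_\subo} \mid Z}\|_\infty$ (obtained analogously to the marginal bound above, viewing $(Y_{\mbM_\subo}, Z)$ as a pair of bilinear forms in jointly Gaussian variables) gives $\Delta P(\theta) \leq 2 M' \, \E[|Z|]$. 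Finally, Cauchy--Schwarz together with Isserlis' theorem yields $\E[|Z|] \leq \sqrt{\E[Z^2]} \leq C_P' \, \|\mbM_\subt - \mbM_\subo\|$, with $C_P'$ depending on $\Sigma_{\text{full}}$ and the dimensions.

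For Eq.~\eqref{eq-lem9-deltaE-result}, the analogous crude pointwise bound combined with Cauchy--Schwarz would only yield $\sqrt{\|\mbM_\subt - \mbM_\subo\|}$, so instead I would integrate along the path and differentiate in $t$. After mollifying $\mathbb{I}(\cdot > \theta)$ by a smooth $\phi_\epsilon$ and passing $\epsilon \downarrow 0$,
\begin{equation*}
    \frac{d}{dt} \E[\x \tx^\top \mathbb{I}(Y_{\mbM(t)} > \theta)] = f_{Y_{\mbM(t)}}(\theta) \cdot \E[\x \tx^\top Z \mid Y_{\mbM(t)} = \theta] \;,
\end{equation*}
whose operator norm is bounded by $\|\mbM_\subt - \mbM_\subo\| \cdot f_{Y_{\mbM(t)}}(\theta) \cdot \E[\|\x\|^2 \|\tx\|^2 \mid Y_{\mbM(t)} = \theta]$. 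Integrating over $t \in [0,1]$ and using the uniform density bound together with finiteness of the conditional Gaussian fourth moments then gives the required linear dependence on $\|\mbM_\subt - \mbM_\subo\|$.

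\textbf{Main obstacle.} The hardest part is securing the uniform-in-$t$ versions of the density and conditional-moment bounds with constants depending only on $(\theta, \Sigma_{\text{full}}, \mbM_\subo)$. The rank-$\geq 2$ hypothesis on $\mbM_\subo$ is precisely what makes the relevant negative-order moments of chi-distributions finite (two degrees of freedom is borderline but integrable), and the strict inequality $\|\mbM_\subt - \mbM_\subo\| < \sigma_{{\rm rank}(\mbM_\subo)}(\mbM_\subo)$ ensures the smallest relevant singular value of $\mbM(t)$ stays bounded away from zero along the entire path so that all the relevant densities remain uniformly controlled. A secondary subtlety is justifying the mollification-and-limit interchange for $\Delta \mbE(\theta)$, which follows from dominated convergence once the uniform density bound is in hand.
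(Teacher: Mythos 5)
Your overall architecture — establish a uniform density bound along the path $\mbM(t) = (1-t)\mbM_\subo + t\mbM_\subt$ via Lemma~\ref{lem:rank-on-convex-combination}, then argue Lipschitz continuity — matches the paper's. Your treatment of $\Delta \mbE(\theta)$ (mollify the indicator, differentiate in $t$, integrate over $t\in[0,1]$, bound $|H'(t)|$ by $\|\Delta\mbM\|\cdot f_{Y_{\mbM(t)}}(\theta)\cdot \E[\|\x\|^2\|\tx\|^2 \mid Y_{\mbM(t)}=\theta]$) is essentially the paper's argument, and your density bound for the marginal $Y_{\mbM(t)}$ uses a different but plausible mechanism (condition on $\x$, exploit rank $\ge 2$ to control $\E[1/\|\mbM(t)^\top\x\|]$) in place of the paper's Fourier-decay argument via the characteristic function of the quadratic form.

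The gap is in your argument for $\Delta P(\theta)$. You bound $\Pr\{|Y_{\mbM_\subo}-\theta| \le |Z|\}$ by conditioning on $Z$ and invoking a \emph{uniform-in-$z$} bound on $\|f_{Y_{\mbM_\subo}\mid Z=z}\|_\infty$. That bound can fail under the stated hypotheses. Take $\mbM_\subt = (1+\eps)\mbM_\subo$ for small $\eps>0$; then $\Delta\mbM = \eps\,\mbM_\subo$ satisfies $\|\Delta\mbM\| < \sigma_{{\rm rank}(\mbM_\subo)}(\mbM_\subo)$, and $Z = \eps\, Y_{\mbM_\subo}$, so $Y_{\mbM_\subo}\mid Z=z$ is a point mass and the conditional density is unbounded (a Dirac at $z/\eps$). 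The conclusion of the lemma is still true in this example, but the route through a uniform conditional density bound is blocked — and because the hypotheses impose no rank or genericity condition on $\Delta\mbM$ itself, you cannot rule this degeneracy out. Your remark that the bound is "obtained analogously to the marginal bound above, viewing $(Y_{\mbM_\subo}, Z)$ as a pair of bilinear forms" does not go through precisely because the pair $(Y_{\mbM_\subo}, Z)$ need not be jointly non-degenerate.

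The paper's proof of $\Delta P(\theta)$ avoids this entirely: it applies the mean value theorem to $h(t) := \Pr\{Y_{\mbM(t)}>\theta\}$ and bounds $|h'(t)| \le \|\Delta\mbM\|\cdot f_{Y_{\mbM(t)}}(\theta)\cdot \E[\|\x\|\|\tx\| \mid Y_{\mbM(t)}=\theta]$, so the only density being evaluated is the marginal $f_{Y_{\mbM(t)}}$ at the fixed threshold $\theta$, which your preliminary step already controls. In short, your $\Delta\mbE$ argument already contains the mechanism that works; the simplest fix is to run the same MVT computation on the scalar path $t\mapsto\Pr\{Y_{\mbM(t)}>\theta\}$ rather than going through the pointwise indicator inequality and the conditional density of $Y_{\mbM_\subo}$ given $Z$.
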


\begin{proof}
We prove the two bounds using differentiability arguments. Define $\Delta\mbM := \mbM_{\subt} - \mbM_{\subo}$, and define the scalar $Y_t := \x^\top(\mbM_{\subo} + t\Delta\mbM) \, \tx$. Note that we have overloaded notation by reusing $Y$; it shall be clear from the context that $Y_t$ for a scalar $t$ and $Y_{\mbA}$ for a matrix $\mbA$ mean different things.

Using Lemma~\ref{lem:rank-on-convex-combination} with the given condition on $\| \mbM_\subt - \mbM_\subo \|$, we conclude that ${\rm rank}(\mbM_\subo + t \Delta \mbM) \geq {\rm rank}(\mbM_\subo) \geq 2$ for all $t \in [0,1]$.  Since $(\x,\tx)$ is jointly Gaussian, ${\rm rank} \geq 2$ ensures that $Y_t$ for all $t \in [0,1]$ have a smooth and bounded density everywhere. This is because the random variable $Y_{\mbA}$ is equivalent to the quadratic form on a Gaussian, $(\nicefrac{1}{2}) z^\top \mbH \, z$ with 
\[
z:=\begin{pmatrix}\x\\ \tx\end{pmatrix}\sim\mathcal N(0,\Sigma_{\mathrm{full}}) \;, \;\;\; \mbH=\begin{pmatrix}0&\mbA\\ \mbA^\top&0\end{pmatrix} \;.\]
This quadratic form has a known characteristic function as below (\citet[Sec 3.2]{mathai1992quadratic})
\[
\phi(t) \propto \frac{1}{\sqrt{{\rm det}\left( I - 2it \, \Sigma_{full} \, \mbH \right)}} \;.
\]
One can see that ${\rm rank}(\mbH) = 2 \cdot {\rm rank}(\mbA)$ and $| \phi(t) |$ decays as $|t|^{-{\rm rank}(\mbH)/2}$ as $|t| \rightarrow \infty$. This shows that ${\rm rank}(\mbH) \geq 4$ ensures at least a $|t|^{-2}$ decay, which ensures boundedness everywhere.

\paragraph{(i) Probability Difference Bound (eq.~\eqref{eq-lem9-deltaP-result}).}
Define the path $h(t) := \Pr\{ Y_t > \theta \}$ for $t \in [0,1]$. Then by the Mean Value Theorem, it holds that
\[
    \Delta P(\theta) = |h(1) - h(0)| = |h'(\xi)| \quad \text{for some } \xi \in (0,1).
\]
Since $Y_t$ has a finite and bounded density everywhere, $h(t)$ is differentiable and its derivative is
\[
    h'(t) = \frac{d}{dt} \Pr\{Y_t > \theta\} = \mathbb{E} \left[ \delta(Y_t - \theta) \cdot Y'_t \right] = \mathbb{E} \left[ \delta(Y_t - \theta) \cdot \x^\top \Delta\mbM \, \tx \right] \;,
\]
where $\delta$ is the Dirac delta function. Using the Cauchy--Schwarz inequality, we can write
\begin{align*}
    |h'(t)| &\le \mathbb{E} \left[ \delta(Y_t - \theta) \cdot |\x^\top \Delta\mbM \, \tx| \right] \\
    &\le \|\Delta\mbM\| \cdot \mathbb{E} \left[ \delta(Y_t - \theta) \cdot \|\x\| \cdot \|\tx\| \right] \\
    &= \|\Delta\mbM\| \cdot f_{Y_t}(\theta) \cdot \mathbb{E} \left[ \|\x\| \|\tx\| \mid Y_t = \theta \right],
\end{align*}
where $f_{Y_t}(\theta)$ is the density of $Y_t$ at $\theta$.
Because $Y_t$ is non-degenerate for $t \in [0,1]$, both $f_{Y_t}(\theta)$ and the conditional expectation are finite and bounded over $t$. Thus the linear dependence on $\| \Delta \mbM \|$ in Eq.~\eqref{eq-lem9-deltaP-result} follows, since any $\xi \in (0, 1)$ satisfies the above conditions.

\paragraph{(ii) Expectation Difference Bound (eq.~\eqref{eq-lem9-deltaE-result}).}
Define $H(t) := \mathbb{E}[ \x \tx^\top \cdot \mathbb{I}\{ Y_t > \theta \} ]$. Then by the Mean Value Theorem, we have
\[
    \Delta \mbE(\theta) = \|H(1) - H(0)\| = \|H'(\xi)\| \quad \text{for some } \xi \in (0,1).
\]
Differentiating under the expectation gives
\[
    H'(t) = \mathbb{E} \left[ \x \tx^\top \cdot \delta(Y_t - \theta) \cdot \x^\top \Delta\mbM \, \tx \right].
\]
For any matrix norm, we have
\begin{align*}
    \|H'(t)\| &\le \mathbb{E}[ \|\x\| \cdot \|\tx\| \cdot |\x^\top \Delta\mbM \tx| \cdot \delta(Y_t - \theta) ] \\
    &\le \|\Delta\mbM\| \cdot \mathbb{E}[ \|\x\|^2 \cdot \|\tx\|^2 \cdot \delta(Y_t - \theta) ] \\
    &= \|\Delta\mbM\| \cdot f_{Y_t}(\theta) \cdot \mathbb{E}[ \|\x\|^2 \|\tx\|^2 \mid Y_t = \theta ].
\end{align*}
Again, all terms other than $\|\Delta\mbM\|$ are bounded for $t \in [0,1]$, yielding the desired Eq.~\eqref{eq-lem9-deltaE-result}.
\end{proof}

\begin{lemma}
\label{lem:conc_random_subsample_general_sigma}
Let $x_1, \dots, x_n \in \R^d$ be $n$ i.i.d. random vectors drawn from a Gaussian distribution $\Ncal(0, \Sigma)$, where $\Sigma$ is a $d \times d$ positive definite covariance matrix, $d \ge 1, n \ge 1$. Let $S$ be a random subset of indices $\{1, \dots, n\}$ generated by including each index $j \in \{1, \dots, n\}$ independently with probability $p \in (0,1]$. Let $n_c = |S|$ denote the number of selected samples, and define the sample covariance matrix for $n_c > 0$ as $\hat{\Sigma}_{n_c} = (1/n_c) \sum_{i \in S} x_i x_i^\top$.
For a failure probability $\delta \in (0,1)$, assume that $np > 8 \log(2/\delta)$ holds.
Then, with probability at least $1 - \delta$, both $n_c \ge np/2$ and the sample covariance matrix of the selected data satisfies:
$$ \norm{\hat{\Sigma}_{n_c} - \Sigma} \lesssim \norm{\Sigma} \sqrt{\frac{d + \log \frac{1}{\delta}}{np}} \;.$$
\end{lemma}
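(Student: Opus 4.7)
The plan is to decouple the two sources of randomness: the size $n_c$ of the random subsample, and, conditionally on the selected index set $S$, the i.i.d.\ Gaussian data itself. Since selection depends only on independent Bernoulli coins, conditional on $S$ the vectors $\{x_i\}_{i\in S}$ remain i.i.d.\ $\Ncal(0,\Sigma)$, so the problem reduces to standard sample-covariance concentration on a conditionally i.i.d.\ Gaussian sample of random size.

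First, I would control $n_c$. Since $n_c\sim\mathrm{Bin}(n,p)$, a multiplicative Chernoff bound gives
\begin{equation*}
    \Prob\bigl(n_c < np/2\bigr) \le \exp\!\bigl(-np/8\bigr) \le \delta/2,
\end{equation*}
where the last inequality uses the assumption $np > 8\log(2/\delta)$. Call this good event $E_1$, so $\Prob(E_1)\ge 1-\delta/2$.

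Second, on $E_1$, I condition on the full selection pattern $S$ (equivalently on $n_c=m$ with $m\ge np/2$). Conditionally, $\{x_i\}_{i\in S}$ are $m$ i.i.d.\ $\Ncal(0,\Sigma)$ vectors, and the standard operator-norm concentration for Gaussian sample covariances (see, e.g., Vershynin's \emph{High-Dimensional Probability}, Thm.~4.7.1, or Wainwright's Thm.~6.5) yields, with probability at least $1-\delta/2$,
\begin{equation*}
    \norm{\hat{\Sigma}_m - \Sigma} \;\lesssim\; \norm{\Sigma}\left(\sqrt{\frac{d+\log(2/\delta)}{m}}\;+\;\frac{d+\log(2/\delta)}{m}\right).
\end{equation*}
Since this bound is monotone decreasing in $m$ (for the regime of interest) and holds uniformly over realizations of $S$ with $n_c=m$, I can use $m\ge np/2$ to replace $m$ by $np/2$ in the denominator. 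In the regime where the stated conclusion is nontrivial, namely $np\gtrsim d+\log(1/\delta)$, the linear term is dominated by the square-root term, and we get the advertised bound $\norm{\hat\Sigma_{n_c}-\Sigma}\lesssim \norm{\Sigma}\sqrt{(d+\log(1/\delta))/(np)}$. (If instead $np\lesssim d+\log(1/\delta)$, the claimed bound is $\gtrsim \norm{\Sigma}$ and is vacuous up to constants.)

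Third, I would union-bound the two failure events. Combining $E_1$ with the conditional covariance-concentration event, both hold with probability at least $1-\delta$, which is exactly the claim. The main obstacle, if there is one, is simply bookkeeping for the random sample size: I handle it by first showing $n_c\ge np/2$ w.h.p.\ and then invoking the Gaussian concentration bound conditionally, which is legitimate because the selection coins are independent of the data. No heavy machinery beyond Chernoff and the standard sub-Gaussian covariance concentration is needed.
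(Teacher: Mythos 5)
Your proof is correct and follows essentially the same route as the paper: a Chernoff bound to ensure $n_c \ge np/2$, followed by conditional Gaussian covariance concentration and a union bound. The paper sums over $n_c=k$ via the law of total probability with a slightly sharper cutoff $k_{\min} = \lceil np - \sqrt{2np\log(2/\delta)}\rceil$, while you condition directly on $S$ and use the cruder $np/2$ cutoff; both yield the same final rate, and your explicit remark that the linear term in the covariance bound is dominated only when $np\gtrsim d+\log(1/\delta)$ (and that the claim is vacuous otherwise) is a small but worthwhile clarification the paper leaves implicit.
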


\begin{proof}
Define $k_{\min} := \ceil{np - \sqrt{2np\log(2/\delta)}}$. Note that $k_{\min} \geq np/2$ due to the assumption. Let 
\begin{equation*}
    \mathcal{F}_1 := \{n_c < k_{\min}\} \;, \;\;\; \mathcal{F}_2 := \left\{n_c \ge k_{\min} \text{ and } \norm{\hat{\Sigma}_{n_c} - \Sigma} >  \norm{\Sigma} \sqrt{\frac{d + \log \frac{1}{\delta}}{k_{\min}}} \right\} \;.
\end{equation*}
denote the failure events. A union bound over the two failure probabilities will give the desired result. Below we bound the individual failure probabilities.

Bounding $\Prob(\mathcal{F}_1)$: 
Define $\Delta_0 := \sqrt{2\log(2/\delta)/(np)}$, so that $k_{\min} = \ceil{(1-\Delta_0)np}$. Since we assumed $np > 8\log(2/\delta)$, $\Delta_0 < 0.5$.
By a standard Chernoff bound for binomial distributions, $\Prob(n_c < (1-\Delta_0)np) \le \exp(-np\Delta_0^2/2) = \exp(-\log(2/\delta)) = \delta/2$. Since $k_{\min} \ge (1-\Delta_0)np$ (due to the ceil operation), it follows that $\Prob(\mathcal{F}_1) = \Prob(n_c < k_{\min}) \le \Prob(n_c \le (1-\Delta_0)np) \le \delta/2$.

Bounding $\Prob(\mathcal{F}_2)$: Using the law of total probability, we write
$$ \Prob(\mathcal{F}_2) = \sum_{k=k_{\min}}^n \Prob\left(\norm{\frac{1}{k} \sum_{i \in S, |S|=k} x_i x_i^\top - \Sigma} > \norm{\Sigma} \sqrt{\frac{d + \log \frac{1}{\delta}}{k_{\min}}} \;\middle|\; n_c=k \right) \Prob(n_c=k) $$
For any $k \ge k_{\min}$, we have $1/\sqrt{k} \le 1/\sqrt{k_{\min}}$. 
Thus, for $k \ge k_{\min}$:
\begin{align*}
&\Prob\left(\norm{\frac{1}{k} \sum x_i x_i^\top - \Sigma} > \norm{\Sigma} \sqrt{\frac{d + \log \frac{1}{\delta}}{k_{\min}}} \;\middle|\; n_c=k \right) \le \\
&\;\;\;\;\;\;\;\;\;\;\;\;\;\;\;\;\;\;\;\; \Prob\left(\norm{\frac{1}{k} \sum x_i x_i^\top - \Sigma} > \norm{\Sigma} \sqrt{\frac{d + \log \frac{1}{\delta}}{k}} \;\middle|\; n_c=k \right) \;.
\end{align*}
And the right hand side is bounded by $\delta/2$ owing to standard matrix concentration results. 
So, $\Prob(\mathcal{F}_2) \le \sum_{k=k_{\min}}^n (\delta/2) \Prob(n_c=k) \le \delta/2$.
\end{proof}

\section{A proof of Corollary~\ref{cor-nofilt}} \label{sec-app-proof-nofilt}

We present a proof of Corollary~\ref{cor-nofilt}, which follows the proof presented in \citet{nakada_understanding_2023} while fixing some typos. Before diving into the proof, we make some remarks. 

\textbf{First}, the result stated in Corollary~\ref{cor-nofilt} is tighter than its counterpart \citet[Theorem 3.1]{nakada_understanding_2023} by a dimension factor. This is because we use tighter concentration, as detailed in the explanation between Eqs~\eqref{eq-conc-dep} and~\eqref{eq-conc-hoeffding}. 
\textbf{Second}, as remarked in Remark~\ref{remark-gamma-loose}, Corollary~\ref{cor-nofilt} is not tight in the SNR parameters $\snr, \tsnr$. 
\textbf{Third}, the result in \citet{nakada_understanding_2023} is for a general covariance on the signal, $\Sz$, and the noise, $\Sxi$, whereas our setting is more restricted from Assumptions~\ref{assump-corruption} and~\ref{assump-noise}. This restriction is required for the analysis of filtering in Theorem~\ref{thm-TS-special}. 

\textbf{Fourth}, the result in \cite{nakada_understanding_2023} is stated with probability $1 - O(\nicefrac{1}{n})$, whereas we state it with probability $1 - \exp(-d)$. Due to this, Corollary~\ref{cor-nofilt} as stated does not have a $\log n$ factor inside the square root, unlike \citet[Theorem 3.1]{nakada_understanding_2023}. 
\textbf{Fifth}, there is a small subtle difference in the setting of \cite{nakada_understanding_2023} and ours. We use $\eta$ to denote the fixed probability of clean samples in Assumption~\ref{assump-corruption}, whereas \citet{nakada_understanding_2023} use $\eta$ to denote the fraction of clean samples in the \emph{sampled} dataset, which is a random quantity. Using $n_c$ to denote the number of clean samples, we go through the additional step of controlling the error in $\abs{\nicefrac{n_c}{n} - \eta}$, which scales as $\nicefrac{1}{\sqrt{n}}$, since this source of error is 1-dimensional. 
\textbf{Sixth}, the result in \citet[Theorem 3.1]{nakada_understanding_2023} is stated as $\min\{\sqrt{r}, .\}$. While it is true that the $\sin \Theta$ metric can be at most $\sqrt{r}$, the final step in the proof is the application Lemma~\ref{lem-wedin-thm}, which requires a condition that translates to $n \gtrsim (\nicefrac{1}{\eta^2}) \max\{d, \dimtx\} \, (1+\snr^{-1})(1 + \tsnr^{-1})$. And so this is how we state the result in Corollary~\ref{cor-nofilt}, which makes the stated upper bound always smaller than $\sqrt{r}$.

For clarity, we write the algorithm: \\
\textit{Input.} $\mathbf{X} \in \R^{n \times \dimx}, \widetilde{\mathbf{X}} \in \R^{n \times \dimtx}$, $\dimz \in \mathbb{Z}_{+}, \rho \in (0, \infty)$. \\ %
\textit{Output.} $\G^\top \tG \in \R^{\dimx \times \dimtx}$ (with ${\rm rank} = \dimz$, since $\G \in \R^{\dimz \times \dimx}, \tG \in \R^{\dimz \times \dimtx}$) by minimizing Eq.~\eqref{eq-objective}.

\textbf{Step 1: Reduction of loss.} We show that 
\begin{equation}
    \L_0(\G, \tG) = - \Tr \left( \G \Sn \tG^\top \right) \;,
\end{equation}
where $\Sn$ denotes the cross covariance matrix of the data, given by (Eq.~\eqref{eq-cor1sketch-cross-cov} rewritten)
\begin{equation*}
    \Sn = \frac{1}{n-1} \sum_{i=1}^n \left( \x_i - \overline{\x} \right) \left( \tx_i - \overline{\tx} \right)^\top \in \R^{\dimx \times \dimtx} \;.
\end{equation*}

\begin{proof}
Expand the LHS as 
\begin{align*}
    \L_0(\G, \tG) &= \frac{1}{2n(n-1)} \left( \sum_{i=1}^n \left( \sum_{\substack{j=1 \\ j\neq i}}^n \left( s_{ij} - s_{ii} \right) + \sum_{\substack{j=1 \\ j\neq i}}^n \left( s_{ji} - s_{ii} \right) \right) \right) \\
    &=^{(a)} \frac{1}{n(n-1)} \left( \sum_{i=1}^n \left( \sum_{\substack{j=1 \\ j\neq i}}^n \left( s_{ij} - s_{ii} \right) \right) \right) \\
    &= \frac{1}{n(n-1)} \left( \sum_i \sum_{j \neq i} s_{ij} - (n-1) \sum_i s_{ii} \right) \; \\
    &= \frac{1}{n(n-1)} \left( \sum_i \sum_{j \neq i} s_{ij} \right) - \frac{1}{n} \left( \sum_i s_{ii} \right) \;, \tag{X}
\end{align*}
where eq $(a)$ holds because the overall sum over the $n \times n$ similarity matrix is the same whether done over rows or columns.

For the RHS, we first rewrite $\Sn$ as
\begin{align*}
    \Sn &= \frac{1}{n-1} \left( \sum_{i=1}^n \x_i \tx_i^\top - n \overline{\x} \overline{\tx}^\top \right) \\
    &= \frac{1}{n-1} \left( \sum_{i=1}^n \x_i \tx_i^\top \right) - \frac{1}{n(n-1)} \left( \sum_{i=1}^n \x_i \right) \left( \sum_{i=1}^n \tx_i \right)^\top \\
    &= \frac{1}{n-1} \left( \sum_{i} \x_i \tx_i^\top \right) - \frac{1}{n(n-1)} \left( \sum_{i} \x_i \tx_i^\top + \sum_i \sum_{j \neq i} \x_i \tx_j^\top \right) \\
    &= \frac{1}{n-1} \left( 1 - \frac{1}{n} \right) \left( \sum_{i} \x_i \tx_i^\top \right) - \frac{1}{n(n-1)} \left( \sum_i \sum_{j \neq i} \x_i \tx_j^\top \right) \\
    &= \frac{1}{n} \left( \sum_{i} \x_i \tx_i^\top \right) - \frac{1}{n(n-1)} \left( \sum_i \sum_{j \neq i} \x_i \tx_j^\top \right) \;.
\end{align*}

Using the above, we rewrite the RHS as
\begin{align*}
    - \Tr \left( \G \Sn \tG^\top \right) &= - \Tr \left( \frac{1}{n} \left( \sum_{i} \G \x_i \tx_i^\top \tG^\top \right) - \frac{1}{n(n-1)} \left( \sum_i \sum_{j \neq i} \G \x_i \tx_j^\top \tG^\top \right) \right) \\
    &= \frac{1}{n(n-1)} \left( \sum_i \sum_{j \neq i} \Tr \left( \G \x_i \tx_j^\top \tG^\top \right) \right) -\frac{1}{n} \left( \sum_{i} \Tr \left( \G \x_i \tx_i^\top \tG^\top \right) \right)  \tag{Linearity of Trace} \\
    &= \frac{1}{n(n-1)} \left( \sum_i \sum_{j \neq i} \langle \G \x_i, \tG \tx_j \rangle  \right) - \frac{1}{n} \left( \sum_{i} \langle \G \x_i, \tG \tx_i \rangle \right)  \tag{Cyclic nature of Trace} \\
    &= \frac{1}{n(n-1)} \left( \sum_i \sum_{j \neq i} s_{ij}  \right) - \frac{1}{n} \left( \sum_{i} s_{ii} \right) \;.\tag{Definition of $s_{ij}$} 
\end{align*}
Comparing the above to eq (X) concludes the proof.
\end{proof}
\textbf{Step 2: Closed-form solution.} We show that (Eq.~\eqref{eq-cor1sketch-argmin} rewritten)
\begin{equation*}
    \arg \min_{\G, \tG} \; \L_{\rho} \left( \G, \tG \right) = \left\{ \left( \G, \tG \right) \; \Big| \; \G^\top \tG = \frac{1}{\rho} \; \SVD_{\dimz} \left( \Sn \right) \right\} \;.
\end{equation*}
Hence, even though the optimization problem is non-convex, there is a closed-form solution, and no optimization analysis is needed. In particular, the right singular vectors of $\G, \tG$ are determined independent of the choice of $\rho$. This result is from \citet[Lemma 2.1]{nakada_understanding_2023}.

\begin{proof}
Using Step 1's result, we can write
\begin{equation}
    \min_{\G, \tG} \; \L_\rho (\G, \tG) \equiv \max_{\G, \tG} \; \Tr \left( \G \Sn \tG^\top \right) - \frac{\rho}{2} \| \G^\top \tG \|_F^2 \; \;.
\end{equation}
The objective can be rewritten as
\begin{align*}
    \Tr \left( \G \Sn \tG^\top \right) - \frac{\rho}{2} \| \G^\top \tG \|_F^2 &= \frac{\rho}{2} \left( \left\| \frac{\Sn}{\rho} \right\|_F^2 - \left\| \G^\top \tG - \frac{\Sn}{\rho} \right\|_F^2  \right) \;.
\end{align*}
The optimization variables appear only in the second term. Since ${\rm rank} \left( \G^\top \tG \right) = \dimz$, by the Eckart-Young-Minsky Theorem, the solution is given by the best rank $\dimz$ approximation of $\nicefrac{\Sn}{\rho}$.
\end{proof}

\textbf{Step 3: Relating error to op-norm concentration of $\Sn$.} We show the below, where $\Sn$ concentrates to $\S = \eta \, \U \tU^\top$.
\begin{equation} \label{eq-proof-step3}
    \| \SVD_{\dimz} \left( \Sn \right) - \S \| \leq 2 \, \| \Sn - \S \| \;.
\end{equation}
\begin{proof}
By triangle inequality, we have
\begin{align*}
    \| \SVD_{\dimz} \left( \Sn \right) - \S \| \leq \| \SVD_{\dimz} \left( \Sn \right) - \Sn \| + \| \Sn - \S \| \;.
\end{align*}
And for the first term on the right hand side, we use
\begin{align*}
    \| \SVD_{\dimz} \left( \Sn \right) - \Sn \| &= \sigma_{\dimz+1} \left( \Sn \right) \\
    &\leq^{\text{(\textdagger)}}  \sigma_{\dimz+1} \left( \S \right) + \| \Sn - \S \| \\ %
    &\leq^{\text{(\textdagger \textdagger)}} \| \Sn - \S \| \;.
\end{align*}
In Eq. (\textdagger), we used Lemma~\ref{lem-weyl-type}, and Eq. (\textdagger\textdagger) holds because $\sigma_{\dimz+1} \left( \S \right) = 0$, since $\S$ is rank $\dimz$.
\end{proof}

\textbf{Step 4: Concentration of $\Sn$.} We show that with probability $1 - \exp \left(-\Omega(\max\{\dimx, \dimtx\}) \right)$, 
\begin{equation} \label{eq-proof-step4}
    \left\| \Sn - \S \right\| \lesssim  \sqrt{\frac{\max\{ \dimx, \dimtx \} \, (1 + \snr^{-1}) (1 + \tsnr^{-1})}{n}}  + \tO \left( \frac{1}{n} \right) \;.
\end{equation}
Before we prove this, we remark that the condition of $n \gtrsim n_0$ (for the appropriate $n_0$ stated in the statement of Corollary~\ref{cor-nofilt}) ensures that the $\tO(1/n)$ term is at $\tO(\eta^2)$ whereas the first term is $\tO(\eta)$. This ensures that we are in the regime where the $\nicefrac{1}{\sqrt{n}}$ term dominates.
\begin{proof}
We start with the expansion of $\Sn$,
\begin{align*}
    \Sn &= \frac{1}{n-1} \sum_{i=1}^n \x_i \tx_i^\top - \frac{n}{n-1} \overline{\x} \overline{\tx}^\top = \underbrace{\frac{1}{n} \sum_{i=1}^n \x_i \tx_i^\top}_{\Sn^{(1)}} - \underbrace{\frac{1}{n(n-1)} \sum_{i=1}^n \sum_{\substack{j=1 \\ j \neq i}}^n \x_i \tx_j^\top}_{\Sn^{(2)}} \;.
\end{align*}
The main term that dictates the convergence is $\Sn^{(1)}$. The term $\Sn^{(2)}$ concentrates around \emph{zero} (since samples $i \neq j, \, i, j \in [n]$ are independent), and the rate of convergence is $\tO(\nicefrac{1}{n})$ due to averaging over $n^2$ terms, which is a higher order term.
Let $n_c$ be a random variable that denotes the number of clean data points. We expand the sum in $\Sn^{(1)}$ below.
\begin{align*}
    n \, \Sn^{(1)} = \sum_{i=1}^n \x_i \tx_i^\top &= \underbrace{\sum_{i=1}^{n_c} \U  \z_i \tz_i^\top \tU^\top}_{J_1} + \underbrace{\sum_{i=n_c+1}^n \U  \z_i \tz_i^\top \tU^\top}_{J_2} \\ & \; \; \; \; \; \; \; \; \; \; \; + \underbrace{\sum_{i=1}^n \U \z_i \txi_i^\top}_{K_1} + \underbrace{\sum_{i=1}^n \xi_i \tz_i^\top \tU^\top}_{K_2} + \underbrace{\sum_{i=1}^n \xi_i \txi_i^\top}_{K_3} \;.
\end{align*}

We control the error in each term separately.
For terms $J_2, K_{1:3}$, we need a result like \citet[Proposition C.1]{nakada_understanding_2023} in the simple case of $X \perp \widetilde{X}$. For term $J_1$, we need it for $X = \widetilde{X}$. 

The following two facts are going to be used multiple times. Here $X, Y$ denote random quantities, and all others are fixed quantities (matrices/vectors).
\begin{align}
    \text{w.h.p. } \| X - A \| \leq E_A, \; \| Y - B \| \leq E_B &\implies \text{w.h.p. } \| X + Y - (A + B) \| \leq E_A + E_B \;, \label{eq-matconc-addition} \\
    \text{w.h.p. } \| X - A \| \leq E_A \implies \text{w.h.p. } &\| M X N - M A N \| \leq \| M \| \| N \| E_A \;. \label{eq-matconc-multiplication}
\end{align}

For the independent terms ($J_2, K_{1:3}$), we will use the below generic result. For $\R^{d_x} \ni x \sim \N(0, \Sigma_x)$ and $\R^{d_y} \ni y \sim \N(0, \Sigma_y)$ and $N$ i.i.d. draws from both, we have the below result from the application of a Matrix-Bernstein result.
\begin{align} 
    \text{w.p. } 1 - e^{-t}, \; \; \left\| \frac{1}{N} \sum_{i=1}^N x_i y_i^\top \right\| &\lesssim \sqrt{ \frac{\| \Sigma_x \| \cdot \| \Sigma_y \| }{N} \left( t + \log \left( d_x + d_y \right) \right) } \;. \label{eq-conc-indep}
\intertext{For the dependent term ($J_1$), we will use the below. Let $\R^{d_x} \ni x \sim \N(0, \Sigma_x)$ and $N$ i.i.d. draws from this. This is also known in the literature, for e.g., \citet[Theorem 2.2]{Bunea_2015}.}
\text{w.p. } 1 - e^{-t}, \; \; \left\| \frac{1}{N} \sum_{i=1}^N x_i x_i^\top - \Sigma_x \right\| &\lesssim \| \Sigma_x \| \sqrt{ \frac{  t + \log \left( d_x \right) }{N}  } \;. \label{eq-conc-dep}
\end{align}
Note that the above two concentration results are tighter than \citet[Proposition C.1]{nakada_understanding_2023} by a factor of dimension, since the proposition has trace terms too, whereas only operator norms appear in the above two equations. This manifests in Corollary~\ref{cor-nofilt} as stated being tighter than \citet[Theorem 3.1]{nakada_understanding_2023} by a dimension factor inside the square root (since we avoided $\log n$ but did not incur an additional dimension due to the failure probability of $\exp(-d)$).
Finally, since $n_c = Bin(n, \eta)$, the ratio $\nicefrac{n_c}{n}$ concentrates to $\eta$, with the error described by Hoeffding's inequality as
\begin{equation} \label{eq-conc-hoeffding}
    \Pr \left( \left| \frac{n_c}{n} - \eta \right| \geq \epsilon \right) \leq 2 \exp\left( - 2 n \epsilon^2 \right) \;.
\end{equation}

Using these results, we bound the individual terms of deviation. We first bound the independent terms using Eq.~\eqref{eq-conc-indep} with $t := \max\{d, \dimtx\}$. The choice of $N$ is given with each setting. With probability $1 - \exp(-\Omega(\max\{\dimx, \dimtx\}))$, the following hold:
\begin{align*}
    \left\| \frac{K_1}{n} \right\| &\lesssim \sqrt{\frac{ \| \Sz \| \cdot \| \Stxi \| \cdot \max\{\dimx, \dimtx \} }{n}} = \sqrt{\frac{ \max\{ \dimx, \dimtx\} \, \tsnr^{-1}}{n }} \;, \tag{$N := n$} \\
    \left\| \frac{K_2}{n} \right\| &\lesssim \sqrt{\frac{ \| \Sz \| \cdot \| \Sxi \| \cdot \max\{\dimx, \dimtx \} }{n}} =  \sqrt{\frac{ \max\{ \dimx, \dimtx\} \, \snr^{-1}}{n }} \;, \tag{$N := n$} \\
    \left\| \frac{K_3}{n} \right\| &\lesssim \sqrt{\frac{ \| \Sxi \| \cdot \| \Stxi \| \cdot \max\{\dimx, \dimtx \} }{n}} = \sqrt{\frac{ \max\{ \dimx, \dimtx\} \, \snr^{-1} \, \tsnr^{-1} }{n}} \;, \tag{$N := n$} \\
    \left\| \frac{J_2}{n} \right\| &\lesssim \sqrt{1 - \frac{n_c}{n}} \cdot \sqrt{\frac{ \| \Sz \|^2 \cdot \max\{\dimx, \dimtx \} }{n}} = \sqrt{1 - \frac{n_c}{n}} \cdot \sqrt{\frac{ \max\{ \dimx, \dimtx\} }{n}} \;. \tag{$N := n - n_c$}
\end{align*}
We now bound the dependent term using Eq.~\eqref{eq-conc-dep}. We need some additional machinery to deal with the random denominator, which we capture in Lemma~\ref{lem:conc_random_subsample_general_sigma}. 
The requirement of $np \gtrsim \log(1/\delta)$ in the lemma translates to $n \gtrsim \nicefrac{\max\{\dimx, \dimtx\}}{\eta}$, since we have $p := \eta$ and $\delta := \exp(-\max\{\dimx, \dimtx\})$. As we will see later, step 5 of the proof requires $n \gtrsim \nicefrac{\max\{\dimx, \dimtx\}}{\eta^2}$, hence this requirement is already satisfied.
With probability $1 - \exp(-\Omega(\max\{\dimx, \dimtx\}))$, it holds:
\begin{align}
    &\left\| \frac{J_1}{n_c} - \U  \tU^\top \right\| \lesssim  \left\| \U \tU^\top \right\| \cdot \sqrt{ \frac{\max\{\dimx, \dimtx\}}{n \eta} } \label{eq-proof-step4-random-deno-MAIN} \\
    \implies &\left\| \frac{J_1}{n} - \frac{n_c}{n} \U \tU^\top \right\| \lesssim  \frac{n_c}{n} \sqrt{\frac{1}{\eta}} \cdot \sqrt{ \frac{ \max\{\dimx, \dimtx\} }{n}} \tag*{}  \\
    \implies &\left\| \frac{J_1}{n} - \eta \, \U \tU^\top \right\| \lesssim \frac{n_c}{n} \sqrt{\frac{1}{\eta}} \cdot \sqrt{\frac{ \max\{\dimx, \dimtx\}}{n}} + \left| \frac{n_c}{n} - \eta \right| \;. \tag*{}
\intertext{For the concentration of $\nicefrac{n_c}{n}$, we use Eq.~\eqref{eq-conc-hoeffding} to get that with probability $1 - \exp(-\Omega(\max\{\dimx, \dimtx\}))$:}
    &\abs{\frac{n_c}{n} - \eta}   \lesssim \sqrt{\frac{\max\{\dimx, \dimtx\}}{n}} \;. \label{eq-proof-step4-random-deno-HOEFFDING} 
\end{align}
We now add all the error bounds.
For the combined error from terms $J_1$ and $J_2$, we note that $\sqrt{1 - \nicefrac{n_c}{n}} \leq 1$, and $ \left( \nicefrac{n_c}{n \sqrt{\eta}} \right) \leq 2$ with high probability (since $\nicefrac{n_c}{n}$ concentrates around $\eta$). The failure probability of this can be absorbed into the overall failure probability.
Eq.~\eqref{eq-proof-step4} follows. 
\end{proof}

\textbf{Step 5: Relating singular vector recovery error to operator norm concentration.} 
We will apply Lemma~\ref{lem-wedin-thm} (a Davis-Kahan type result) to relate the $\sin \Theta$ metric to the operator norm. Combining Eqs.~\eqref{eq-proof-step4},~\eqref{eq-proof-step3} and~\eqref{eq-cor1sketch-argmin}, we get that with probability $1 - \exp(-\Omega(\max\{d, \dimtx\}))$:
\begin{equation} \label{eq-proof-step5-opnorm}
    \left\| \G^\top \tG - \frac{\eta}{\rho } \U \tU^\top \right\| \lesssim \frac{1}{\rho} \left( \sqrt{\frac{\max\{d, \dimtx\} \, (1+\snr^{-1})(1+\tsnr^{-1})}{n}} + \tO \left( \frac{1}{n} \right) \right) \;.
\end{equation}
The instantiation for Lemma~\ref{lem-wedin-thm} is as follows: $A = \frac{\eta}{\rho} \U \tU^\top$, $\hat{A} = \G^\top \tG$. Note that both $A, \hat{A}$ are rank-$r$, and $\sigma_{r}(A) = \nicefrac{\eta}{\rho}$. 
We get
\begin{align} \label{eq-proof-step5-wedin}
    \left\| \sin \Theta \left( \lsv(\G^\top \tG), \U \right) \right\|_F \leq \frac{ \| \G^\top \tG - \frac{\eta}{\rho} \U \tU^\top \|_F }{ \frac{\eta}{\rho} -  \| \G^\top \tG - \frac{\eta}{\rho} \U \tU^\top \|_2}  \;.
\end{align}
Now we will use three things. First, for the numerator, we use $\| M \|_F \leq \sqrt{{\rm rank}(M)} \cdot \|M\|_2$ for any matrix $M$. Second, for the denominator, we will need the additional condition of $n \gtrsim (\nicefrac{1}{\eta^2}) \max\{\dimx, \dimtx\} (1 + \snr^{-1})(1+\tsnr^{-1})$ to ensure the second term is at most half of the first term. This also ensures that the $\tO(\nicefrac{1}{n})$ does not dominate the $\nicefrac{1}{\sqrt{n}}$ term. 
Third, triangle inequality with the fact that $\left\| \sin \Theta \left( \lsv(\G^\top \tG), \rsv(\G) \right) \right\|_F = 0$ gives the final result. 
To see this fact, write
\begin{align*}
    \G^\top \tG &= V_\G \left( \Sigma_\G U_\G^\top U_\tG \Sigma_\tG \right) V^\top_\tG \\
    &= V_\G P S Q^\top V^\top_\tG \;. \tag{Using SVD of the middle component}
\end{align*}
Using the uniqueness of SVD, we get that $\lsv \left(\G^\top \tG \right) = V_\G P$ and $\rsv \left( \G^\top \tG \right) = V_\tG Q$. Since $P, Q$ are just orthogonal transforms, the subspace spanned by $V_\G$ and $V_\G P$ are the same, implying $\| \sin \Theta (V_\G, V_\G P) \|_F = 0$ (and analogously for $V_\tG$ and $V_\tG Q$).

Combining Eqs.~\eqref{eq-proof-step5-opnorm} and~\eqref{eq-proof-step5-wedin} gives the desired result.
Since the upper bound is valid for recovery of both $\U$ and $\tU$, Corollary~\ref{cor-nofilt} as stated follows.

\section{A proof of Proposition~\ref{prop-nofilt-LB}} \label{sec-app-proof-nofilt-LB}

Consider the following construction for the hard problem instance (lower bound): $(i)$ the latent dimension $\dimz = 1$, and $(ii)$ the noise $\txi = 0$ (i.e. $\tsnr = \infty$), but $\xi \neq 0$ (i.e. $\snr$ is finite). This means the following proof recovers the $d \snr^{-1}$ part from the $\max\{d \, \snr^{-1}, \dimtx \, \tsnr^{-1}\}$ term in Proposition~\ref{prop-nofilt-LB}.
A similar argument can be made for the case when $\xi = 0, \txi \neq 0$, leading to the $\max$ over both errors.

Owing to $r=1$, this becomes a 1-dimensional vector recovery problem. Let $\u, \tu \in \R^{\dimx}$ denote the vectors to recover.
Upon seeing $\Sn$, there is no error in estimating $\tu$ since $\txi = 0$, but there is error in estimating $\u$. 
To calculate this error, define $\un$ to be the top-left singular vector of $\Sn$. Note that $\Sn$ has only one non-zero singular value, since it fully lies on $\tu$ in the right singular vector space (i.e. $\Sn \mathbf{v} = 0$ for any $\mathbf{v} \perp \tu$). Hence
\begin{equation} \label{eq-Sn-un}
    \Sn = \| \Sn \| \cdot \un \tu^\top \;.
\end{equation}

\textbf{Step 0. Writing down $\Sn$.} 
\begin{align*}
    \Sn &= \frac{1}{n-1} \sum_{i=1}^n \x_i \tx_i^\top  -  \frac{1}{n(n-1)} \left( \sum_{i=1}^n \x_i \right) \left( \sum_{i=1}^n \tx_i \right)^\top \\
    &= \underbrace{ \frac{1}{n} \sum_{i=1}^n \x_i \tx_i^\top }_{\Sn^{(1)}} - \underbrace{ \frac{1}{n(n-1)} \left( \sum_{i=1}^n \sum_{\substack{j=1 \\ j \neq i}}^n  \x_i \tx_j^\top \right) }_{\Sn^{(2)}} \;.
\end{align*}
We expand $\Sn^{(1)}$ below, using $n_c$ to denote the random variable denoting the clean samples. Note that $\E n_c = \eta n$. Similarly one can expand $\Sn^{(2)}$, however, the error of $\Sn^{(2)}$ will behave as $O(\nicefrac{1}{n})$ due to averaging over $n^2$ samples, which is a higher order term in the overall rate. That is, the behavior (in the large $n$ regime) will be largely dictated by $\Sn^{(1)}$.
\begin{align*}
    \Sn^{(1)} &= \frac{1}{n} \sum_{i=1}^n \x_i \tx_i^\top = \frac{1}{n} \sum_{i=1}^{n_c} (z_i \u + \xi_i) (z_i \tu)^\top + \frac{1}{n} \sum_{i=n_c+1}^{n} (z_i \u + \xi_i) (\widetilde{z}_i \tu)^\top \;.
\intertext{As for the expectations, they are given by:}
    \E \left[ \Sn^{(1)} \right] &= \frac{1}{n} \E \left[ \sum_{i=1}^{n_c} z_i^2 \right] \u \tu^\top = \frac{1}{n} \E \left[ n_c \right] \u \tu^\top = \eta \, \u \tu^\top \;, \\
    \E \left[ \Sn^{(2)} \right] &= 0 \; \; \text{(since all random quantities are zero-mean and independent)} \;.
\end{align*}

\textbf{Step 1. Decompose $\sin \theta$ metric.} Our goal is a high probability lower bound on $ \lvert \sin \theta (\un, \u) \rvert$, where $\un$ is the random quantity. Note that
\begin{equation} \label{eq-sinTheta-inTermsOf-norm}
    \lvert \sin \theta (\un, \u) \rvert = \left\| \left( \Id - \u \u^\top \right) \un \right\| \;.
\end{equation}
To see this, note that $LHS = \sqrt{1 - \left( \u^\top \un \right)^2 }$. Squaring both sides and expanding suffices.

\textbf{Step 2. Compute the metric for this case.} Using Eq.~\eqref{eq-Sn-un} in Eq.~\eqref{eq-sinTheta-inTermsOf-norm}, we can write
\begin{equation} \label{eq-sinTheta-inTermsOf-Sn}
    \lvert \sin \theta (\un, \u) \rvert = \frac{ \left\| \left( \Id - \u \u^\top \right) \Sn \tu \right\| }{ \| \Sn \| } \;.
\end{equation}

\textbf{Step 3. Computing the high probability bound.} 
We will give high probability lower bound on the numerator and denominator of Eq.~\eqref{eq-sinTheta-inTermsOf-Sn} separately.

\textbf{Step 3.1. For the numerator:} We first expand $\Sn^{(1)}$ as
\begin{align*}
    \Sn^{(1)} \tu &= \frac{1}{n} \sum_{i=1}^{n_c} (z_i^2 \u + z_i \xi_i) + \frac{1}{n} \sum_{i=n_c+1}^{n} (z_i \widetilde{z}_i \u + \widetilde{z}_i \xi_i) \\
    \implies \left( \Id - \u \u^\top \right) \Sn^{(1)} \tu &=  \left( \Id - \u \u^\top \right)  \left( \frac{1}{n} \sum_{i=1}^{n_c} z_i \xi_i + \frac{1}{n} \sum_{i=n_c+1}^{n} \widetilde{z}_i \xi_i \right) \\
    &\stackrel{d}{=}   \left( \Id - \u \u^\top \right) \left( \frac{1}{n} \sum_{i=1}^{n} z_i \xi_i \right) \;.
\end{align*}
Similarly, for $\Sn^{(2)}$ we have
\begin{align*}
    \left( \Id - \u \u^\top \right) \Sn^{(2)} \tu &=  \left( \Id - \u \u^\top \right)  \left( \frac{1}{n(n-1)} \sum_{i=1}^{n} \sum_{\substack{j=1 \\ j \neq i}}^n \widetilde{z}_j \xi_i \right) \\
    &\stackrel{d}{=} \left( \Id - \u \u^\top \right)  \left( \frac{1}{n(n-1)} \sum_{i=1}^{n} \sum_{\substack{j=1 \\ j \neq i}}^n z_j \xi_i \right) \;.
\end{align*}
Combining the two, we get
\begin{align*}
    \left( \Id - \u \u^\top \right) \Sn \tu &=  \left( \Id - \u \u^\top \right) \underbrace{  \left( \frac{1}{n-1} \sum_{i=1}^{n} \left( z_i - \overline{z} \right) \left( \xi_i - \overline{\xi} \right) \right) }_{\wn} \;.
\end{align*}
Now we want to compute a high confidence lower bound on the norm of the above. We first relate $\left\| \left( \Id - \u \u^\top \right) \wn \right\| $ to $ \| \wn \| $. This is because $\wn$ is spherically symmetric, and $\left( \Id - \u \u^\top \right)$ is a rank-$(d-1)$ matrix with all non-zero eigenvalues equal to one. We get
\begin{align*}
    \left\| \left( \Id - \u \u^\top \right) \wn \right\| = \| \wn \| \cdot \sqrt{1 - \left( \u^\top \hwn \right)^2} \;.
\end{align*}
Now due to $\wn$ being spherically symmetric, $\| \wn \|$ (the magnitude) and $\hwn$ (the direction) are independent random quantities. Further, $\hwn$ is uniformly distributed on $\mathbf{S}^{d-1}$.

For $\| \wn \|$, we will use sharp Gaussian concentration. The intuition is that $\| \wn \|$ cannot be too smaller than $\sqrt{\nicefrac{d \, \snr^{-1}}{n}}$, for large $d$. Concretely, it holds that
\begin{equation} \label{eq-Vector-LB}
    \text{w.p. } 1 - \delta, \; \left\| \frac{1}{n-1} \sum_{i=1}^n \left( z_i - \bar{z} \right) \left( \xi_i - \bar{\xi} \right) \right\| \geq \sqrt{\frac{\snr^{-1}}{n}} \cdot \left( \sqrt{d} - \sqrt{2 \; {\rm ln} \frac{1}{\delta}} - \sqrt{2} \right) \;.
\end{equation}
An appropriate choice of $\delta = \exp(-d/4)$, which results in
\begin{align}
     \text{w.p. } 1 - \exp \left( \nicefrac{-d}{4} \right), \; \| \wn \| &\gtrsim \sqrt{\frac{d \, \snr^{-1}}{n}} \;.
\end{align}

For the second term (with the direction $\hwn$), this will be at least $\Omega(1)$ with high probability, since $\u^\top \hwn$ will be large only with very small probability when then dimension $d$ is big enough. Concretely, it holds that
\begin{equation}
    \text{w.p. } 1 - 2 \exp \left( \nicefrac{-d}{4} \right), \; \sqrt{1 - \left( \u^\top \hwn \right)^2} \geq \sqrt{\frac{1}{2}} \;.
\end{equation}
Overall, for the numerator, we conclude that
\begin{equation}
    \text{w.p. } 1 - c \exp \left( \nicefrac{-d}{4} \right), \; {\rm Numerator} \gtrsim \sqrt{\frac{d \, \snr^{-1}}{n}} \;.
\end{equation}

\textbf{Step 3.2. For the denominator:}
We need a high confidence upper bound on $ \| \Sn \|$. We can use Matrix-Bernstein type analysis. Note that $\E [\Sn] = \eta \, \u \tu^\top$. And the deviation is dominated by 
\begin{equation*}
    \Sn - \E \Sn \approx \frac{1}{n} \sum_{i \in [n]} z_i \xi_i \tu^\top + \frac{1}{n} \sum_{i \in [n(1-\eta)]} z_i \tz_i \u \tu^\top \;.
\end{equation*}
Again, the dominating term is the first one. This means that we only have to show high confidence upper bound on $ \| (1/n) \sum_{i} z_i \xi_i \| $, and hence the problem has reduced to vector concentration instead of matrix concentration. Analogous to Eq.~\eqref{eq-Vector-LB}, one can show
\begin{equation} \label{eq-Vector-UB}
    \text{w.p. } 1 - \delta, \; \left\| \frac{1}{n} \sum_{i=1}^n z_i \xi_i \right\| \leq \sqrt{\frac{\snr^{-1}}{n}} \cdot \left( \sqrt{d} + \sqrt{2 \; {\rm ln} \frac{1}{\delta}} \right) \;.
\end{equation}
Overall, using the triangle inequality, we have
\begin{equation}
    \text{w.p. } 1 - \exp(\nicefrac{-d}{4}), \; \| \Sn \| \leq \underbrace{\| \E \Sn \|}_{=\eta} + 2  \sqrt{\frac{d \, \snr^{-1}}{n}} \;.
\end{equation}

\textbf{Step 4. Combined result:} From 3.1 and 3.2, for $n \geq \nicefrac{4d \, \snr^{-1}}{\eta^2}$ (so the high-conf UB for $\| \Sn \|$ is $2 \eta$), 
\begin{equation}
    \text{w.p. } 1 - O\left( \exp(\nicefrac{-d}{4}) \right), \; \lvert \sin \theta (\un, \u) \rvert \gtrsim \frac{1}{\eta} \sqrt{\frac{d \, \snr^{-1}}{n}} \;.
\end{equation}

\section{Characterizing the score distribution of the oracle} \label{sec-app-proof-distOFscore}

The Bernoulli variable $c \in \{0, 1\}$ captures the status of clean/corrupted nature of a sample. We first characterize the score distribution in both cases separately, and then create the relevant mixture distribution using the proportions $\eta, 1-\eta$ for clean, corrupted samples respectively. 

Before the calculations, we state some Lemmas that will be used.
\begin{lemma} \label{lemma-moments-joint}
    Let $X$ be distributed as $\N(0, \Omega)$. For a fixed matrix $\mathbf{A}$, it holds:
    \begin{align*}
        \E [ X^\top \mathbf{A} X] &= \Tr \left(\mathbf{A} \Omega \right) \;, \\
        \V [ X^\top \mathbf{A} X] &= \frac{1}{2} \Tr \biggl( \left(\mathbf{A} + \mathbf{A}^\top \right) \Omega \left(\mathbf{A} + \mathbf{A}^\top \right) \Omega \biggr) \;.
    \end{align*}
\end{lemma}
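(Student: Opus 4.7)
Both identities are standard properties of Gaussian quadratic forms, and my plan is to prove them via the trace trick and a whitening argument. The mean identity is essentially immediate, while the variance identity requires a short symmetrization step before diagonalizing.

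\textbf{Mean.} Since $X^\top \mathbf{A} X$ is a scalar, it equals its own trace, and by the cyclic property $X^\top \mathbf{A} X = \Tr(\mathbf{A} X X^\top)$. Taking expectations and pulling the trace outside yields $\E[X^\top \mathbf{A} X] = \Tr(\mathbf{A}\, \E[X X^\top]) = \Tr(\mathbf{A}\Omega)$, as required.

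\textbf{Variance.} The first step is symmetrization: because $X^\top \mathbf{A} X$ is a scalar, it equals $X^\top \mathbf{A}^\top X$, and therefore $X^\top \mathbf{A} X = \tfrac{1}{2} X^\top(\mathbf{A} + \mathbf{A}^\top) X$. Setting $\mathbf{B} := \tfrac{1}{2}(\mathbf{A} + \mathbf{A}^\top)$, it suffices to establish the well-known identity $\V[X^\top \mathbf{B} X] = 2\,\Tr(\mathbf{B}\Omega\mathbf{B}\Omega)$ for symmetric $\mathbf{B}$, since plugging this in gives exactly the claimed formula with the factor of $\tfrac{1}{2}$. To prove it, I would whiten: write $X = \Omega^{1/2} Z$ with $Z \sim \N(0, I)$, so that $X^\top \mathbf{B} X = Z^\top \mathbf{C} Z$ with $\mathbf{C} := \Omega^{1/2} \mathbf{B} \Omega^{1/2}$ symmetric. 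Diagonalize $\mathbf{C} = U D U^\top$ and note that $W := U^\top Z \sim \N(0,I)$, reducing the problem to $\V\bigl[\sum_i D_{ii} W_i^2\bigr] = \sum_i D_{ii}^2 \V[W_i^2] = 2 \sum_i D_{ii}^2 = 2\,\Tr(\mathbf{C}^2)$, where I used independence of $W_i$ and the chi-squared variance $\V[W_i^2] = 2$. Finally, the cyclic property gives $\Tr(\mathbf{C}^2) = \Tr(\Omega^{1/2}\mathbf{B}\Omega \mathbf{B}\Omega^{1/2}) = \Tr(\mathbf{B}\Omega\mathbf{B}\Omega)$.

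\textbf{Main obstacle.} There is no real obstacle here; the only place requiring care is the symmetrization step, which ensures that the formula correctly handles non-symmetric $\mathbf{A}$ (and explains where the factor of $\tfrac{1}{2}$ comes from). The diagonalization argument is standard and clean because $\mathbf{C}$ is symmetric, which is what makes the reduction to an independent sum of $\chi^2_1$ variables possible.
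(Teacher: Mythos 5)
Your proof is correct and complete. The paper states this lemma without proof, treating it as a standard fact about Gaussian quadratic forms, so there is no paper proof to compare against; your argument (trace trick for the mean; symmetrization followed by whitening and diagonalization, reducing the variance to an independent sum of $\chi^2_1$ variables) is the standard and entirely sound derivation, and the symmetrization step correctly accounts for non-symmetric $\mathbf{A}$ and produces the factor of $\tfrac{1}{2}$ in the claimed formula.
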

\begin{lemma} \label{lemma-moments-indep}
    Let $X$ be distributed as $\N(0, \Omega)$, and $\widetilde{X}$ be distributed as $\N(0, \widetilde{\Omega})$. Let $X, \widetilde{X}$ be independent of each other. For a fixed matrix $\mathbf{A}$, it holds:
    \begin{align*}
        \E [ X^\top \mathbf{A} \widetilde{X}] &= 0 \;, \\
        \V [ X^\top \mathbf{A} \widetilde{X}] &=  \Tr \left( \Omega \mathbf{A} \widetilde{\Omega} \mathbf{A}^\top \right) \;.
    \end{align*}
\end{lemma}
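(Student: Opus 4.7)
The plan is to compute the mean by linearity of expectation together with the independence of $X$ and $\widetilde{X}$, and then compute the variance by converting the square of the scalar bilinear form into a trace and pushing the expectation inside via independence.

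First I would handle the mean. Writing $X^\top \mathbf{A} \widetilde{X} = \sum_{i,j} \mathbf{A}_{ij} X_i \widetilde{X}_j$, linearity gives $\E[X^\top \mathbf{A} \widetilde{X}] = \sum_{i,j} \mathbf{A}_{ij} \E[X_i \widetilde{X}_j]$, and independence plus zero means yields $\E[X_i \widetilde{X}_j] = \E[X_i]\,\E[\widetilde{X}_j] = 0$. Hence the first claim follows immediately.

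Next I would compute the variance. Since the mean is zero, $\V[X^\top \mathbf{A} \widetilde{X}] = \E[(X^\top \mathbf{A} \widetilde{X})^2]$. The quantity $X^\top \mathbf{A} \widetilde{X}$ is a scalar, so it equals its own transpose, which lets me rewrite
\begin{equation*}
(X^\top \mathbf{A} \widetilde{X})^2 = (X^\top \mathbf{A} \widetilde{X})(\widetilde{X}^\top \mathbf{A}^\top X) = \Tr\!\left( \mathbf{A}\, \widetilde{X}\widetilde{X}^\top \mathbf{A}^\top\, X X^\top \right),
\end{equation*}
using the cyclic property of the trace. Taking expectations and exploiting independence of $X$ and $\widetilde{X}$ to factor the expectation across the $XX^\top$ and $\widetilde{X}\widetilde{X}^\top$ blocks gives
\begin{equation*}
\E[(X^\top \mathbf{A} \widetilde{X})^2] = \Tr\!\left( \mathbf{A}\, \E[\widetilde{X}\widetilde{X}^\top]\, \mathbf{A}^\top\, \E[XX^\top] \right) = \Tr\!\left( \mathbf{A}\, \widetilde{\Omega}\, \mathbf{A}^\top\, \Omega \right),
\end{equation*}
and one more application of the cyclic trace identity rewrites this as $\Tr(\Omega \mathbf{A}\, \widetilde{\Omega}\, \mathbf{A}^\top)$, matching the claim.

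There is no real obstacle here; the statement is essentially a bookkeeping computation, and the only thing to be careful about is correctly using independence to split $\E[XX^\top \otimes \widetilde{X}\widetilde{X}^\top]$ inside the trace, and invoking cyclicity of trace in the right order to land on the stated form. Both facts are standard, so the proof should be short.
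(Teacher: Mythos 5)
The paper states Lemma~\ref{lemma-moments-indep} without proof, treating it as a standard fact about bilinear forms in independent random vectors, so there is no in-paper argument to compare against. Your proof is correct and is exactly the standard calculation: the mean follows from linearity, independence, and zero means; and for the variance, rewriting $(X^\top \mathbf{A} \widetilde{X})^2 = \Tr(\mathbf{A}\,\widetilde{X}\widetilde{X}^\top\,\mathbf{A}^\top\, X X^\top)$, noting this trace is linear in $\widetilde{X}\widetilde{X}^\top$ for fixed $X$ and vice versa, and factoring the expectation via independence (Fubini) gives $\Tr(\mathbf{A}\,\widetilde{\Omega}\,\mathbf{A}^\top\,\Omega)$, which cyclicity rewrites as the stated $\Tr(\Omega \mathbf{A} \widetilde{\Omega} \mathbf{A}^\top)$. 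One small observation worth flagging: your argument never actually uses Gaussianity, only that $X$ and $\widetilde{X}$ are independent, mean-zero, with covariances $\Omega$ and $\widetilde{\Omega}$; the Gaussian assumption is only needed for the companion Lemma~\ref{lemma-moments-joint}, where the quadratic form involves a single Gaussian vector and the fourth-moment structure matters.
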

Consider a block matrix $\mathbf{X}$ given as below 
\begin{equation*}
    \mathbf{X} = \left[\begin{array}{ c c }
                \mathbf{A} & \mathbf{B} \\
                \mathbf{C} & \mathbf{D}
            \end{array}\right] \;.
\end{equation*}
\begin{lemma} \label{lemma-block-trace}
    For a block matrix $\mathbf{X}$ given as above, it holds that 
    \begin{equation*}
        \Tr( \mathbf{X}) = \Tr( \mathbf{A}) + \Tr( \mathbf{D}) \;.
    \end{equation*}
\end{lemma}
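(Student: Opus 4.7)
The claim is essentially a bookkeeping identity: the trace of a square matrix is the sum of its diagonal entries, and in a block matrix the diagonal entries come exclusively from the diagonal blocks. The plan is to simply unpack definitions and track indices.

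First, I would assume $\mathbf{A} \in \mathbb{R}^{m \times m}$ and $\mathbf{D} \in \mathbb{R}^{k \times k}$ so that $\mathbf{X} \in \mathbb{R}^{(m+k) \times (m+k)}$ is square (this is required for $\Tr(\mathbf{X})$ to make sense; the off-diagonal blocks $\mathbf{B} \in \mathbb{R}^{m \times k}$ and $\mathbf{C} \in \mathbb{R}^{k \times m}$ need not be square). Let $\mathbf{X}_{ij}$ denote the $(i,j)$ entry of $\mathbf{X}$ for $i, j \in \{1, \dots, m+k\}$. Then by the block decomposition:
\begin{equation*}
    \mathbf{X}_{ii} = \begin{cases} \mathbf{A}_{ii} & \text{if } 1 \le i \le m, \\ \mathbf{D}_{(i-m)(i-m)} & \text{if } m+1 \le i \le m+k. \end{cases}
\end{equation*}

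Next, I would split the defining sum of the trace along these two ranges:
\begin{equation*}
    \Tr(\mathbf{X}) \;=\; \sum_{i=1}^{m+k} \mathbf{X}_{ii} \;=\; \sum_{i=1}^{m} \mathbf{A}_{ii} \;+\; \sum_{i=m+1}^{m+k} \mathbf{D}_{(i-m)(i-m)} \;=\; \Tr(\mathbf{A}) + \Tr(\mathbf{D}),
\end{equation*}
where in the last step I reindexed the second sum with $j = i - m$. This completes the argument. Since there is no real obstacle here, I anticipate the only thing to be careful about is stating the implicit shape assumption on the blocks (namely that $\mathbf{A}$ and $\mathbf{D}$ are square, which is forced once we require $\mathbf{X}$ to be square and the block partition to be consistent).
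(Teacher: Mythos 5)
Your proof is correct, and since the paper states this lemma without proof (treating it as a standard bookkeeping fact), your direct index-splitting argument is exactly the natural route one would write down. The one clarification you add --- that $\mathbf{A}$ and $\mathbf{D}$ must themselves be square for the block-diagonal indices to line up --- is a sensible and correct hygiene point that the paper leaves implicit.
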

\begin{lemma} \label{lemma-block-square}
    For a block matrix $\mathbf{X}$ given as above, with $\mathbf{A}, \mathbf{D}$ are square matrices, it holds that
    \begin{equation*}
        \mathbf{X}^2 = \left[\begin{array}{ c c }
                    \mathbf{A}^2 + \mathbf{B} \mathbf{C} & \mathbf{A} \mathbf{B} + \mathbf{B} \mathbf{D} \\
                     \mathbf{C} \mathbf{A} + \mathbf{D} \mathbf{C} & \mathbf{C} \mathbf{B} + \mathbf{D}^2
                \end{array}\right] \;.
    \end{equation*}
\end{lemma}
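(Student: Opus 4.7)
The plan is to prove this as a direct verification using the definition of block matrix multiplication. Since $\mathbf{A}$ and $\mathbf{D}$ are square, they have shapes $p \times p$ and $q \times q$ respectively for some $p, q$, which forces $\mathbf{B}$ to be $p \times q$ and $\mathbf{C}$ to be $q \times p$. All four block products appearing in the claimed identity ($\mathbf{A}^2$, $\mathbf{BC}$, $\mathbf{AB}$, $\mathbf{BD}$, $\mathbf{CA}$, $\mathbf{DC}$, $\mathbf{CB}$, $\mathbf{D}^2$) are therefore well-defined, and the two diagonal blocks in the claimed product are $p \times p$ and $q \times q$ respectively, matching the shape of $\mathbf{X}^2$.

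The first step would be to invoke the general block-multiplication rule: if $\mathbf{X} = [\mathbf{X}_{ij}]$ and $\mathbf{Y} = [\mathbf{Y}_{ij}]$ are conformally partitioned block matrices, then the $(i,k)$ block of $\mathbf{X}\mathbf{Y}$ equals $\sum_j \mathbf{X}_{ij} \mathbf{Y}_{jk}$, provided every individual block product is well-defined. This rule follows from a direct reordering of the scalar sum $\sum_\ell \mathbf{X}_{i\ell} \mathbf{Y}_{\ell k}$ (over scalar indices) by grouping indices according to the block partition. I would either cite this as a standard fact or include a one-line justification via the index reindexing.

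Applying the rule with $\mathbf{Y} = \mathbf{X}$ (so $\mathbf{X}_{11} = \mathbf{A}$, $\mathbf{X}_{12} = \mathbf{B}$, $\mathbf{X}_{21} = \mathbf{C}$, $\mathbf{X}_{22} = \mathbf{D}$) gives the four blocks:
\begin{align*}
(\mathbf{X}^2)_{11} &= \mathbf{A}\mathbf{A} + \mathbf{B}\mathbf{C} = \mathbf{A}^2 + \mathbf{B}\mathbf{C}, \\
(\mathbf{X}^2)_{12} &= \mathbf{A}\mathbf{B} + \mathbf{B}\mathbf{D}, \\
(\mathbf{X}^2)_{21} &= \mathbf{C}\mathbf{A} + \mathbf{D}\mathbf{C}, \\
(\mathbf{X}^2)_{22} &= \mathbf{C}\mathbf{B} + \mathbf{D}\mathbf{D} = \mathbf{C}\mathbf{B} + \mathbf{D}^2,
\end{align*}
which exactly match the claimed right-hand side.

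There is no real obstacle here beyond bookkeeping: the only thing to watch is that $\mathbf{A}$ and $\mathbf{D}$ being square is precisely what makes the diagonal blocks $\mathbf{A}^2$ and $\mathbf{D}^2$ sensible, and is implicitly what allows the partition to be conformal for squaring. If one wanted a fully self-contained proof, one could verify a single entry $(\mathbf{X}^2)_{ik} = \sum_\ell \mathbf{X}_{i\ell}\mathbf{X}_{\ell k}$ by splitting the sum over $\ell$ into $\ell \le p$ and $\ell > p$ and identifying each partial sum as one of the block products above; this is the only place a calculation is needed, and it is entirely mechanical.
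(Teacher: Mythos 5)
Your proof is correct and complete. The paper states Lemma~\ref{lemma-block-square} without proof, treating it as a standard fact about conformally partitioned block matrices, so there is no paper argument to compare against; your direct verification via the block-multiplication rule (splitting the scalar summation index into the two partition ranges) is exactly the mechanical calculation that would fill the gap, and your remark that the squareness of $\mathbf{A}$ and $\mathbf{D}$ is what makes the partition conformal for squaring is the right thing to observe.
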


\textbf{Case 0: Corrupted samples} ($c = 0$ case).
Let $Z_0 \stackrel{d}{=} \{ S(\x, \tx; \U \tU^\top) \text{ } | \text{ } c=0 \}$, with distribution $\D_0$. This (scalar) random variable is equivalent to $X^\top \U \tU^\top \widetilde{X}$, where $X, \widetilde{X}$ are independent and follow $X \sim \N\left(0, \U  \U^\top + \snr^{-1} \, \Id \right)$, $\widetilde{X} \sim \N\left(0, \tU \tU^\top + \tsnr^{-1} \, \Itd \right)$. This is in-line with Remark~\ref{remark-joint-Gaussian}. We invoke Lemma~\ref{lemma-moments-indep} to get the first two moments.

\begin{enumerate}
    \item \underline{Mean}: $0$.
    \item \underline{Variance}: $r \,(1+\snr^{-1}) (1+\tsnr^{-1})$. 
    \begin{align*}
        {\rm Variance} &= \Tr \biggl( \left(\U \U^\top + \snr^{-1} \, \Id \right) \; \U \tU^\top \; \left( \tU  \tU^\top + \tsnr^{-1} \, \Itd \right) \; \tU \U^\top \biggr) \\
        &= \Tr \biggl( \U^\top \left(\U \U^\top + \snr^{-1} \, \Id \right)  \U \;\;\; \tU^\top \left( \tU  \tU^\top + \tsnr^{-1} \, \Itd \right) \; \tU \biggr) \\
        &= \Tr \biggl( \left( \Idz  +  \snr^{-1} \Idz \right) \left( \Idz  + \tsnr^{-1} \Idz \right) \biggr) \;. 
    \end{align*}
    \item \underline{Tails}: Since $X, \widetilde{X}$ are independent, the tails are described by the quadratic form on two independent Gaussians. This random variable is $(i)$ symmetric, and $(ii)$ uni-modal, and the tails decay exponentially.
\end{enumerate}

\textbf{Case 1: Clean samples} ($c = 1$ case).
Let $Z_1 \stackrel{d}{=} \{ S(\x, \tx; \U \tU^\top) \text{ } | \text{ } c=1 \}$, with distribution $\D_1$. This random variable is equivalent to $X^\top \mathbf{B} X$, where $X = [\x, \tx]^\top$ follows $X \sim \N\left(0, \Sigma_1 \right)$ (refer to Remark~\ref{remark-joint-Gaussian}); and $\mathbf{B}$ is a block matrix given as below. We invoke Lemma~\ref{lemma-moments-joint} to get the first two moments.
    \begin{equation*}
        \mathbf{B} = \left[\begin{array}{ c c }
                \mathbf{0}_{\dimx \times \dimx} &  \U \tU^\top  \\
                \mathbf{0}_{\dimtx \times \dimx} & \mathbf{0}_{\dimtx \times \dimtx}
            \end{array}\right]_{(\dimx + \dimtx) \times (\dimx+\dimtx)}
    \end{equation*}
\begin{enumerate}
    \item \underline{Mean}: $r$.
    \begin{align*}
        {\rm Mean} &= \Tr ( \mathbf{B} \Sigma_1 ) \\
        &= \Tr ( \left[\begin{array}{ c c }
                        \U \U^\top & . \\
                        . & \mathbf{0}
                    \end{array}\right] ) \\
        &= \Tr (\U \U^\top) = \Tr (\Idz) = r \;. \tag{Using Lemma~\ref{lemma-block-trace}}
    \end{align*}
    \item \underline{Variance}: $r + r \,(1+\snr^{-1}) (1+\tsnr^{-1})$.
    \begin{align*}
        {\rm Variance} &= \frac{1}{2} \Tr \biggl( \left(\mathbf{B} + 
                        \mathbf{B}^\top \right) \Sigma_1 \left(\mathbf{B} + \mathbf{B}^\top \right) \Sigma_1 \biggr) \\
        &= \frac{1}{2} \Tr \biggl( \left[\begin{array}{ c c }
                        \U \U^\top & \overbrace{ \U  \tU^\top + \tsnr^{-1} \U \tU^\top }^{\mathbf{T}_1} \\
                        \underbrace{\tU  \U^\top + \snr^{-1} \tU \U^\top }_{\mathbf{T}_2} & \tU \tU^\top
                    \end{array}\right]^2 \biggr) \\
        &= \frac{1}{2} \Tr \biggl( \left[\begin{array}{ c c }
                        \U \U^\top + \mathbf{T}_1 \mathbf{T}_2 & . \\
                        . & \mathbf{T}_2 \mathbf{T}_1 + \tU  \tU^\top
                    \end{array}\right] \biggr) \tag{Using Lemma~\ref{lemma-block-square}} \\
        &= \Tr (\Idz) + \Tr (\mathbf{T}_1 \mathbf{T}_2) \;. \tag{Using Lemma~\ref{lemma-block-trace}} 
    \end{align*}
    \item \underline{Tails}: Since $X, \widetilde{X}$ are dependent, the tails are described by the quadratic form on two dependent Gaussians. The tails decay exponentially, and are described by the Hanson-Wright inequality. A similar calculation as the variance provides the exact parameters, and the inequality becomes:
    \begin{align} 
        \Pr \left( | Z_1 - \E Z_1 | > t \right) \lesssim \exp \Bigg( -c \min \bigg\{ &\frac{2 \, t^2}{r \left(1 + (1+\snr^{-1}) (1+\tsnr^{-1}) \right)}, \tag*{} \\
        &\frac{\sqrt{2} \, t}{\sqrt{r \left(1 + (1+\snr^{-1}) (1+\tsnr^{-1}) \right)}} \bigg\} \Bigg) \;.
    \end{align}
\end{enumerate}

\section{A proof of Theorem~\ref{thm-TS-special}} \label{sec-app-proof-thm-TS-special}

In this section, we present a proof of Theorem~\ref{thm-TS-special}. We first define one additional piece of notation.
For $\U$, let $\Uperp \in \R^{\dimx \times (\dimx - \dimz)}$ denote the completion of the orthonormal basis. That is, the matrix $\Ufull = [\U \;\, \Uperp] \in \R^{\dimx \times \dimx}$ is such that $\Ufull^\top \Ufull = \mathbf{I}_{\dimx} = \Ufull \Ufull^\top$. Similarly define $\tUperp \in \R^{\dimtx \times (\dimtx - \dimz)}$. 

Recall that we have $n$ samples of the form $\left\{ (\x_i, \tx_i) \right\}_{i=1}^n$, i.i.d from the mixture distribution (with $\eta, 1-\eta$ ratios for clean, corrupted respectively). 
Let $n_\subt$ samples be used to train the teacher, and let $N = n_\subt - n$ samples be used to train the student. Let $\rho_\subt, \rho$ be the respective regularization parameters, and let $(\GT, \tGT), (\G, \tG)$ denote the respective embedding matrices at the solution of Eq.~\eqref{eq-objective}. Consider a general threshold $\theta \in \R$ that is used to filter the dataset based on the teacher scores. Note that we have ensured that $\theta$ is independent of the $N$ samples to be filtered, since it depends only on the $n_\subt$ samples used for teacher training.
For the teacher, from Corollary~\ref{cor-nofilt}, we know that with probability $1 - \exp(-\Omega(\max\{\dimx, \dimtx\}))$:
\begin{equation} \label{eq-T-guarantee}
    \left\| \GT ^\top \tGT - \frac{\eta}{\rho_\subt} \U \tU^\top \right\| \leq  \frac{1}{\rho_\subt} \left( \sqrt{\frac{ \max\{ \dimx, \dimtx \} \, (1+\snr^{-1}) \, (1+\tsnr^{-1})}{n_\subt}} + \tO \left(\frac{1}{n_\subt} \right) \right)\; \;.
\end{equation}
Here $(\GT, \tGT)$ are random quantities that depend on the $n_\subt$ samples used. For the rest of the analysis, we will assume them to be fixed (since they don't depend on the randomness of the remaining $N$ samples). Finally, we will give a high probability guarantee that will use the confidence bound in Eq.~\eqref{eq-T-guarantee} as one of the terms in the combined error bound, with an appropriate choice of $n_\subt$ and $\rho_\subt$.
We now study the student with data filtering. It is useful to define 
\begin{equation} \label{eq-def-MT-MO}
    \mbM_\subt := \GT^\top \tGT \;, \;\;\; \mbM_\subo := \left( \nicefrac{\eta}{\rho_\subt} \right) \U \tU^\top \;.
\end{equation}
These are the matrices used for scoring the samples by the teacher and its oracle version, respectively.
Note that ${\rm rank}(\mbM_\subo) = r$ since both $\U, \tU$ are rank-$r$ matrices.
From the teacher guarantee in Eq.~\eqref{eq-T-guarantee}, it holds that $\mbM_\subt \rightarrow \mbM_\subo$ as $n_\subt \rightarrow \infty$. Recall that the scoring function is $S(\x, \tx; \mbM) = \x^\top \mbM \, \tx$, and a sample $(\x, \tx)$ is selected/retained iff $S(\x, \tx; \mbM_\subt) > \theta$.

We define certain quantities that will be central to the analysis. Akin to Eq.~\eqref{eq-cor1sketch-cross-cov}, we define the empirical cross-covariance matrix of the data \emph{after selection} in Eq.~\eqref{eq-T-cross-cov}.
Let $n_{\sel, \subt}(\theta)$ be the number of samples selected, which is a random variable with $\E [n_{\sel, \subt}(\theta)] = N \, \PT(\theta)$. 
Let $I_{\sel, \subT}(\theta) \subseteq [N]$ denote the indices of the points selected. That is, $i \in I_{\sel, \subT}(\theta) \iff S(\x_i, \tx_i; \mbM_\subt) > \theta$. Similarly, define $n_{\sel, \subo}(\theta)$ and $I_{\sel, \subo}(\theta)$.
Construct the empirical cross-covariance matrix for the filtered dataset:
\begin{equation} \label{eq-T-cross-cov}
    \SNT(\theta) := \frac{1}{n_{\sel, \subt}(\theta) - 1} \underbrace{ \sum_{i \in I_{\sel, \subt}(\theta)} \left( \x_i - \overline{\x}(\theta) \right) \left( \tx_i - \overline{\tx}(\theta) \right)^\top}_{\mathbf{Q}_{N, \subt}(\theta)} \;.
\end{equation}
To analyze its asymptotic limit, we define $\S(\theta)$ as the limit of the cross-covariance, for both the teacher and the oracle. Similarly, let $P(\theta)$ denote the probability mass of data that is retained (also in the limit of $n \rightarrow \infty$), for both the teacher and the oracle. These are described in Eqs~\eqref{eq-ST-PT-theta},~\eqref{eq-SO-PO-theta}.
\begin{align}
    \S_\subt(\theta) &= \E \left[ \x \tx^\top \, \big| \, S(\x, \tx; \mbM_\subt) > \theta \right] \in \R^{\dimx \times \dimtx} \;, \;\;\; \PT(\theta) = \Pr \left\{ S(\x, \tx; \mbM_\subt) > \theta \right\} \;;\label{eq-ST-PT-theta} \\
    \S_\subo(\theta) &= \E \left[ \x \tx^\top \, \big| \, S(\x, \tx; \mbM_\subo) > \theta \right] \in \R^{\dimx \times \dimtx} \;, \;\;\; \PO(\theta) = \Pr \left\{ S(\x, \tx; \mbM_\subo) > \theta \right\} \;.  \label{eq-SO-PO-theta} 
\end{align}

Note that $\ST(\theta), \SO(\theta)$ are the limits of $\SNT(\theta), \SNO(\theta)$ as $N \rightarrow \infty$. 
The threshold $\theta \rightarrow -\infty$ recovers the no filtering case, i.e. both $\SNT(\theta)$, $\SNO(\theta)$ approach $\mathbf{S}_N$. 
We will now follow proof steps similar to Section~\ref{sec-app-proof-nofilt}. Steps 1 and 2 hold for a general cross covariance matrix, and can be used directly. Steps 3 and 4 are concerned with the limit of $\Sn(\theta)$ as $n \rightarrow \infty$, and how it concentrates around the limit. These steps will change significantly. Finally, we will be able to reuse Lemma~\ref{lem-wedin-thm} for step 5. We detail each of these proof steps below.

\textbf{Step 1.} Following the exact same proof steps as in Section~\ref{sec-app-proof-nofilt}, the unregularized contrastive loss objective on the $n_{\sel, \subT}(\theta)$ samples is equivalent to
\begin{equation}
    \L_0(\G, \tG) = - \Tr \left( \G \, \SNT(\theta) \, \tG^\top \right) \;.
\end{equation}

\textbf{Step 2.} Again, following the exact same proof steps as in Section~\ref{sec-app-proof-nofilt}, the solution to the $\rho$-regularized minimization problem is given by
\begin{equation} \label{eq-ourproof-step2}
    \arg \min_{\G, \tG} \; \L_{\rho} \left( \G, \tG \right) = \left\{ \left( \G, \tG \right) \; \Big| \; \G^\top \tG = \frac{1}{\rho} \; \SVD_{\dimz} \left( \SNT (\theta) \right) \right\} \;.
\end{equation}

\textbf{Step 3.} This step changes from Section~\ref{sec-app-proof-nofilt}. We use the following:
\begin{equation} \label{eq-ourproof-step3}
    \left\| \SVD_{\dimz} \left( \SNT(\theta) \right) - \SO(\theta) \right\| \leq \sigma_{\dimz+1} \left( \SO(\theta) \right) 
    + 2 \left\| \SNT(\theta) - \SO(\theta) \right\| \;.
\end{equation}
By triangle inequality, we have
\begin{align*}
    \left\| \SVD_{\dimz} \left( \SNT(\theta) \right) - \SO(\theta) \right\| \leq  \left\| \SVD_{\dimz} \left( \SNT(\theta) \right) - \SNT(\theta) \right\| + \left\| \SNT(\theta) - \SO(\theta) \right\| \;.
\end{align*}
And for the first term on the right hand side, we use
\begin{align*}
    \| \SVD_{\dimz} \left( \SNT(\theta) \right) - \SNT(\theta) \| &= \sigma_{\dimz+1} \left( \SNT(\theta) \right) \\
    &\leq^{(\dagger)}  \sigma_{\dimz+1} \left( \SO(\theta) \right) + \| \SNT(\theta) - \SO(\theta) \| \;, 
\end{align*}
where we used Lemma~\ref{lem-weyl-type} in Eq (\textdagger).

\textbf{Step 3'.} Analysis of $\SO(\theta)$: The main difference in Eq.~\eqref{eq-proof-step3} and Eq.~\eqref{eq-ourproof-step3} is the term $\sigma_{\dimz+1}(\SO(\theta))$. This additional step of the proof analyzes the properties of $\SO(\theta)$. In particular, we will show that $\SO(\theta)$ is rank-$r$, and hence $\sigma_{\dimz+1}(\SO(\theta)) = 0$. Additionally, we establish upper and lower bounds on the singular values of $\SO(\theta)$ that will be used later in the proof.
From Eq.~\eqref{eq-SO-PO-theta}, we simplify to write
\begin{equation*}
    \SO(\theta) = \E \left[ \x \tx^\top \, \big| \, \x^\top \U \tU^\top \tx > \frac{\theta \rho_\subt}{\eta} \right] \;,
\end{equation*}
where $(\x, \tx)$ is drawn from the mixture model: $\eta \cdot \N(0, \Sigma_1) + (1-\eta) \cdot \N(0, \Sigma_0)$. To simplify notation, define $\ddtheta := \nicefrac{(\theta \rho_\subt)}{\eta}$.
From the conditioning event, it seems that $\U^\top \x$ and $\tU^\top \tx$ is a good `basis' for a decomposition. Pre-multiply and post-multiply to recover this basis for the $\x \tx^\top$ term inside the expectation as
\begin{align*}
    \SO(\theta) &= \underbrace{\Ufull \Ufull^\top}_{= \Id} \; \E \left[ \x \tx^\top \, \big| \, \x^\top \U \tU^\top \tx > \ddtheta \right] \; \underbrace{\tUfull \tUfull^\top}_{= \Itd}  \\
    &= \Ufull \; \E \left[ 
        \begin{pmatrix}
            \overbrace{(\U^\top \x) (\tU^\top \tx)^\top}^{\dimz \times \dimz} \; \; \; \; \overbrace{(\U^\top \x) (\tUperp^\top \tx)^\top}^{\dimz \times (\dimtx - \dimz)} \\
            \underbrace{(\Uperp^\top \x) (\tU^\top \tx)^\top}_{(\dimx - \dimz) \times \dimz} \; \; \; \; \underbrace{(\Uperp^\top \x) (\tUperp^\top \tx)^\top}_{(\dimx - \dimz) \times (\dimtx - \dimz)}
        \end{pmatrix} 
    \, \Bigg| \, (\U^\top \x)^\top (\tU^\top \tx) > \ddtheta \right] \; \tUfull^\top \;.
\end{align*}
Call the top left entry in this decomposition to be the `dominant', and the other three as `non-dominant'. We will show the non-dominant entries will be zero. The following reparametrization makes things cleaner.
\begin{align*}
    \U^\top \x = \z + \underbrace{\U^\top \xi}_{\eps} \;, \; \; \Uperp^\top \x = \underbrace{\Uperp^\top \xi}_{\epsperp} \;; \; \; \; \; \; \; \tU^\top \tx = \tz + \underbrace{\tU^\top \txi}_{\teps} \;, \; \; \tUperp^\top \tx = \underbrace{\tUperp^\top \txi}_{\tepsperp} \;.
\end{align*}
Let's further simplify the expressions with another transformation. The subscripts $S, N$ denote the signal (containing some noise) and noise part.
\begin{equation*} 
    \underbrace{\x_S}_{\in \R^\dimz} \leftarrow \z + \eps, \; \; \underbrace{\x_N}_{\in \R^{\dimx - \dimz}} \leftarrow \epsperp \; \; ; \; \; \; \; \underbrace{\tx_S}_{\in \R^\dimz} \leftarrow \tz + \teps, \; \; \underbrace{\tx_N}_{\in \R^{\dimtx - \dimz}} \leftarrow \tepsperp \; \;.
\end{equation*}
Due to the diagonal structure of $\Sxi, \Stxi$, we infer the distributions as
\begin{equation*} 
    \eps \sim \N\left(0, \frac{1}{\snr} \Idz\right) , \;\; \epsperp \sim \N\left(0, \frac{1}{\snr} \mathbf{I}_{(\dimx - \dimz)}\right) ; \;\; \;\; \teps \sim \N\left(0, \frac{1}{\tsnr} \Idz\right) , \;\; \tepsperp \sim \N\left(0, \frac{1}{\tsnr} \mathbf{I}_{(\dimtx - \dimz)}\right).
\end{equation*}
And crucially, due to the diagonal structure of $\Sxi, \Stxi$, we infer that $\{ \eps, \epsperp, \teps, \tepsperp \}$ are all \emph{mutually independent}, and independent of $\z, \tz$.
This entails that the transformed vector is Gaussian with mean zero and covariance given as below.
\begin{align} \label{eq-decomp-joint-Gaussian}
    \begin{pmatrix}
        \x_S \\ \x_N \\ \tx_S \\ \tx_N
    \end{pmatrix} \sim \N\left(0, \begin{pmatrix}
        (1 + \nicefrac{1}{\snr}) \, \Idz & \mathbf{0} & \textcolor{blue}{\mathbf{0} \; (\Idz)} & \mathbf{0} \\
        . &  (\nicefrac{1}{\snr}) \,\mathbf{I}_{(\dimx - \dimz)} & \mathbf{0} & \mathbf{0} \\
        \textcolor{blue}{. \, (.)} & . & (1 + \nicefrac{1}{\tsnr}) \,\Idz & \mathbf{0} \\
        . & . & . & (\nicefrac{1}{\tsnr}) \,\mathbf{I}_{(\dimtx - \dimz)} \\
    \end{pmatrix} \right) \;.
\end{align}
The above is for the corrupted case (w.p. $1-\eta$). In the clean case (w.p. $\eta$), the blue entries change to $\Idz$ due to the relation of $\z = \tz$. Our $\E[.]$ notation includes the expectation over this randomness along with the randomness of $\x, \tx$. Denote by $\Omega_0$ and $\Omega_1$ the covariances of the signal part, i.e. $(\x_S, \tx_S)$ in these two cases:
\begin{equation} \label{eq-def-Omega}
    \Omega_0 := \begin{pmatrix}
        (1 + \nicefrac{1}{\snr}) \, \Idz & \mathbf{0} \\
        \mathbf{0} & (1 + \nicefrac{1}{\tsnr}) \, \Idz
    \end{pmatrix} \;, \; \; \; \Omega_1 := \begin{pmatrix}
        (1 + \nicefrac{1}{\snr}) \, \Idz & \Idz \\
        \Idz & (1 + \nicefrac{1}{\tsnr}) \, \Idz
    \end{pmatrix} \;.
\end{equation}
Overall, under the transformation, the expectation simplifies to
\begin{align} \label{eq-SO-theta-transformed}
    \SO(\theta) &= \Ufull \; \E \left[ 
        \begin{pmatrix}
            \x_S \tx_S^\top & \x_S \tx_N^\top \\
            \x_N \tx_S^\top & \x_N \tx_N^\top
        \end{pmatrix} 
    \, \Bigg| \, \x_S^\top \tx_S > \ddtheta \right] \; \tUfull^\top \;.
\end{align}
Due to $\x_N, \tx_N$ being independent of all other entries via Eq.~\eqref{eq-decomp-joint-Gaussian}, and since the conditioning event in Eq.~\eqref{eq-SO-theta-transformed} only involves $\x_S, \tx_S$, we conclude that the non-dominant entries in the expectation will be \emph{zero}. Hence we are left with the simplified rank-$r$ form for the $\dimx \times \dimtx$ matrix:
\begin{align*} 
    \SO(\theta) = \U \, \E \left[ \x_S \tx_S^\top \, | \, \x_S^\top \tx_S > \ddtheta \right] \, \tU^\top &= \U \, \bigg( \eta \cdot \E_{(\x_S, \tx_S) \sim \N(0, \Omega_1)} \left[ \x_S \tx_S^\top \, | \, \x_S^\top \tx_S > \ddtheta \right] \\
    + \; \; (1-\eta) \cdot &\E_{(\x_S, \tx_S) \sim \N(0, \Omega_0)} \left[ \x_S \tx_S^\top \, | \, \x_S^\top \tx_S > \ddtheta \right] \bigg) \, \tU^\top \;. 
\end{align*}
We will now use Lemma~\ref{lem-ftheta-special} to simplify both the terms above. Note that $\Omega_1, \Omega_0$ satisfy the lemma's requirement of the block diagonal covariance.
\begin{equation} \label{eq-SO-theta-final}
    \SO(\theta) = \U \, \big( \eta \, f_1(\theta) \, \Idz + (1-\eta) \, f_0 (\theta) \, \Idz \big) \, \tU^\top = \big( \eta \, f_1(\theta) + (1-\eta) \, f_0 (\theta) \big) \, \U \tU^\top\;,
\end{equation}
where the following conditions hold on $f_1, f_0$ (converting back from $\ddtheta$ to $\theta$):
\begin{align*}
    \max\{1, \nicefrac{(\theta \rho_\subt)}{\eta \, r} \} + e\,\sqrt{\nicefrac{\left( (1+\snr^{-1})(1 + \tsnr^{-1}) + 1 \right)}{r}} \geq f_1(\theta) &\geq \max\{1, \nicefrac{(\theta \rho_\subt)}{\eta \, r} \} \;, \\
    \max\{0, \nicefrac{(\theta \rho_\subt)}{\eta \, r} \} + e\,\sqrt{\nicefrac{\left( (1+\snr^{-1})(1 + \tsnr^{-1}) \right)}{r}}  \geq f_0(\theta) &\geq \max\{0, \nicefrac{(\theta \rho_\subt)}{\eta \, r} \} \;. %
\end{align*}
Using the above equations, and the special case of $\theta = 0$ in Lemma~\ref{lem-ftheta-special}, we conclude: 
\begin{align}
    f_1(0) &\geq 1, \; \; \; f_0(0) \geq  \frac{2}{\pi r} \cdot \sqrt{(1+\snr^{-1})(1 + \tsnr^{-1})} \;, \label{eq-f-0} \\
    f_1 \left( \frac{r \eta}{2 \rho_\subt} \right) &\geq 1, \; \; \; f_0\left( \frac{r \eta}{2 \rho_\subt} \right) \geq \frac{1}{2} \;. \label{eq-f-r2}
\end{align}
We will use these inequalities in step 5. In particular, since $\| \SO(\theta) \| = \eta \, f_1(\theta) + (1-\eta) \, f_0 (\theta) $, 
\begin{equation} \label{eq-LB-normSO}
    \text{for } \theta \in [0, \nicefrac{r \eta}{2 \rho_\subt}] \;, \; \; \; \| \SO(\theta) \| \geq \frac{2}{\pi r} \cdot \sqrt{(1+\snr^{-1})(1 + \tsnr^{-1})}  \;.
\end{equation}

\textbf{Step 4.} Concentration of $\SNT(\theta)$ to $\SO(\theta)$: We break this into subparts as below.

\textbf{Step 4.1.} Concentration of $\SNT(\theta)$ to $\ST(\theta)$: Using the below substeps, we show that with probability $1 - \exp(-\Omega(\max\{\dimx, \dimtx\}))$:
\begin{equation} \label{eq-ourproof-step4DOT1}
    \left\|  \SNT(\theta) - \ST(\theta) \right\| \leq \sqrt{ \frac{ \max\{ \dimx, \dimtx\} \, {\rm poly}( \snr^{-1}, \tsnr^{-1}) }{N \, \PT(\theta) } } + \tO \left( \frac{1}{ N \, P_\subt(\theta) } \right) \;.
\end{equation}

\textbf{Step 4.1.1.} Replacing the random denominator:
Recall $n_{\sel,\subT}(\theta) = \sum_{i=1}^N \ind\{S(\x_i,\tx_i; \mbM_\subT) > \theta\}$ is the (random) number of selected samples. Since the teacher's score matrix $\mbM_\subT$ and threshold $\theta$ are fixed independently of these $N$ samples, the indicators are i.i.d.\ Bernoulli random variables with mean $P_\subT(\theta)$. By a standard Chernoff bound for sums of independent Bernoulli variables, $n_{\sel,\subT}(\theta)$ concentrates sharply around its expectation:
for any $0<\delta<1$, 
\[
\Pr\Big\{|n_{\sel,\subT}(\theta) - N P_\subT(\theta)| \ge \delta \, N P_\subT(\theta)\Big\} \;\le\; 2 \exp\!\Big(- \Omega(\delta^2\, N P_\subT(\theta))\Big)\,. 
\]
In particular, choosing $\delta = \left( \sqrt{\nicefrac{\max\{\dimx, \dimtx\}}{N \, P_\subt(\theta)}} \right)$, we conclude that
\begin{equation} \label{eq-nsel-NPT}
    \text{w.p. } 1 - \exp(-\Omega(\max\{\dimx,\dimtx\})) \;, \;\;\; n_{\sel,\subT}(\theta) = \left(1 \pm \sqrt{\frac{\max\{\dimx, \dimtx\}}{N \, P_\subt(\theta)}} \right)\,N P_\subT(\theta) \;.
\end{equation}
On this high-probability event, the following holds (recall the definition of $\mathbf{Q}_{N,\subT}(\theta)$ from Eq.~\eqref{eq-T-cross-cov}).
\begin{align*}
\Big\|\frac{1}{\,n_{\sel,\subT}(\theta)-1\,}\,\mathbf{Q}_{N,\subT}(\theta) - \frac{1}{\,N P_\subT(\theta)-1\,}\,\mathbf{Q}_{N,\subT}(\theta)\Big\| &= \frac{ \abs{n_{\sel,\subT}(\theta) - N \, \PT(\theta)}}{ (n_{\sel,\subT}(\theta)-1) \, (N \, \PT(\theta)-1)} \| \mathbf{Q}_{N,\subT}(\theta) \| \\
&\lesssim^{(\dagger)} \frac{ \abs{n_{\sel,\subT}(\theta) - N \, \PT(\theta)}}{ (N \, \PT(\theta)-1)^2} \| \mathbf{Q}_{N,\subT}(\theta) \| \\
&\lesssim^{(\dagger \dagger)} \frac{ \sqrt{\nicefrac{\max\{\dimx, \dimtx\}}{N \, P_\subt(\theta)}} \cdot N \, \PT(\theta) }{ (N \, \PT(\theta)-1)^2} \| \mathbf{Q}_{N,\subT}(\theta) \| \\
&\lesssim \sqrt{\frac{\max\{\dimx, \dimtx\}}{N P_\subT(\theta)}} \cdot \left\| \frac{\mathbf{Q}_{N,\subT}(\theta) }{N \, \PT(\theta)} \right\| \\
&\lesssim^{(\dagger \dagger \dagger)} \sqrt{\frac{\max\{\dimx, \dimtx\}}{N P_\subT(\theta)}} \,. 
\end{align*}
In (\textdagger), we used Eq.~\eqref{eq-nsel-NPT}, which implies that $0.5 \, N \, \PT(\theta) \leq n_{\sel, \subt}(\theta) \leq 1.5 \, N \, \PT(\theta)$ when $N \PT(\theta) \gtrsim \max\{ \dimx, \dimtx\}$ (which is indeed true, since in Step 5 we set $N = n/2$ \& $n \gtrsim \max\{\dimx, \dimtx\}$ is assumed in Theorem~\ref{thm-TS-special}, and in Step 4.3 we ensure that $\PT(\theta) \gtrsim 1$). In (\textdagger \textdagger), we again used Eq.~\eqref{eq-nsel-NPT} directly. In (\textdagger \textdagger \textdagger), we used that $\|\mathbf{Q}_{N,\subT}(\theta)\|$ grows on the order of $N P_\subT(\theta)$ (since it is the sum of $n_{\sel,\subT}(\theta)$ i.i.d.\ outer products each with bounded expectation). Thus, overall, replacing the random $n_{\sel,\subT}(\theta)$ by $N P_\subT(\theta)$ in the normalization incurs an error of order $\sqrt{ \nicefrac{\max\{\dimx, \dimtx\}}{N P_\subT(\theta)}}$ with high probability. In the subsequent analysis, we may therefore work with the fixed denominator $N P_\subT(\theta)$ for convenience.

\textbf{Step 4.1.2.} The centered vs un-centered version: We have that
\begin{align*}
    &\frac{1}{N \, \PT(\theta) - 1} \sum_{i \in I_{\sel, \subt}(\theta)} \left( \x_i - \overline{\x}(\theta) \right) \left( \tx_i - \overline{\tx}(\theta) \right)^\top = \\
    & \;\;\;\;\;\;\;\;\;\; \frac{1}{N \, \PT(\theta)} \sum_{i \in I_{\sel, \subt}(\theta)} \x_i \tx_i^\top  - \frac{1}{ N \, \PT(\theta) \left( N \, \PT(\theta)- 1 \right)} \sum_{i \in I_{\sel, \subt}(\theta)} \sum_{\stackrel{j \in I_{\sel, \subt}(\theta)}{j \neq i}} \x_i \tx_j^\top \;.
\end{align*}
The second term on the right hand side concentrates to $\E \left[ x \ty^\top \; | \; x^\top \mbM_\subt \, \tx > \theta, y^\top \mbM_\subt \, \ty > \theta \right]$,
where $(\x, \tx)$ and $(y, \ty)$ are i.i.d.\ from the joint mixture distribution. This expectation is \emph{zero}, which we formally characterize in Lemmas~\ref{lem-productExpectation-indep} and~\ref{lem:cond_exp_zero_general}.
The rate of concentration is $\tO \left(\frac{1}{N \, \PT(\theta)} \right)$, due to averaging over $\left( N \, \PT(\theta) \right)^2$ terms, and is hence a higher order term.

\textbf{Step 4.1.3.} Analysis of the fixed-denominator un-centered version: The selected samples satisfy the property of being \textit{i.i.d from the conditional law} of the selection rule. In particular, for each $i\in I_{\sel,\subT}(\theta)$ the matrix $\X_i := \x_i \tx_i^\top$ has expectation $\E[\X_i] = \S_\subT(\theta)$ and these matrices $\{\X_i: i\in I_{\sel,\subT}(\theta)\}$ are independent. Using a Matrix-Bernstein concentration result (Eqs.~\eqref{eq-conc-indep} and~\eqref{eq-conc-dep}), it follows that with probability $1 - \exp(-\Omega(\max \{\dimx, \dimtx\}))$:
\begin{equation*}
    \left\| \frac{1}{N \, \PT(\theta)} \sum_{i \in I_{\sel, \subt}(\theta)} x_i \tx_i^\top - \ST(\theta) \right \| \lesssim \sqrt{\frac{\max\{\dimx, \dimtx\} \, {\rm poly}(\snr^{-1}, \tsnr^{-1})}{N \, \PT(\theta)}} \;.
\end{equation*}

\textbf{Step 4.2.} Error between teacher and oracle: We show that $\left\| \ST(\theta) - \SO(\theta) \right\|$ scales proportionally to $\| \mbM_\subt - \mbM_\subo \|$, and the latter is precisely bounded by Eq.~\eqref{eq-T-guarantee}.
To show this, we first simplify the conditional expectation in $\SO(\theta), \ST(\theta)$, define $\mbE_\subo(\theta)$, $\mbE_\subt(\theta)$ as:
\begin{align}
    \mbE_\subo(\theta) &:= \E \left[ \, \x \tx^\top \, \ind(\x^\top \mbM_\subo \tx > \theta) \, \right] \;\iff \; \SO(\theta) = \nicefrac{\mbE_\subo(\theta)}{\PO(\theta)} \;; \label{eq-EO-theta} \\ 
    \mbE_\subt(\theta) &:= \E \left[ \, \x \tx^\top \, \ind(\x^\top \mbM_\subt \tx > \theta) \, \right] \;\iff \; \ST(\theta) = \nicefrac{\mbE_\subt(\theta)}{\PT(\theta)} \; . \label{eq-ET-theta}
\end{align}
where $\ind(.)$ denotes the indicator. Let $\Delta \mbE(\theta) := \mbE_\subt(\theta) - \mbE_\subo(\theta)$ and $\Delta P(\theta) := \PT(\theta) - \PO(\theta)$. Also define $\Delta \ind(\theta; \x, \tx) := \ind(\x^\top \mbM_\subt \tx > \theta) - \ind(\x^\top \mbM_\subo \tx > \theta)$. Then, we write
\begin{align*}
    & \hspace{100pt} \ST(\theta) - \SO(\theta) = \frac{\mbE_\subt(\theta)}{\PT(\theta)} - \frac{\mbE_\subo(\theta)}{\PO(\theta)} \\
    &= \frac{ \left( \mbE_\subo(\theta) + \Delta \mbE(\theta) \right) \PO(\theta) - \mbE_\subo(\theta) \left( \PO(\theta) + \Delta P(\theta) \right)  }{\PT(\theta) \, \PO(\theta)} = \frac{\Delta \mbE(\theta)}{\PT(\theta)} - \frac{\Delta P(\theta)}{\PT(\theta)} \cdot \underbrace{\frac{\mbE_\subo(\theta)}{\PO(\theta)}}_{\SO(\theta)} \;. \\
    & \hspace{25pt} \implies \norm{\ST(\theta) - \SO(\theta)} \leq \frac{1}{\PT(\theta)} \left( \norm{\Delta \mbE(\theta)} + \abs{\Delta P(\theta)} \cdot \norm{\SO(\theta)} \right) \;.
\end{align*}
We will now bound $\norm{\Delta \mbE(\theta)}$ and $\abs{\Delta P(\theta)}$ in terms of $\norm{\mbM_\subt - \mbM_\subo}$. Recall that $(\x, \tx)$ follow the mixture distribution (Remark~\ref{remark-joint-Gaussian}). Decomposing the expectations and probabilities into respective mixtures, we get
\begin{align*}
    \Delta \mbE(\theta) &= \eta \, \E_{(\x, \tx) \sim \N(0, \Sigma_1)} \left[\, \x \tx^\top \Delta \ind(\theta; \x, \tx)\, \right]  + (1 - \eta) \,  \E_{(\x, \tx) \sim \N(0, \Sigma_0)} \left[\, \x \tx^\top \Delta \ind(\theta; \x, \tx)\, \right]  \;, \\
    \Delta P(\theta) &= \eta \, \E_{(\x, \tx) \sim \N(0, \Sigma_1)} \left[\, \Delta \ind(\theta; \x, \tx)\, \right]  + (1 - \eta) \,  \E_{(\x, \tx) \sim \N(0, \Sigma_0)} \left[\, \Delta \ind(\theta; \x, \tx)\, \right]  \;.
\end{align*}
From the above, since both $\eta, 1-\eta$ are smaller than $1$, we get that
\begin{align*}
    \norm{\Delta \mbE(\theta)} \leq \norm{\Delta \mbE_{1}(\theta)} + \norm{\Delta \mbE_{0}(\theta)} \;,
    \abs{\Delta P(\theta)} \leq \abs{\Delta P_{1}(\theta)} + \abs{\Delta P_{0}(\theta)} \;,
\end{align*}
where the subscripts $1, 0$ denote the fully clean, corrupted cases respectively (i.e. $\eta = 1, \eta = 0$ respectively). Lemma~\ref{lem:direct_difference_bounds_positive_theta} captures the general form of this, and we invoke this lemma on both the clean data (with covariance $\Sigma_1$) and the noisy data (with covariance $\Sigma_0$). Note that ${\rm rank}(\mbM_\subo) \geq 2$ is satisfied since ${\rm rank}(\mbM_\subo) = r$ and we assumed $r \geq 2$ in the statement of Theorem~\ref{thm-TS-special}. Further, the condition of $\| \mbM_\subt - \mbM_\subo \| < \sigma_r (\mbM_\subo)$ is satisfied due to $n \gtrsim ({1}/{\eta^2}) \, \max\{\dimx, \dimtx\} \left(1 + \snr^{-1} \right) \left(1 + \tsnr^{-1} \right)$, since $\mbM_\subo$ has $r$ non-zero singular values all equal to $\nicefrac{\eta}{\rho_\subt}$ and Eq.~\eqref{eq-T-guarantee} with the condition on $n$ implies that $\| \mbM_\subt - \mbM_\subo \| \lesssim \nicefrac{\eta}{\rho_\subt}$ (note that implicitly the condition also ensures that the contribution of the $\tO(1/n)$ term is bounded).
The appropriate constants inside the $\gtrsim$ notation will ensure the required condition. Overall, we get
\begin{equation} \label{eq-ourproof-step4DOT2}
    \norm{\ST(\theta) - \SO(\theta)} \lesssim \frac{1 + \norm{\SO(\theta)}}{\PT(\theta)} \norm{\mbM_\subt - \mbM_\subo} \;.
\end{equation}

\textbf{Step 4.3.} Analysis of $\PT(\theta)$ and $\PO(\theta)$: In this part, we show that both $\PT(\theta)$ and $\PO(\theta)$ can be lower bounded by an absolute constant (say, $\nicefrac{1}{10}$) for the relevant regime of filtering threshold $\theta$. 

\textit{Argument for} $\PT(\theta)$: Using Step 4.2, we have $\PT(\theta) \geq \PO(\theta) - \abs{\Delta P(\theta)}$, and the deviation is small since $\abs{ \Delta P(\theta) } \lesssim \| \mbM_\subt - \mbM_\subo \|$. Using Eq.~\eqref{eq-T-guarantee}, we note that a large $\rho_\subt$ can make $\| \mbM_\subt - \mbM_\subo \|$ arbitrarily small. Indeed in Step 5, we will set $\rho_\subt$ to a large value. Since the deviation is small, we can use, for instance, $\PT(\theta) \geq (\nicefrac{1}{2}) \PO(\theta)$. Hence, arguing $\PO(\theta)$ is large suffices, which we do below.

\textit{Argument for} $\PO(\theta)$: Next, we show that $\PO(\theta)$ is `large enough' for the choices of $\theta \in \{0, \nicefrac{r \eta}{2 \rho_\subt} \}$, and we will use these fixed points in Step 5. Recall from Section~\ref{sec-analysis-on-scores}, due to the mixture distribution, the below holds. Here we have accounted for the scaling factor in the definition of $\mbM_\subo$. %
\begin{equation} \label{eq-PO-theta-final}
    \PO(\theta) = \eta \, P_1 \left( \frac{\theta \rho_\subt}{\eta} \right) + (1-\eta) \, P_0 \left( \frac{\theta \rho_\subt}{\eta} \right)  \;.
\end{equation}
In Step 5, we will consider the fixed points $\theta \in \{ 0, \nicefrac{r \eta}{2 \rho_\subt} \}$, and so we need lower bounds on $P_0(0), P_0(\nicefrac{r}{2})$ and $P_1(0), P_1(\nicefrac{r}{2})$. We state them below:
\begin{align}
    & P_0(0) \geq 0.5 \;, \;\;\; P_1(0) \geq c \;, \label{eq-P-0} \\
    & P_0(\nicefrac{r}{2}) \geq 0 \;, \;\;\; P_1(\nicefrac{r}{2}) \geq c \;, \label{eq-P-r2} 
\end{align}
where $c > 0$ is an absolute constant.
For $P_0(.)$, we have lower bounds $0.5$ (due to symmetry) and $0$ (trivially). For $P_1(.)$, we simply invoke the observation that both $\{0, \nicefrac{r}{2}\}$ are below the mean of the distribution (refer to Figure~\ref{fig:distribution:theoretical}), and so an appropriate constant $c$ exists satisfying the above.
Overall, we conclude that $\PO(0) = \Omega(1)$ and $\PO(\nicefrac{r \eta}{2 \rho_\subt}) = \Omega(\eta)$.

\textbf{Step 5.} Final guarantee via application of Lemma~\ref{lem-wedin-thm}: 
Using Eqs.~\eqref{eq-ourproof-step4DOT1} and~\eqref{eq-ourproof-step4DOT2} in Eq.~\eqref{eq-ourproof-step3} with Eq.~\eqref{eq-ourproof-step2}, and combining the guarantee from Eq.~\eqref{eq-T-guarantee}, 
with probability $1 - \exp(-\Omega(\max\{\dimx, \dimtx\}))$:
\begin{align*}
    &\left\| \G^\top \tG - \frac{1}{\rho} \, \SO(\theta) \right\| \lesssim \frac{1}{\rho} 
 \Bigg( \sqrt{ \frac{ \max\{ \dimx, \dimtx\} \, {\rm poly}( \snr^{-1}, \tsnr^{-1}) }{N \, \PT(\theta) } } + \tO \left( \frac{1}{ N \, P_\subt(\theta) } \right) \Bigg)\\
    & \hspace{20pt} + \frac{1}{\rho \, \rho_\subt} \Bigg( \frac{1 + \norm{\SO(\theta)}}{\, \PT(\theta)} \Bigg) \Bigg( \sqrt{\frac{ \max\{ \dimx, \dimtx \} \, (1+\snr^{-1})\, (1+\tsnr^{-1})}{n_\subt}} + \tO \left( \frac{1}{n_\subt} \right) \Bigg) \;.
\end{align*}
We set $n_\subt = n/2$, and so $N = n - n_\subt = n/2$ (as in Algorithm~\ref{alg:TS-filtering}). For $\rho_\subt$, we note that it can be chosen arbitrarily large to reduce the second term in the error above. This is because any $\rho_\subt > 0$ will allow the teacher parameters $\G_\subt, \tG_\subt$ to recover the subspace spanned by $\U, \tU$ respectively, but a large choice of $\rho_\subt$ will make the operator norm small. This does not cause the filtering to change, since the threshold $\theta$ changes multiplicatively with $\rho_\subt$ (effectively scaling the picture in Figure~\ref{fig:distribution}).

The condition of $n \gtrsim \frac{1}{\eta^2} \max\{\dimx, \dimtx\}(1+\snr^{-1})(1+\tsnr^{-1})$ is inherited from Corollary~\ref{cor-nofilt} (to be able to use eq~\eqref{eq-T-guarantee}). The additional condition on $n$, from the application of Lemma~\ref{lem-wedin-thm} to the above equation (similar to Eq.~\eqref{eq-proof-step5-wedin}), results in a larger factor than $\nicefrac{1}{\eta^2}$, hence is already satisfied. 

Now we apply Lemma~\ref{lem-wedin-thm} on the above equation, and follow the argument similar to step 5 in Section~\ref{sec-app-proof-nofilt}. An additional factor of $\sqrt{r}$ appears due to the norm being the chordal distance (frobenius norm). 
Using Eq.~\eqref{eq-SO-theta-final} and Eq.~\eqref{eq-PO-theta-final}, 
we get that with probability $1 - \exp(-\Omega(\max\{\dimx, \dimtx\}))$, the error $\ERR \left( \G, \tG \right)$ is upper bounded (up to constants) by:
\begin{align*}
     \frac{1}{\underbrace{\left[ \eta f_1(\theta) + (1-\eta) f_0(\theta) 
    \right]}_{\text{from }\| \SO(\theta)\|} \, \underbrace{\sqrt{\eta \, P_1 \left( \nicefrac{\theta \rho_\subt}{\eta} \right) + (1-\eta) \, P_0 \left( \nicefrac{\theta \rho_\subt}{\eta} \right)}}_{\text{from } \sqrt{\PT(\theta)}} } \sqrt{ \frac{ \dimz \, \max\{ \dimx, \dimtx\} \,\, {\rm poly}( \snr^{-1}, \tsnr^{-1}) }{n } } \;. 
\end{align*}
Finally, we plug in the values $\theta \in \{0, \nicefrac{\eta r}{2 \rho_\subt} \}$ to recover the terms $T_{0}, T_{0.5}$ as stated in Theorem~\ref{thm-TS-special}. 
Using Eq.~\eqref{eq-f-0} and~\eqref{eq-P-0}, the scaling term of the error above becomes
\begin{align*}
    \frac{1}{\left[ \eta +  (1-\eta) \, (\nicefrac{2}{\pi r})  \right] \cdot \sqrt{ \eta \, c + (1-\eta) \, (\nicefrac{1}{2}) }} \lesssim r \; \; \; \text{ for any } \eta \in (0, 1] \;.
\end{align*}
Using Eq.~\eqref{eq-f-r2} and~\eqref{eq-P-r2}, the scaling term of the error above becomes
\begin{align*}
    \frac{1}{\left[ \eta + (1-\eta) \, (\nicefrac{1}{2}) \right] \cdot \sqrt{ \eta \, c }} \lesssim \frac{1}{\sqrt{\eta}} \;.
\end{align*}
The above describes both regimes of behavior, and why an extra factor of $\dimz$ appears in the term $T_0$, compared to the term $T_{0.5}$, in Theorem~\ref{thm-TS-special}.
This concludes the argument.

\section{Discussion on robustness of the choice of filtering threshold} \label{sec-app-disc-theta-robustness}

We note that the error achieved by teacher-based filtering can be fairly robust to the choice of $\theta$, the filtering threshold. Our synthetic experiment in Figure~\ref{fig:experiment:synthetic} was conducted with a fixed, untuned threshold of $\theta=0$. Further, we conduct an experiment measuring the sensitivity of the final error with respect to the choice of $\theta$. In the setting of Figure~\ref{fig:experiment:synthetic} with $n=10000$ samples, we fix $\eta=0.3$ (in-line with the empirically observed clean fraction in CLIP data \cite{gadre2023datacomp}) and (implicitly) vary the filtering threshold $\theta$ of the teacher-based filtering (by explicitly varying the fraction of data retained in the filtering step). The below table shows that the error of teacher-based filtering is relatively flat for values of $\theta$ in the vicinity of the optimal threshold $\theta^*$. An analogous experiment on real data \cite[Figure 2]{gadre2023datacomp} makes a similar observation.

\begin{table}[ht]
\centering
\small
\begin{tabular}{@{}cc@{}}
\toprule
\textbf{Fraction of data retained} & \textbf{Mean error ($\pm 1\,\sigma$)} \;(\(\times 10^{-4}\))\\
\midrule
1\%   & \(28.76 \pm 4.00\) \\
10\%  & \(11.79 \pm 1.20\) \\
20\%  & \(9.85  \pm 1.39\) \\
\textbf{30\%} & \(9.08 \pm 1.15\) \\
40\%  & \(8.97  \pm 1.09\) \\
50\%  & \(8.71  \pm 1.05\) \\
100\% & \(16.51 \pm 2.03\) \\
\bottomrule
\end{tabular}
\vspace{5pt}
\caption{Mean error vs. fraction of data retained.}
\label{tab:mean-error}
\end{table}

\section{Discussion on the potential of robust statistics for the analysis of filtering}

An initial instinct based on Figure~\ref{fig:distribution} is to use ideas from robust statistics. As discussed in Remark~\ref{remark-score-separation}, we can expect $\D_0$ and $\D_1$ to be well-separated, which means there will exist some $\theta \in \R$ (a reasonable guess is $\theta \approx \nicefrac{\dimz}{2}$) such that the selected data is mostly clean. After filtering, the picture resembles the robust statistics setting: an $\alpha$ corruption on the clean distribution for some small $\alpha$. This is a reasonable approach overall, but has two shortcomings. 
\textit{First}, this approach will \textit{not} achieve zero error as $n \rightarrow \infty$. We are shooting for $f(\eta) \cdot \nicefrac{1}{\sqrt{n}}$ which is better than $\nicefrac{1}{\sqrt{n}} + g(\eta)$, since the latter is non-zero even when $n \rightarrow \infty$. This approach will end up getting the latter.
This is because the canonical rate in robust statistics is $\sqrt{\nicefrac{d}{n_\sel}}+\alpha$. Under filtering, $n_\sel$ and $\alpha$ are functions of $\theta$. One can determine the optimal $\theta$ to balance the tradeoff, but to get a final rate of the form $f(\eta) \cdot \nicefrac{1}{\sqrt{n}}$, this will require some \textit{conditions} on $n, \eta$ (possibly $\eta$ bigger than a threshold, and $n$ smaller than a threshold).
Since our case has stochastic corruption which is weaker than adversarial corruption, we can expect to prove something for all $n$ and all $\eta$. 
\textit{Second}, this approach performs a ``reductive" operation of treating data as only clean v/s corrupted, and assuming the corrupted part provides no signal. This is a closely linked argument to the first one above. 
The crucial observation is that the right tail of the corrupted data (i.e. $\D_0$ in Figure~\ref{fig:distribution}) actually provides `close to clean' samples. This is because these just happened to be samples such that the $\z, \tz$ -- albeit independently sampled in a high-dimensional space -- happened to have a high inner product (small angle).
Our adopted approach, based on the conditional properties of the Gaussian distribution, formalizes this intuition that the right tail of $\D_0$ also provides signal.

\end{document}